\def\given{\,|\,}
\def\biggiven{\,\big{|}\,}
\def\tr{\mathop{\text{tr}}\kern.2ex}
\newcommand{\JJ}{\mathbb{J}}
\long\def\comment#1{}
\def\tr{\mathop{\text{Tr}}}
\def\cS{{\mathcal{S}}}
\newcommand{\bel}{\begin{eqnarray}\label}
\newcommand{\eel}{\end{eqnarray}}
\newcommand{\bes}{\begin{eqnarray*}}
\newcommand{\ees}{\end{eqnarray*}}
\def \doo {{\rm do}}
\def\##1\#{\begin{align}#1\end{align}}
\def\$#1\${\begin{align*}#1\end{align*}}
\begin{document}

\title{Provably Efficient Causal Reinforcement Learning with Confounded Observational Data}
\date{\today}
\author
{
\normalsize Lingxiao Wang\thanks{Northwestern University; \texttt{lwang@u.northwestern.edu}}
\qquad
\normalsize Zhuoran Yang\thanks{Princeton University; \texttt{zy6@princeton.edu}}
\qquad
\normalsize Zhaoran Wang\thanks{Northwestern University; \texttt{zhaoranwang@gmail.com}}
}

\maketitle


\begin{abstract}
Empowered by expressive function approximators such as neural networks, deep reinforcement learning (DRL) achieves tremendous empirical successes. However, learning expressive function approximators requires collecting a large dataset (interventional data) by interacting with the environment. Such a lack of sample efficiency prohibits the application of DRL to critical scenarios, e.g., autonomous driving and personalized medicine, since trial and error in the online setting is often unsafe and even unethical. In this paper, we study how to incorporate the dataset (observational data) collected offline, which is often abundantly available in practice, to improve the sample efficiency in the online setting.

To incorporate the observational data, we face two challenges. (a) The behavior policy that generates the observational data may depend on possibly unobserved random variables (confounders), which at the same time, affect the received rewards and transition dynamics. Such a confounding issue makes the observational data uninformative and even misleading for decision making in the online setting. (b) Exploration in the online setting requires quantifying the uncertainty that remains given both the observational and interventional data. In particular, it remains unclear how to quantify the amount of information carried over by the confounded observational data, which plays a key role in constructing the bonus and characterizing the regret. 

To address the two challenges, we propose the deconfounded optimistic value iteration (DOVI) algorithm, which incorporates the confounded observational data in a provably efficient manner. More specifically, DOVI explicitly adjusts for the confounding bias in the observational data, where the confounders are partially observed or unobserved. In both cases, such adjustments allow us to construct the bonus based on a notion of information gain, which takes into account the amount of information acquired from the offline setting. In particular, we prove that the regret of DOVI is smaller than the optimal regret achievable in the pure online setting by a multiplicative factor, which decreases towards zero when the confounded observational data are more informative upon the adjustments. Our algorithm and analysis serve as a step towards causal reinforcement learning.
\end{abstract}

\section{Introduction}\label{sec::intro}
In reinforcement learning (RL) \citep{sutton2018reinforcement}, an agent maximizes its expected total reward by sequentially interacting with the environment. Empowered by the breakthrough in neural networks, which serve as expressive function approximators, deep reinforcement learning (DRL) achieves significant empirical successes in various scenarios, e.g., game playing \citep{silver2016mastering, silver2017mastering}, robotics \citep{kober2013reinforcement}, and natural language processing \citep{li2016deep}. Learning an expressive function approximator necessitates collecting a large dataset. Specifically, in the online setting, it requires the agent to interact with the environment for a large number of steps. For example, to learn a human-level policy for playing Atari games, the agent has to interact with a simulator for more than $10^8$ steps \citep{hessel2018rainbow}. However, in most scenarios, we do not have access to a simulator that allows for trial and error without any cost. Meanwhile, in critical scenarios, e.g., autonomous driving and personalized medicine, trial and error in the real world is unsafe and even unethical. As a result, it remains challenging to apply DRL to more scenarios.

To bypass such a barrier, we study how to incorporate the dataset collected offline, namely the observational data, to improve the sample efficiency of RL in the online setting \citep{levine2020offline}. In contrast to the interventional data, which are collected online in possibly expensive ways, the observational data are often abundantly available in various scenarios. For example, in autonomous driving, we have access to a large number of trajectories generated by the drivers. As another example, in personalized medicine, we have access to a large number of electronic health records generated by the doctors. However, to incorporate the observational data in a provably efficient way, we have to address two challenges. 

\begin{itemize}[leftmargin=*]
    \item The observational data are possibly confounded. Specifically, there often exist  unobserved random variables, namely confounders, that causally affect the agent and the environment at the same time. In particular, the policy used to generate the observational data, namely the behavior policy, possibly depends on the confounders. Meanwhile, the confounders possibly affect the received rewards and the transition dynamics. 
    
    In the example of autonomous driving \citep{de2019causal, li2020make}, the driver may react, e.g., by pulling the break (policy), based on a traffic situation, e.g., icy roads (confounder), that is not captured by the sensor. Meanwhile, icy roads may lead to car accidents (reward/transition). Also, in the example of personalized medicine \citep{murphy2003optimal, chakraborty2014dynamic}, the doctor may treat the patient, e.g., by prescribing a medicine (policy), based on a clinical finding, e.g., appetite loss (confounder), that is not reflected in the record. Meanwhile, appetite loss may lead to weight loss (reward/transition). 
    
    Such a confounding issue makes the observational data uninformative and even misleading for identifying and estimating the causal effect of a policy, which is crucial for decision making in the online setting. In the example of autonomous driving, it is unclear from the observational data whether pulling the break causes car accidents. Also, in the example of personalized medicine, it is unclear from the observational data whether taking the medicine causes weight loss. 
    
    \item Even without the confounding issue, it remains unclear how the observational data may facilitate exploration in the online setting, which is the key to the sample efficiency of RL. At the core of exploration is uncertainty quantification. Specifically, quantifying the uncertainty that remains given the dataset collected up to the current step, including the observational data and the interventional data, allows us to construct a bonus. When incorporated into the reward, such a bonus encourages the agent to explore the less visited state-action pairs that have more uncertainty. In particular, constructing such a bonus requires quantifying the amount of information carried over by the observational data from the offline setting, which also plays a key role in characterizing the regret, especially how much the observational data may facilitate reducing the regret.
    
    Uncertainty quantification becomes even more challenging when the observational data are confounded. Specifically, as the behavior policy depends on the confounders, which are unobserved, there is a mismatch between the data generating processes in the offline setting and the online setting. As a result, it remains challenging to quantify how much information carried over from the offline setting is useful for the online setting, as the observational data are uninformative and even misleading due to the confounding issue.   
    
\end{itemize}

\vskip4pt
\noindent{\bf Contribution.} To study causal reinforcement learning, we propose a class of Markov decision processes (MDPs), namely confounded MDPs, which captures the data generating processes in both the offline setting and the online setting as well as their mismatch due to the confounding issue. In particular, we study two tractable cases of confounded MDPs in the episodic setting with linear function approximation \citep{yang2019sample, yang2019reinforcement, jin2019provably, cai2019provably}.  
\begin{itemize}[leftmargin=*]
\item In the first case, the confounders are partially observed in the observational data. Assuming that an observed subset of the confounders satisfies the backdoor criterion \citep{pearl2009causality}, we propose the deconfounded optimistic value iteration (DOVI) algorithm. Specifically, DOVI explicitly corrects for the confounding bias in the observational data using the backdoor adjustment.   

\item In the second case, the confounders are unobserved in the observational data. Assuming that there exists an observed set of intermediate states that satisfies the frontdoor criterion \citep{pearl2009causality}, we propose an extension of DOVI, namely DOVI$^+$, which explicitly corrects for the confounding bias in the observational data using the composition of two backdoor adjustments.

\end{itemize}
In both cases, the adjustments allow DOVI and DOVI$^+$ to incorporate the observational data into the interventional data while bypassing the confounding issue. It further enables estimating the causal effect of a policy on the received rewards and the transition dynamics with an enlarged effective sample size. Moreover, such adjustments allow us to construct the bonus based on a notion of information gain, which takes into account the amount of information carried over from the offline setting.

In particular, we prove that DOVI and DOVI$^+$ attain the $\Delta_H\cdot \sqrt{d^3 H^3 T}$-regret up to logarithmic factors, where $d$ is the dimension of features, $H$ is the length of each episode, and $T = HK$ is the number of steps taken in the online setting, where $K$ is the number of episodes. Here the multiplicative factor $\Delta_H > 0$ depends on $d$, $H$, and a notion of information gain that quantifies the amount of information obtained from the interventional data additionally when given the properly adjusted observational data. When the observational data are unavailable or uninformative upon the adjustments, $\Delta_H$ is a logarithmic factor. Correspondingly, DOVI and DOVI$^+$ attain the optimal $\sqrt{T}$-regret achievable in the pure online setting \citep{yang2019sample, yang2019reinforcement, jin2019provably, cai2019provably}. When the observational data are sufficiently informative upon the adjustments, $\Delta_H$ decreases towards zero as the effective sample size of the observational data increases, which quantifies how much the observational data may facilitate exploration in the online setting.

\vskip4pt
\noindent{\bf Related Work.}
Our work is based on the study of RL in the pure online setting, which focuses on attaining the optimal regret. See, e.g., \cite{auer2007logarithmic, jaksch2010near, osband2014generalization, azar2017minimax, yang2019sample, yang2019reinforcement, jin2019provably} and the references therein. In contrast, we study a class of confounded MDPs, which captures a combination of the online setting and the offline setting.

Our work is related to the study of causal bandit \citep{lattimore2016causal}. The goal of causal bandit is to obtain the optimal intervention in the online setting where the data generating process is described by a causal diagram. \cite{lu2019regret} propose the causal upper confidence bound (C-UCB) and causal Thompson Sampling (C-TS) algorithms, which attain the $\sqrt{T}$-regret. \cite{sen2017identifying} propose an algorithm based on importance sampling in policy evaluation. In the pure offline setting, \cite{kallus2018confounding,kallus2018policy} propose algorithms for contextual bandit with confounders in the observational data. Their algorithms are based on the analysis of sensitivity \citep{manski1990nonparametric, tan2006distributional, balke2013counterfactuals, zhang2017transfer}, which characterizes the worst-case difference between the causal effect and the conditional density obtained from the confounded observational data. In a combination of the online setting and the offline setting, \cite{forney2017counterfactual} study multi-armed bandit with both the interventional data and the confounded observational data. In contrast to this line of work, we study causal RL in a combination of the online setting and the offline setting. Causal RL is more challenging than causal bandit, which corresponds to $H = 1$, as it involves the transition dynamics, which makes exploration more difficult.

Our work is related to the study of causal RL considered in various settings.  \cite{zhang2019near} propose a model-based RL algorithm that solves dynamic treatment regimes (DTR), which involve a combination of the online setting and the offline setting. Their algorithm hinges on the analysis  of sensitivity \citep{manski1990nonparametric, tan2006distributional, balke2013counterfactuals, zhang2017transfer}, which constructs a set of feasible models of the transition dynamics based on the confounded observational data. Correspondingly, their algorithm achieves exploration by choosing an optimistic model of the transition dynamics from such a feasible set. In contrast, we propose a model-free RL algorithm, which achieves exploration through the bonus based on a notion of information gain. It is worth mentioning that the assumption of \cite{zhang2019near} is weaker than ours as theirs does not allow for identifying the causal effect. As a result of partial identification, the regret of their algorithm is the same as the regret in the pure online setting as $T\to+\infty$. In contrast, the regret of our algorithm is smaller than the regret in the pure online setting by a multiplicative factor for all $T$. \cite{lu2018deconfounding} propose a model-based RL algorithm in a combination of the online setting and the offline setting. Their algorithm uses a variational autoencoder (VAE) for estimating a structural causal model (SCM) based on the confounded observational data. In particular, their algorithm utilizes the actor-critic algorithm to obtain the optimal policy in such an SCM. However, the regret of their algorithm remains unclear. \cite{buesing2018woulda} propose a model-based RL algorithm in the pure online setting that learns the optimal policy in a partially observable Markov decision process (POMDP). The regret of their algorithm also remains unclear.


\section{Confounded Reinforcement Learning}
\vskip4pt
\noindent{\bf Structural Causal Model.}
We denote a structural causal model (SCM) \citep{pearl2009causality} by a tuple $(A, B, F, P)$. Here $A$ is the set of exogenous (unobserved) variables, $B$ is the set of endogenous (observed) variables,  $F$ is the set of structural functions capturing the causal relations, which determines an endogenous variable $v \in B$ based on the other exogenous and endogenous variables, and $P$ is the distribution of all the exogenous variables. We say that a pair of variables $Y$ and $Z$ are confounded by a variable $W$ if they are both caused by $W$. 

An intervention on a set of endogenous variables $X \subseteq B$ assigns a value $x$ to $X$ regardless of the other exogenous and endogenous variables as well as the structural functions. We denote by $\doo(X = x)$ the intervention on $X$ and write $\doo(x)$ if it is clear from the context.  Similarly, a stochastic intervention \citep{munoz2012population, diaz2019causal} on a set of endogenous variables $X \subseteq B$  assigns a distribution $p$ to $X$ regardless of the other exogenous and endogenous variables as well as the structural functions. We denote by $\doo(X\sim p)$ the stochastic intervention on $X$.

\vskip4pt
\noindent{\bf Confounded Markov Decision Process.}
\label{sec::MDP_env}
To characterize a Markov decision process (MDP) in the offline setting with observational data, which are possibly confounded, we introduce an SCM, where the endogenous variables are the states $\{s_h\}_{h\in[H]}$, actions $\{a_h\}_{h\in[H]}$, and rewards $\{r_h\}_{h\in[H]}$. 
Let $\{w_h\}_{h\in[H]}$ be the confounders. In \S\ref{sec::backdoor_LSVI}, we assume that the confounders are partially observed, while in \S\ref{sec::frontdoor_LSVI}, we assume that they are unobserved, both in the offline setting. The set of structural functions $F$ consists of the transition of states $s_{h+1}\sim\cP_h(\cdot\given s_h, a_h, w_h)$, the transition of confounders $w_{h}\sim \tilde\cP_h(\cdot\given s_h)$, the behavior policy $a_h\sim\nu_h(\cdot\given s_h, w_h)$, which depends on the confounder $w_h$, and the reward function $r_h(s_h, a_h, w_h)$. See Figure \ref{fig::backdoor_1} for the causal diagram that describes such an SCM. 
\begin{figure*}[ht!]
    \centering
    \begin{subfigure}[t]{0.5\textwidth}
        \centering
        \includegraphics[height=1.2in]{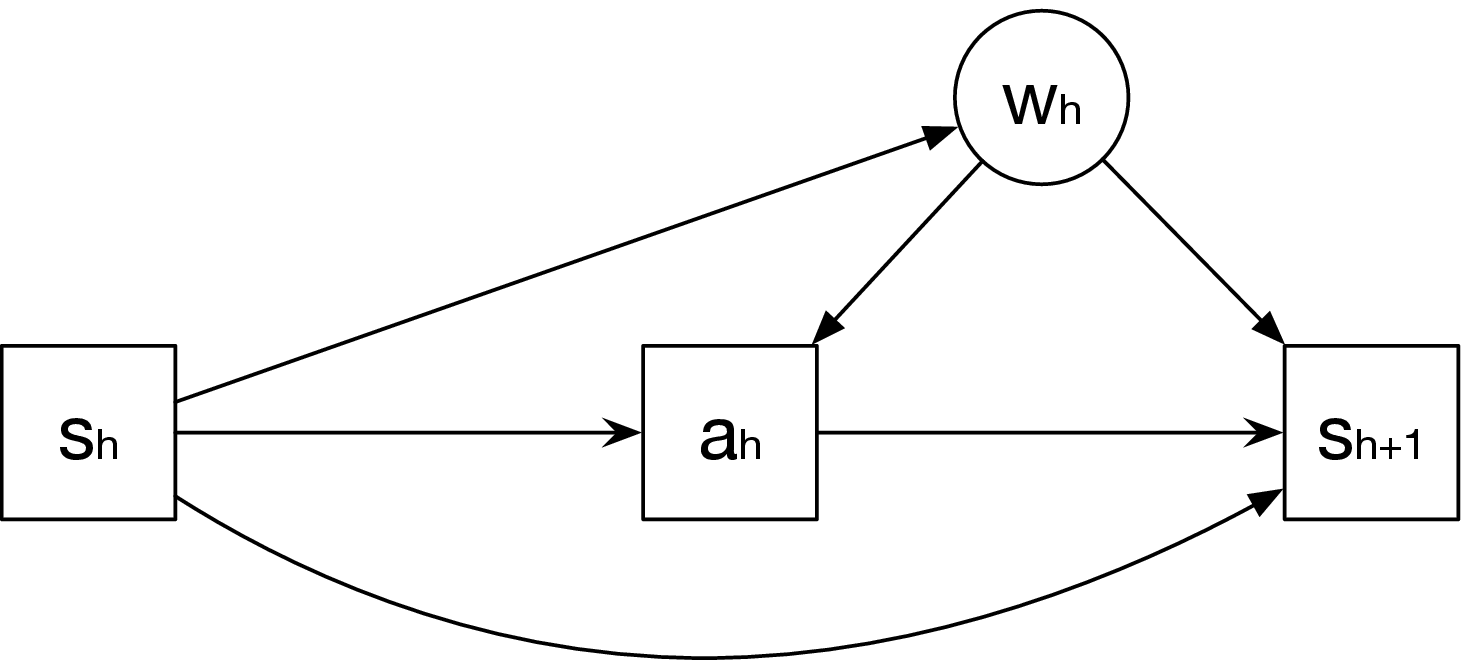}
        \caption{Offline Setting}
    \end{subfigure}%
    ~ 
    \begin{subfigure}[t]{0.5\textwidth}
        \centering
        \includegraphics[height=1.2in]{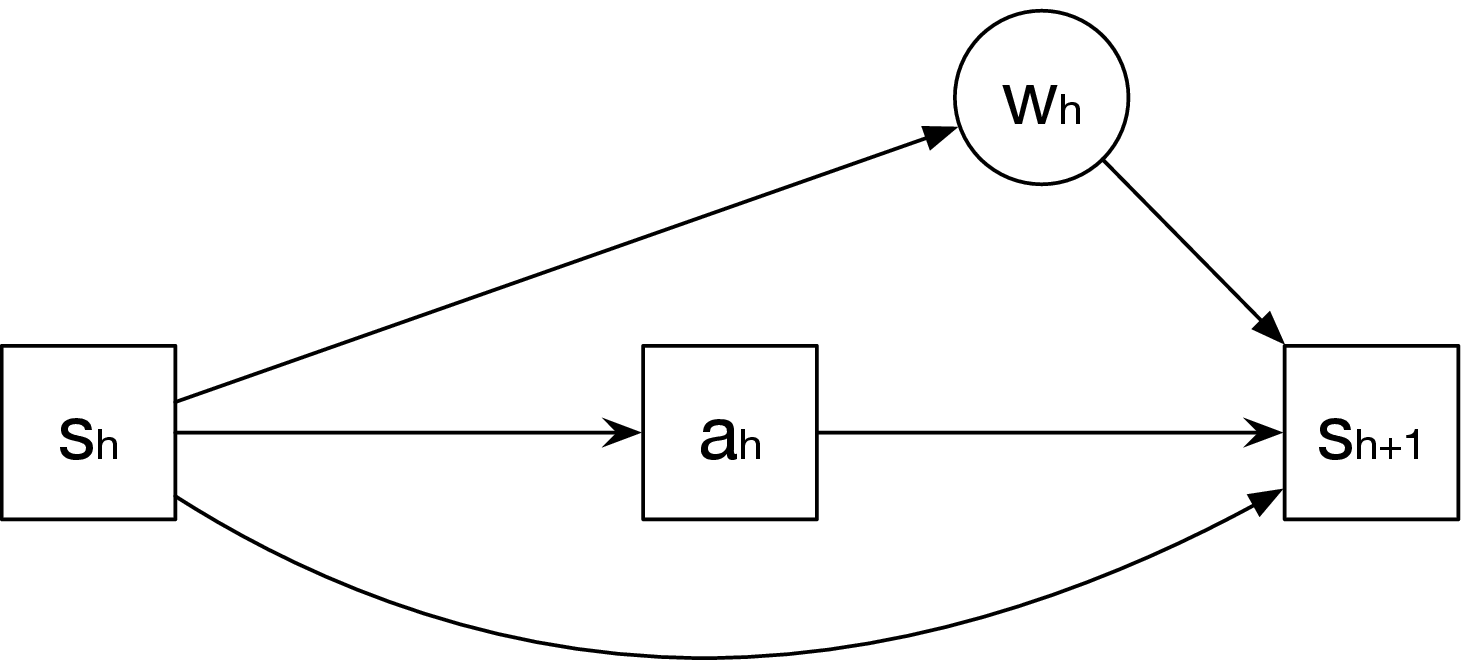}
        \caption{Online Setting}
    \end{subfigure}
    \caption{Causal diagrams of the $h$-th step of the confounded MDP (a) in the offline setting and (b) in the online setting, respectively.}
    \label{fig::backdoor_1}
\end{figure*}

Here $a_h$ and $s_{h+1}$ are confounded by $w_h$ in addition to $s_h$. We denote such a confounded MDP by the tuple $(\cS, \cA, \cW, H, \overline\cP, r)$, where $H$ is the length of an episode, $\cS$, $\cA$, and $\cW$ are the spaces of states, actions, and confounders, respectively, $r = \{r_h\}_{h\in[H]}$ is the set of reward functions, and $\overline\cP = \{\cP_h, \tilde \cP_h\}_{h \in H}$ is the set of transition kernels. In the sequel, we assume without loss of generality that $r_h$ takes value in $[0, 1]$ for all $h\in[H]$.

In the online setting that allows for intervention, we assume that the confounders $\{w_h\}_{h\in[H]}$ are unobserved. A policy $\pi = \{\pi_h\}_{h\in[H]}$ induces the stochastic intervention $\doo(a_1 \sim \pi_1(\cdot\given s_1), \ldots, a_H \sim \pi_H(\cdot\given s_H))$, which does not depend on the confounders. In particular, an agent interacts with the environment as follows. At the beginning of the $k$-th episode, the environment arbitrarily selects an initial state $s^k_1$ and the agent selects a policy $\pi^k = \{\pi^k_h\}_{h\in[H]}$. At the $h$-th step of the $k$-th episode, the agent observes the state $s^k_h$ and takes the action $a^k_h\sim \pi^k_{h}(\cdot\given s^k_h)$. The environment randomly selects the confounder $w^k_{h}\sim \tilde \cP_h(\cdot\given s^k_h)$, which is unobserved, and the agent receives the reward $r^k_h = r_h(s^k_h, a^k_h, w^k_h)$. The environment then transits into the next state $s^k_{h+1} \sim \cP_h(\cdot\given s^k_h, a^k_h, w^k_h)$.

For a policy $\pi = \{\pi_h\}_{h\in H}$, which does not depend on the confounders $\{w_h\}_{h\in[H]}$, we define the value function $V^{\pi} = \{V^{\pi}_h\}_{h\in[H]}$ as follows,
\#\label{eq::def_V}
V^{\pi}_{h}(s)&=\EE\biggl[\sum^H_{j = h} r_j(s_j, a_j, w_j) ~\bigg |~ s_h = s,  s_{j+1}\sim \cP_h(\cdot\given s_j, a_j, w_j), w_j\sim\tilde\cP_h(\cdot\given s_j), \doo\bigl(a_j\sim \pi_j(\cdot\given s_j)\bigr) \biggr] \notag\\
&= \EE_{\pi}\biggl[\sum^H_{j = h} r_j(s_j, a_j, w_j) ~\bigg |~ s_h = s \biggr], \quad \forall h \in[H],
\#
where we denote by $\EE_\pi$ the expectation with respect to the confounders $\{w_j\}_{j=h}^H$ and the trajectory $\{(s_j, a_j)\}_{j=h}^H$, starting from the state $s_j = s$ and following the policy $\pi$.~Correspondingly, we define the action-value function $Q^{\pi} = \{Q^{\pi}_h\}_{h\in[H]}$ as follows,
\#\label{eq::def_Q}
Q^{\pi}_{h}(s, a) = \EE_{\pi}\biggl[\sum^H_{j = h} r_j(s_j, a_j, w_j) ~\bigg |~ s_h = s, \doo(a_h = a) \biggr], \quad \forall h \in[H].
\#
We assess the performance of an algorithm using the regret against the globally optimal policy $\pi^* = \{\pi_h^*\}_{h\in[H]}$ in hindsight after $K$ episodes, which is defined as follows,
\#\label{eq::def_regret}
\textrm{Regret}(T) = \max_{\pi}\sum^K_{k = 1}\bigl(V^{\pi}_1(s^k_1) - V^{\pi^k}_1(s^k_1)\bigr)= \sum^K_{k = 1}\bigl(V^{\pi^*}_1(s^k_1) - V^{\pi^k}_1(s^k_1)\bigr).
\#
Here $T = HK$ is the total number of steps.


Our goal is to design an algorithm that minimizes the regret defined in \eqref{eq::def_regret}, where $\pi^*$ does not depend on the confounders $\{w_h\}_{h\in[H]}$. In the online setting that allows for intervention, it is well understood how to minimize such a regret \citep{jaksch2010near, azar2017minimax, jin2018q, jin2019provably}. However, it remains unclear how to efficiently utilize the observational data obtained in the offline setting, which are possibly confounded. In real-world applications, e.g., autonomous driving and personalized medicine, such observational data are often abundant, whereas intervention in the online setting is often restricted.
\vskip4pt
\noindent{\bf Why is Incorporating Confounded Observational Data Challenging?}
Straightforwardly incorporating the confounded observational data into an online algorithm possibly leads to an undesirable regret due to the mismatch between the online and offline data generating processes. 
In particular, due to the existence of the confounders $\{w_h\}_{h\in[H]}$, which are partially observed (\S\ref{sec::backdoor_LSVI}) or unobserved (\S\ref{sec::frontdoor_LSVI}), the conditional probability $\PP(s_{h+1}\given s_h, a_h)$ in the offline setting is different from the causal effect $\PP(s_{h+1}\given s_h, \doo(a_h))$ in the online setting \citep{peters2017elements}. More specifically, it holds that
\$
&\PP(s_{h+1}\given s_h, a_h) = \frac{\EE_{w_h\sim\tilde\cP_h(\cdot\given s_h)}\bigl[\cP_h(s_{h+1}\given s_h, a_h, w_h)\cdot\nu_h(a_h\given s_h, w_h) \bigr] }{\EE_{w_h\sim\tilde\cP_h(\cdot\given s_h)}\bigl[\nu_h(a_h\given s_h, w_h) \bigr] },\notag\\
&\PP\bigl(s_{h+1}\biggiven s_h, \doo(a_h)\bigr) = \EE_{w_{h}\sim \tilde\cP_h(\cdot\given s_h)}\bigl[\cP_h(\cdot\given s_h, a_h, w_h)\bigr].
\$
In other words, without proper covariate adjustments \citep{pearl2009causality}, the confounded observational data may be not informative for estimating the transition dynamics and the associated action-value function in the online setting. To this end, we propose an algorithm that incorporates the confounded observational data in a provably efficient manner. Moreover, our analysis quantifies the amount of information carried over by the confounded observational data from the offline setting and to what extent it helps reducing the regret in the online setting.

In what follows, we discuss the connection between confounded MDP and other extensions of MDP and SCM.
\begin{itemize}
\item{\bf Dynamic Treatment Regimes (DTR).}
In a DTR \citep{zhang2019near}, all the states $\{s_h\}_{h\in[H]}$ are confounded by a common confounder $w$, whereas in a confounded MDP, each state $s_h$ depends on an individual confounder $w_{h-1}$, which further depends on the previous state $s_{h-1}$. If $w_{h-1}$ does not depend on $s_{h-1}$, the confounded MDP reduces to a DTR by summarizing the confounders into $w = (w_1, \ldots, w_H)$.
\item{\bf Contextual MDP (CMDP).}
A confounded MDP is similar to a CMDP \citep{hallak2015contextual} if we cast the confounders $\{w_h\}_{h\in[H]}$ as the context therein. In a CMDP, which focuses on the online setting, the context is fixed throughout an episode, whereas in a confounded MDP, the confounders $\{w_h\}_{h\in[H]}$ vary across the $H$ steps. Moreover, in a CMDP, the goal is to minimize the regret against the globally optimal policy that depends on the context, which is a stronger benchmark than $\pi^*$ in \eqref{eq::def_regret}, since $\pi^*$ does not depend on the confounders $\{w_h\}_{h\in[H]}$.

\item{\bf Partially Observable MDP (POMDP).}
A confounded MDP is a simplified POMDP \citep{tennenholtz2019off} if we cast the confounders $\{w_h\}_{h\in[H]}$ as the hidden states therein (assuming that the confounders are unobserved in the offline setting as in \S\ref{sec::frontdoor_LSVI}). 
A POMDP is more challenging to solve, since marginalizing over the hidden states does not yield an MDP, which is the case in a confounded MDP.
\end{itemize}

\section{Algorithm and Theory for Partially Observed Confounder}
\label{sec::backdoor_LSVI}
In this section, we propose the Deconfounded Optimistic Value Iteration (DOVI) algorithm. DOVI handles the case where the confounders are unobserved in the online setting but are partially observed in the offline setting. We then characterize the regret of DOVI.
\subsection{Algorithm}
\vskip4pt
\noindent{\bf Backdoor Adjustment.}
In the online setting that allows for intervention, the causal effect of $a_h$ on $s_{h+1}$ given $s_h$, that is, $\PP(s_{h+1}\given s_h, \doo(a_h))$, plays a key role in the estimation of the action-value function. Meanwhile, the confounded observational data may not allow us to identify the causal effect $\PP(s_{h+1}\given s_h, \doo(a_h))$ if the confounder $w_h$ is unobserved. However, if the confounder $w_h$ is partially observed in the offline setting, the observed subset $u_h$ of $w_h$ allows us to identify the causal effect $\PP(s_{h+1}\given s_h, \doo(a_h))$, as long as $u_h$ satisfies the following backdoor criterion.
\begin{assumption}[Backdoor Criterion \citep{pearl2009causality, peters2017elements}]
\label{asu::backdoor_crit}
In the SCM defined in \S\ref{sec::MDP_env} and its induced directed acyclic graph (DAG), for all $h\in[H]$, there exists an observed subset $u_h$ of $w_h$ that satisfies the backdoor criterion, that is,
\begin{itemize}
\item the elements of $u_h$ are not the descendants of $a_h$, and
\item conditioning on $s_h$, the elements of $u_h$ $d$-separate every path between $a_h$ and $s_{h+1}$ that has an incoming arrow into $a_h$.
\end{itemize}
\end{assumption}
See Figure \ref{fig::backdoor_2} for an example that satisfies the backdoor criterion. In particular, we identify the causal effect $\PP(s_{h+1}\given s_h, \doo(a_h))$ as follows.
\begin{proposition}[Backdoor Adjustment \citep{pearl2009causality}]
\label{lem::backdoor}
Under Assumption \ref{asu::backdoor_crit}, it holds for all $h\in[H]$ that
\$
\PP\bigl(s_{h+1} ~\big|~ s_h, \doo(a_h)\bigr) &= \EE_{u_h\sim \PP(\cdot\given s_h)} \bigl[\PP(s_{h+1} ~|~ s_h, a_h, u_h)\bigr],\\
\EE\bigl[r_h(s_h, a_h, w_h)\biggiven s_h, \doo(a_h)\bigr] &= \EE_{u_h\sim \PP(\cdot\given s_h)}\Bigl[\EE\bigl[r_h(s_h, a_h, w_h)\biggiven s_h, a_h, u_h\bigr]\Bigr].
\$
Here $(s_{h+1}, s_h, a_h, u_h)$ follows the SCM defined in \S\ref{sec::MDP_env}, which generates the confounded observational data.
\end{proposition}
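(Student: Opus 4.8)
The plan is to obtain both identities as instances of the three rules of $\doo$-calculus \citep{pearl2009causality} applied to the structural causal model of Section~\ref{sec::MDP_env}, or equivalently from the truncated factorization of the interventional distribution. Write $G$ for the DAG induced by the SCM, $G_{\overline{a_h}}$ for $G$ with every arrow pointing \emph{into} $a_h$ deleted, and $G_{\underline{a_h}}$ for $G$ with every arrow pointing \emph{out of} $a_h$ deleted. The two rules I need are Rule~2 (exchanging an action for an observation), which gives $\PP(s_{h+1}\mid s_h,u_h,\doo(a_h)) = \PP(s_{h+1}\mid s_h,u_h,a_h)$ provided $a_h$ and $s_{h+1}$ are $d$-separated by $\{s_h,u_h\}$ in $G_{\underline{a_h}}$; and Rule~3 (deleting an action), which gives $\PP(u_h\mid s_h,\doo(a_h)) = \PP(u_h\mid s_h)$ provided $a_h$ and $u_h$ are $d$-separated by $\{s_h\}$ in $G_{\overline{a_h}}$.

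First I would insert $u_h$ and factor
\begin{align*}
\PP\bigl(s_{h+1}\mid s_h,\doo(a_h)\bigr) = \int \PP\bigl(s_{h+1}\mid s_h,u_h,\doo(a_h)\bigr)\,\PP\bigl(u_h\mid s_h,\doo(a_h)\bigr)\,\mathrm{d}u_h,
\end{align*}
which is legitimate because, once the intervention $\doo(a_h)$ is fixed, conditioning and marginalization obey the usual rules of probability. It then remains to rewrite the two factors in terms of the observational law, and this is exactly where Assumption~\ref{asu::backdoor_crit} enters. The first bullet, that the elements of $u_h$ are not descendants of $a_h$, is the hypothesis of Rule~3 and yields $\PP(u_h\mid s_h,\doo(a_h)) = \PP(u_h\mid s_h)$. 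The second bullet, that conditioning on $s_h$ the set $u_h$ blocks every path from $a_h$ to $s_{h+1}$ carrying an arrow into $a_h$, is precisely the assertion that $a_h$ and $s_{h+1}$ are $d$-separated by $\{s_h,u_h\}$ in $G_{\underline{a_h}}$ — since deleting the out-arrows of $a_h$ leaves exactly the back-door paths — which is the hypothesis of Rule~2 and yields $\PP(s_{h+1}\mid s_h,u_h,\doo(a_h)) = \PP(s_{h+1}\mid s_h,u_h,a_h)$. Substituting both into the display and reading the integral as an expectation over $u_h\sim\PP(\cdot\mid s_h)$ gives the first identity.

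The one point requiring genuine care, rather than bookkeeping, is the faithful translation of Assumption~\ref{asu::backdoor_crit} — phrased in terms of paths in $G$ — into the $d$-separation statements in the \emph{manipulated} graphs $G_{\underline{a_h}}$ and $G_{\overline{a_h}}$ demanded by Rules~2 and~3. For Rule~2 this is immediate once one observes that removing the out-arrows of $a_h$ leaves precisely the paths with an arrow into $a_h$. For Rule~3 one must check that in $G_{\overline{a_h}}$, where $a_h$ becomes a source, every path from $a_h$ to $u_h$ is blocked given $\{s_h\}$; here I would use that the colliders on such paths are states and rewards at steps $\ge h+1$ together with $r_h$, none of which has $s_h$ among its descendants, so each such path is blocked. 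I would also note that the partial observability of $w_h$ is harmless: the adjustment set $u_h$ is observed by hypothesis, the unobserved part of $w_h$ is simply integrated out along the way, and since the horizon $H$ is finite all the relevant conditional laws are well defined.

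Finally, the reward identity follows from the same two rules with $r_h = r_h(s_h,a_h,w_h)$ in place of $s_{h+1}$: in the SCM of Section~\ref{sec::MDP_env} the reward node has the same parent set $(s_h,a_h,w_h)$ as $s_{h+1}$, so both $d$-separation facts above hold verbatim after this replacement, giving $\PP(r_h\mid s_h,\doo(a_h)) = \EE_{u_h\sim\PP(\cdot\mid s_h)}[\PP(r_h\mid s_h,a_h,u_h)]$; multiplying by $r_h$ and integrating — equivalently, applying the tower rule — produces the claimed identity for $\EE[r_h(s_h,a_h,w_h)\mid s_h,\doo(a_h)]$.
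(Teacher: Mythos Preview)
Your proposal is correct and is precisely the standard $\doo$-calculus derivation of the backdoor adjustment formula; the paper does not supply its own argument for this proposition but simply refers the reader to \cite{pearl2009causality}, which is exactly the source whose Rules~2 and~3 you invoke. In other words, you have written out what the paper delegates, and the two are in complete agreement.
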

\begin{proof}
See \cite{pearl2009causality} for a detailed proof.
\end{proof}
\begin{figure*}[ht!]
    \centering
    \includegraphics[height=1.2in]{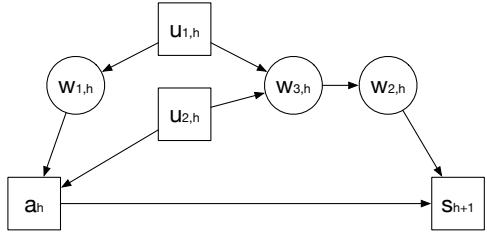}
    \caption{An illustration of the backdoor criterion. The causal diagram corresponds to the $h$-th step of the confounded MDP conditioning on $s_h$. Here $w_h = \{w_{1, h},w_{2, h},w_{3, h},u_{1, h},u_{2, h}\}$ is the confounder and the subset $u_h = \{u_{1, h}, u_{2,h}\}$ satisfies the backdoor criterion.}
    \label{fig::backdoor_2}
\end{figure*}
With a slight abuse of notation, we write $\PP(s_{h+1} ~|~ s_h, a_h, u_h)$ as $\cP_h(s_{h+1}\given s_h, a_h, u_h)$ and $\PP(u_h\given s_h)$ as $\tilde\cP_h(u_h\given s_h)$, since they are induced by the SCM defined in \S\ref{sec::MDP_env}. In the sequel, we define $\cU$ the space of observed state $u_h$ and write $r_h = r_h(s_h, a_h, w_h)$ for notational simplicity.

\vskip4pt
\noindent{\bf Backdoor-Adjusted Bellman Equation.}
We now formulate the Bellman equation for the confounded MDP. It holds for all $(s_h, a_h)\in\cS\times\cA$ that
\$
Q^{\pi}_{h}(s_h, a_h) &= \EE_{\pi}\biggl[\sum^H_{j = h} r_j(s_j, a_j, u_j) ~\bigg |~ s_h, \doo(a_h) \biggr]= \EE\bigl[r_h\biggiven s_h, \doo(a_h)\bigr] + \EE_{s_{h+1}}\bigl[V^{\pi}_{h+1}(s_{h+1})\bigr],
\$
where $\EE_{s_{h+1}}$ denotes the expectation with respect to $s_{h+1} \sim \PP(\cdot\biggiven s_h, \doo(a_h))$. Here $\EE[r_h\biggiven s_h, \doo(a_h)]$ and $\PP(\cdot\biggiven s_h, \doo(a_h))$ are characterized in Proposition \ref{lem::backdoor}. In the sequel, we define the following transition operator and counterfactual reward function,
\#
\label{eq::def_operator_backdoor}(\PP_h V)(s_h, a_h) &= \EE_{s_{h+1}\sim\PP(\cdot\given s_h, \doo(a_h))}\bigl[V(s_{h+1})\bigr], \quad \forall V:\cS\mapsto\RR, ~(s_h, a_h) \in \cS\times\cA,\\
\label{eq::def_R_backdoor}R_h(s_h, a_h) &= \EE\bigl[r_h\biggiven s_h, \doo(a_h)\bigr], \quad \forall (s_h, a_h) \in \cS\times\cA.
\#
We have the following Bellman equation,
\#\label{eq::bellman_backdoor}
Q^{\pi}_{h}(s_h, a_h) = R_h(s_h, a_h) + (\PP_hV^{\pi}_{h+1})(s_h, a_h),\quad \forall h\in[H], ~(s_h, a_h) \in \cS\times\cA.
\#
Correspondingly, the Bellman optimality equation takes the following form,
\#\label{eq::bellman_backdoor_opt}
Q^{*}_{h}(s_h, a_h) = R_h(s_h, a_h) + (\PP_hV^{*}_{h+1})(s_h, a_h), \quad V^*_h(s_h) = \max_{a_h\in\cA} Q_h^*(s_h, a_h),
\#
which holds for all $h\in[H]$ and $(s_h, a_h) \in \cS\times\cA$. Such a Bellman optimality equation allows us to adapt the least-squares value iteration (LSVI) algorithm \citep{bradtke1996linear, jaksch2010near,osband2014generalization, azar2017minimax, jin2019provably}.

\vskip4pt
\noindent{\bf Linear Function Approximation.}
We focus on the following setting with linear transition kernels and reward functions \citep{yang2019sample, yang2019reinforcement, jin2019provably, cai2019provably}, which corresponds to a linear SCM \citep{peters2017elements}.
\begin{assumption}[Linear Confounded MDP]
\label{asu::linMDP_backdoor}
We assume that
\$
\cP_h(s_{h+1} \given s_h, a_h , u_h) = \langle \phi_h(s_h, a_h, u_h), \mu_h(s_{h+1}) \rangle,\quad \forall  h\in[H],~(s_{h+1}, s_h, a_h)\in\cS\times\cS\times\cA,
\$
where $\phi_h(\cdot, \cdot, \cdot)$ and $\mu_h(\cdot) = (\mu_{1, h}(\cdot), \ldots, \mu_{d, h}(\cdot))^\top$ are $\RR^d$-valued functions. We assume that $\sum^d_{i = 1}\|\mu_{i, h}\|^2_1 \leq d$ and $\|\phi_h(s_h, a_h, u_h)\|_2 \leq 1$ for all $h\in[H]$ and $(s_h, a_h, u_h)\in\cS\times\cA\times\cU$. Meanwhile, we assume that
\#
\EE[r_h\given s_h, a_h, u_h] = \phi_h(s_h, a_h, u_h)^\top \theta_h, \quad \forall  h\in[H],~(s_h, a_h, u_h)\in\cS\times\cA\times\cU,
\#
where $\theta_h \in \RR^d$ and $\|\theta_h\|_2\leq \sqrt{d}$ for all $h\in[H]$.
\end{assumption}

Such a linear setting generalizes the tabular setting where $\cS$, $\cA$, and $\cU$ are finite. 
\begin{proposition}
\label{prop::backdoor_feature}
We define the backdoor-adjusted feature as follows,
\#\label{eq::def_back_adjust_feature}
\psi_h(s_h, a_h) = \EE_{u_h\sim\tilde \cP_h(\cdot\given s_h)}\bigl[\phi_h(s_h, a_h, u_h)\bigr], \quad \forall  h\in[H],~(s_h, a_h)\in\cS\times\cA.
\#
Under Assumption \ref{asu::backdoor_crit}, it holds that 
\$
\PP(s_{h+1}\given s_h, \doo(a_h)) = \langle\psi_h(s_h, a_h), \mu_h(s_{h+1})\rangle, \quad \forall  h\in[H],~(s_{h+1}, s_h, a_h)\in\cS\times\cS\times\cA.
\$
Moreover, the action-value functions $Q^\pi_h$ and $Q^*_h$ are linear in the backdoor-adjusted feature $\psi_h$ for all $\pi$. 
\end{proposition}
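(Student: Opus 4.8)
The plan is to obtain the first identity as an immediate consequence of the backdoor adjustment in Proposition~\ref{lem::backdoor} combined with the linear parametrization of Assumption~\ref{asu::linMDP_backdoor}, and then to propagate linearity through the backdoor-adjusted Bellman equation \eqref{eq::bellman_backdoor} to deduce that $Q^\pi_h$ and $Q^*_h$ are linear in $\psi_h$.

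For the transition identity, I would start from $\PP(s_{h+1}\given s_h,\doo(a_h)) = \EE_{u_h\sim\PP(\cdot\given s_h)}[\PP(s_{h+1}\given s_h,a_h,u_h)]$, which is the statement of Proposition~\ref{lem::backdoor}, and recall from the discussion after that proposition that $\PP(u_h\given s_h) = \tilde\cP_h(u_h\given s_h)$ and $\PP(s_{h+1}\given s_h,a_h,u_h) = \cP_h(s_{h+1}\given s_h,a_h,u_h)$, so that Assumption~\ref{asu::linMDP_backdoor} gives $\cP_h(s_{h+1}\given s_h,a_h,u_h) = \langle\phi_h(s_h,a_h,u_h),\mu_h(s_{h+1})\rangle$. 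Since $\mu_h(s_{h+1})$ does not depend on $u_h$, exchanging the expectation over $u_h\sim\tilde\cP_h(\cdot\given s_h)$ with the inner product against $\mu_h(s_{h+1})$ yields $\PP(s_{h+1}\given s_h,\doo(a_h)) = \langle\EE_{u_h\sim\tilde\cP_h(\cdot\given s_h)}[\phi_h(s_h,a_h,u_h)],\mu_h(s_{h+1})\rangle = \langle\psi_h(s_h,a_h),\mu_h(s_{h+1})\rangle$, with $\psi_h$ as defined in \eqref{eq::def_back_adjust_feature}. The identical argument applied to the reward part of Proposition~\ref{lem::backdoor} and the linear reward model $\EE[r_h\given s_h,a_h,u_h]=\phi_h(s_h,a_h,u_h)^\top\theta_h$ gives $R_h(s_h,a_h) = \psi_h(s_h,a_h)^\top\theta_h$ for the counterfactual reward defined in \eqref{eq::def_R_backdoor}.

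For the linearity of the action-value functions, I would plug these two facts into the Bellman equation \eqref{eq::bellman_backdoor}. Rewriting the transition operator in \eqref{eq::def_operator_backdoor} as $(\PP_hV)(s_h,a_h) = \int_{\cS}V(s_{h+1})\,\PP(s_{h+1}\given s_h,\doo(a_h))\,\mathrm{d}s_{h+1} = \langle\psi_h(s_h,a_h),\,\int_{\cS}V(s_{h+1})\mu_h(s_{h+1})\,\mathrm{d}s_{h+1}\rangle$, the Bellman equation becomes $Q^\pi_h(s_h,a_h) = \psi_h(s_h,a_h)^\top w^\pi_h$ with $w^\pi_h := \theta_h + \int_{\cS}V^\pi_{h+1}(s_{h+1})\mu_h(s_{h+1})\,\mathrm{d}s_{h+1}\in\RR^d$, and the same computation with the Bellman optimality equation \eqref{eq::bellman_backdoor_opt} gives $Q^*_h(s_h,a_h) = \psi_h(s_h,a_h)^\top w^*_h$ with $w^*_h := \theta_h + \int_{\cS}V^*_{h+1}(s_{h+1})\mu_h(s_{h+1})\,\mathrm{d}s_{h+1}$. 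A downward induction on $h$ from $H$ to $1$ (with $V^\pi_{H+1}=0$) shows that each $w^\pi_h$ and $w^*_h$ is well defined, and in fact $\|w^\pi_h\|_2\le\sqrt d+H\sqrt d$ using $\|V^\pi_{h+1}\|_\infty\le H$ and $\sum_i\|\mu_{i,h}\|_1\le d$ from Assumption~\ref{asu::linMDP_backdoor}.

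Since every step is an elementary manipulation of identities already established, I do not anticipate a genuine obstacle; the only point requiring a little care is the interchange of the expectation over $u_h$ (and of the integration over $\cS$) with the inner product against $\mu_h$, which is exactly where the norm bounds $\|\phi_h\|_2\le1$ and $\sum_i\|\mu_{i,h}\|_1\le d$ are used. Once that is in place, both conclusions follow at once, and the explicit bounds on $\|w^\pi_h\|_2$ and $\|w^*_h\|_2$ are precisely what the later construction of the bonus and the regret analysis will rely on.
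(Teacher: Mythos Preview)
Your proposal is correct and follows essentially the same approach as the paper: apply Proposition~\ref{lem::backdoor} together with the linear parametrization of Assumption~\ref{asu::linMDP_backdoor} to obtain $\PP(s_{h+1}\given s_h,\doo(a_h))=\langle\psi_h,\mu_h\rangle$ and $R_h=\psi_h^\top\theta_h$, then invoke the Bellman equations \eqref{eq::bellman_backdoor}--\eqref{eq::bellman_backdoor_opt} for linearity of $Q^\pi_h$ and $Q^*_h$. You actually spell out more than the paper does, giving the explicit weight vector $w^\pi_h=\theta_h+\int V^\pi_{h+1}\mu_h$ and a norm bound; just note that the assumption reads $\sum_i\|\mu_{i,h}\|_1^2\le d$ (with the square), which is what you need for the $H\sqrt d$ bound via Cauchy--Schwarz.
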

\begin{proof}
See \S\ref{pf::backdoor_feature} for a detailed proof.
\end{proof}

Such an observation allows us to estimate the action-value function based on the backdoor-adjusted features $\{\psi_h\}_{h\in[H]}$ in the online setting. See \S\ref{sec::mechanism} for a detailed discussion. In the sequel, we assume that either the density of $\{\tilde\cP_h(\cdot\given s_h)\}_{h\in[H]}$ is known or the backdoor-adjusted feature $\{\psi_h\}_{h\in[H]}$ is know.

In the sequel, we introduce the DOVI algorithm (Algorithm \ref{alg::backdoor}). Each iteration of DOVI consists of two components, namely point estimation, where we estimate $Q^*_h$ based on the confounded observational data and the interventional data, and uncertainty quantification, where we construct the upper confidence bound (UCB) of the point estimator.

\begin{algorithm}[htpb]
\caption{Deconfounded Optimistic Value Iteration (DOVI) for Confounded MDP}
\begin{algorithmic}[1]\label{alg::backdoor}
\REQUIRE Observational data $\{(s_{h}^i, a_{h}^i, u_h^i, r_h^i)\}_{i \in [n], h \in [H]}$, tuning parameters $\lambda, \beta >0$, backdoor-adjusted feature $\{\psi_h\}_{h\in[H]}$, which is defined in \eqref{eq::def_back_adjust_feature}.
\STATE{\bf Initialization:} Set $\{Q^0_h, V^0_h\}_{h\in[H]}$ as zero functions and $V^k_{H+1}$ as a zero function for $k\in[K]$. 
\FOR{$k = 1, \ldots, K$}
\FOR{$h = H, \ldots, 1$}
\STATE \label{line::mechanism_backdoor} Set $\omega^k_h \leftarrow  \argmin_{\omega\in\RR^d} \sum^{k-1}_{\tau = 1}(r_h^\tau + V^\tau_{h+1}(s^\tau_{h+1})-\omega^\top \psi_h(s^\tau_h, a^\tau_h))^2 + \lambda \|\omega\|^2_2 + L^k_h(\omega)$, where $L^k_h$ is defined in \eqref{eq::reg_backdoor}.
\STATE \label{line::UCB} Set $Q^k_h(\cdot, \cdot) \leftarrow  \min\{\psi_h(\cdot, \cdot)^\top \omega^k_h + \Gamma^k_h(\cdot, \cdot), H-h\}$, where $\Gamma^k_h$ is defined in \eqref{eq::def_Gamma_backdoor}.
\STATE Set $\pi^k_h(\cdot\given s_h) \leftarrow \argmax_{a_h\in\cA} Q^k_h(s_h, a_h)$ for all $s_h\in\cS$.
\STATE \label{line::V} Set $V^k_h(\cdot) \leftarrow \langle \pi^k_h(\cdot\given\cdot),  Q^k_h(\cdot, \cdot)\rangle_\cA$.
\ENDFOR
\STATE Obtain $s^k_1$ from the environment.
\FOR{$h = 1, \ldots, H$}
\STATE Take $a^k_h\sim \pi^k_h(\cdot\given s^k_h)$. Obtain $r^k_h = r_h(s^k_h, a^k_h, u^k_h)$ and $s^k_{h+1}$.
\ENDFOR
\ENDFOR
\end{algorithmic}
\end{algorithm}

\vskip4pt
\noindent{\bf Point Estimation.}
To solve the Bellman optimality equation in \eqref{eq::bellman_backdoor_opt}, we minimize the empirical mean-squared Bellman error as follows at each step,
\#\label{eq::LSVI_backdoor}
\omega^k_h \leftarrow \argmin_{\omega\in\RR^d} \sum^{k-1}_{\tau = 1}\bigl(r_h^\tau + V^\tau_{h+1}(s^\tau_{h+1})-\omega^\top \psi_h(s^\tau_h, a^\tau_h)\bigr)^2 + \lambda \|\omega\|^2_2 + L^k_h(\omega), ~~h = H, \ldots, 1,
\#
where we set $V^k_{H+1} = 0$ for all $k\in[K]$ and $V^\tau_{h+1}$ is defined in Line \ref{line::V} of Algorithm \ref{alg::backdoor} for all $(\tau, h)\in[K]\times[H-1]$. Here $k$ is the index of episode, $\lambda >0$ is a tuning parameter, and  $L^k_h$ is a regularizer, which is constructed based on the confounded observational data. More specifically, we define
\#\label{eq::reg_backdoor}
L^k_h(\omega) = \sum^{n}_{i = 1}\bigl( r_h^i + V^k_{h+1}(s^i_{h+1})-\omega^\top \phi_h(s^i_h, a^i_h, u^i_h)\bigr)^2, \quad \forall (k, h) \in[K]\times[H],
\#
which corresponds to the least-squares loss for regressing $r^i_h + V^k_{h+1}(s^i_{h+1})$ against $\phi_h(s^i_h, a^i_h, u^i_h)$ for all $i\in[n]$. Here $\{(s^i_h, a^i_h, u^i_h, r^i_h)\}_{(i, h)\in[n]\times[H]}$ are the confounded observational data, where $u^i_h\sim \tilde\cP_h(\cdot\given s^i_h)$, $s^i_{h+1}\sim\cP_h(\cdot\given s^i_h, a^i_h, u^i_h)$, and $a^i_h\sim\nu_h(\cdot\given s^i_h, w^i_h)$ with $\nu = \{\nu_h\}_{h\in[H]}$ being the behavior policy. Here recall that, with a slight abuse of notation, we write $\PP(s_{h+1} ~|~ s_h, a_h, u_h)$ as $\cP_h(s_{h+1}\given s_h, a_h, u_h)$ and $\PP(u_h\given s_h)$ as $\tilde\cP_h(u_h\given s_h)$, since they are induced by the SCM defined in \S\ref{sec::MDP_env}.

The update in \eqref{eq::LSVI_backdoor} takes the following explicit form,
\#\label{eq::backdoor_update}
\omega^k_h \leftarrow (\Lambda^{k}_h)^{-1} \biggl( &\sum^{k-1}_{\tau = 1} \psi_h(s^\tau_h, a^\tau_h) \cdot \bigl(V^k_{h+1}(s^\tau_{h+1}) + r_h^\tau\bigr)\notag\\
&\qquad+ \sum^n_{i = 1} \phi_h(s_h^i, a_h^i, u_h^i) \cdot \bigl(V^k_{h+1}(s^i_{h+1})+ r_h^i\bigr) \biggr),
\#
where 
\#\label{eq::backdoor_var}
\Lambda^k_h = \sum^{k-1}_{\tau = 1} \psi_h(s^\tau_h, a^\tau_h) \psi_h(s^\tau_h, a^\tau_h)^\top+ \sum^n_{i = 1} \phi_h(s_h^i, a_h^i, u_h^i)\phi_h(s_h^i, a_h^i, u_h^i)^\top + \lambda I.
\#
\vskip4pt
\noindent{\bf Uncertainty Quantification.}~We now construct the UCB $\Gamma_h^k(\cdot, \cdot)$ of the point estimator $\psi_h(\cdot, \cdot)^\top \omega^k_h$ obtained from \eqref{eq::backdoor_update}, which encourages the exploration of the less visited state-action pairs. To this end, we employ the following notion of information gain to motivate the UCB,
\#\label{eq::backdoor_ucb_1}
\Gamma^k_h(s^k_h, a^k_h) \propto  H(\omega^k_h\given \xi_{k-1}) - H\bigl(\omega^k_h\given \xi_{k-1}\cup\{(s^k_h, a^k_h)\}\bigr),
\#
where $H(\omega^k_h\given \xi_{k-1})$ is the differential entropy of the random variable $\omega^k_h$ given the data $\xi_{k-1}$. In particular, $\xi_{k-1} = \{(s^\tau_h, a^\tau_h, r^\tau_h)\}_{(\tau, h)\in[k-1]\times[H]}\cup\{(s^i_h, a^i_h, u^i_h, r^i_h)\}_{(i, h)\in[n]\times[H]}$ consists of the confounded observational data and the interventional data up to the $(k-1)$-th episode. However, it is challenging to characterize the distribution of $\omega^k_h$. To this end, we consider a Bayesian counterpart of the confounded MDP, where the prior of $\omega^k_h$ is $N(0, \lambda I)$ and the residual of the regression problem  in \eqref{eq::LSVI_backdoor} is $N(0, 1)$. In such a ``parallel" confounded MDP, the posterior of $\omega^k_h$ follows $N(\mu_{k, h}, (\Lambda^k_h)^{-1})$, where $\Lambda^k_h$ is defined in \eqref{eq::backdoor_var} and $\mu_{k, h}$ coincides with the right-hand side of \eqref{eq::backdoor_update}. Moreover, it holds for all $(s^k_h, a^k_h)\in\cS\times\cA$ that
\$
&H(\omega^k_h \given \xi_{k-1}) = 1/2\cdot \log\det\bigl((2\pi e)^d\cdot (\Lambda^k_h)^{-1} \bigr),\notag\\
&H\bigl(\omega^k_h\biggiven \xi_{k-1}\cup\{(s^k_h, a^k_h)\}\bigr) = 1/2\cdot \log\det\Bigl((2\pi e)^d\cdot \bigl(\Lambda^k_h + \psi_h(s^k_h, a^k_h)\psi_h(s^k_h, a^k_h)^\top\bigr)^{-1} \Bigr). \
\$
Correspondingly, we employ the following UCB, which instantiates \eqref{eq::backdoor_ucb_1}, that is,
\#\label{eq::def_Gamma_backdoor}
\Gamma^k_h(s^k_h, a^k_h) = \beta\cdot\Bigl(\log \det\bigl(\Lambda^k_h + \psi_h(s^k_h, a^k_h)\psi_h(s^k_h, a^k_h)^\top\bigr) - \log\det(\Lambda^k_h)\Bigr)^{1/2}
\#
for all $(s^k_h, a^k_h) \in \cS\times\cA$. Here $\beta > 0$ is a tuning parameter. We highlight that, although the information gain in \eqref{eq::backdoor_ucb_1} relies on the ``parallel" confounded MDP, the UCB in \eqref{eq::def_Gamma_backdoor}, which is used in Line \ref{line::UCB} of Algorithm \ref{alg::backdoor}, does not rely on the Bayesian perspective. Also, our analysis establishes the frequentist regret.

\vskip4pt
\noindent{\bf Regularization with Observational Data: A Bayesian Perspective.}
In the ``parallel" confounded MDP, it holds that
\$
\omega^k_h\sim N(0, \lambda I), \quad \omega^k_h\given\xi_{0} \sim N\bigl(\mu_{1, h}, (\Lambda^1_h)^{-1}\bigr), \quad \omega^k_h\given\xi_{k-1} \sim N\bigl(\mu_{k, h}, (\Lambda^{k}_h)^{-1}\bigr),
\$
where $\mu_{k, h}$ coincides with the right-hand side of \eqref{eq::backdoor_update} and $\mu_{1, h}$ is defined by setting $k = 1$ in $\mu_{k, h}$. Here $\xi_0 = \{(s^i_h, a^i_h, u^i_h, r^i_h)\}_{(i, h)\in[n]\times[H]}$ are the confounded observational data.
Hence, the regularizer $L^k_h$ in \eqref{eq::reg_backdoor} corresponds to using $\omega^k_h\given\xi_{0}$ as the prior for the Bayesian regression problem given only the interventional data $\xi_{k-1}\setminus \xi_0=\{(s^\tau_h, a^\tau_h, r^\tau_h)\}_{(\tau, h)\in[k-1]\times[H]}$. 
\subsection{Theory}
The following theorem characterizes the regret of DOVI, which is defined in \eqref{eq::def_regret}.
\begin{theorem}[Regret of DOVI]
\label{thm::regret_backdoor}
Let  $\beta = CdH\sqrt{\log(d(T + nH)/\zeta)}$ and $\lambda = 1$, where $C >0$ and $\zeta\in(0, 1]$ are absolute constants. Under Assumptions \ref{asu::backdoor_crit} and \ref{asu::linMDP_backdoor}, it holds with probability at least $1 - 5\zeta/2$ that
\#
\textrm{Regret}(T) \leq C'\cdot \Delta_H\cdot\sqrt{d^3H^3 T}\cdot \sqrt{\log\bigl(d(T + nH)/\zeta\bigr)},
\#
where $C'>0$ is an absolute constant and
\#\label{eq::def_info_gain_backdoor}
\Delta_H = \frac{1}{\sqrt{dH^2}}\sum^H_{h = 1}\bigl(\log\det(\Lambda^{K+1}_h) - \log\det(\Lambda^1_h)\bigr)^{1/2}.
\#
\end{theorem}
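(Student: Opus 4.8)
The plan is to run the standard optimistic least-squares value iteration argument while pooling the interventional and observational summands through the Gram matrix $\Lambda^k_h$ in \eqref{eq::backdoor_var}, and to read off the offline contribution through the log-determinant quantity $\Delta_H$ in \eqref{eq::def_info_gain_backdoor}. The observation that makes the pooling legitimate is that, under Assumption~\ref{asu::linMDP_backdoor} together with Proposition~\ref{prop::backdoor_feature}, for any $V:\cS\to[0,H]$ both least-squares problems in \eqref{eq::LSVI_backdoor}--\eqref{eq::reg_backdoor} estimate the \emph{same} vector
\[
w^*_h(V) = \theta_h + \int_\cS V(s')\,\mu_h(s')\,\mathrm{d}s',
\]
since the online response $r^\tau_h+V(s^\tau_{h+1})$ has conditional mean $\psi_h(s^\tau_h,a^\tau_h)^\top w^*_h(V)$ by the backdoor adjustment (Proposition~\ref{lem::backdoor}), whereas the offline response $r^i_h+V(s^i_{h+1})$ has conditional mean $\phi_h(s^i_h,a^i_h,u^i_h)^\top w^*_h(V)$ by the linear model; moreover $\|w^*_h(V)\|_2\le(H+1)\sqrt d$, which controls the ridge bias.

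\textbf{Step 1: self-normalized concentration.} Writing $\omega^k_h - w^*_h(V^k_{h+1}) = (\Lambda^k_h)^{-1}\bigl(\sum_{\tau<k}\psi_h(s^\tau_h,a^\tau_h)\,\epsilon^\tau_h + \sum_{i\le n}\phi_h(s^i_h,a^i_h,u^i_h)\,\eta^i_h - \lambda w^*_h(V^k_{h+1})\bigr)$, with $\epsilon,\eta$ the respective bounded regression residuals, I would prove that with probability at least $1-\zeta$, uniformly over all $(k,h)$ and all $(s,a)\in\cS\times\cA$, $\bigl|\psi_h(s,a)^\top\omega^k_h - (R_h+\PP_hV^k_{h+1})(s,a)\bigr|\le\Gamma^k_h(s,a)$. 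Since $V^k_{h+1}$ depends on both $\xi_{k-1}$ and $\xi_0$, the residuals are not martingale differences relative to their own data; the standard remedy is to combine a self-normalized concentration inequality for vector-valued martingales (online block) and a Hoeffding-type bound (offline block) with a union bound over an $\epsilon$-net of the class of value iterates $V^k_{h+1}$, which are of the form appearing in Line~\ref{line::UCB} of Algorithm~\ref{alg::backdoor} and hence admit a cover of log-cardinality polynomial in $d$, $H$, and $\log(1/\epsilon)$. Using $\det(\Lambda^k_h)\ge\det(\Lambda^1_h)\ge\lambda^d$ this yields a noise-plus-bias bound of order $dH\sqrt{\log(d(T+nH)/\zeta)}$; then, since $\|\psi_h(s,a)\|^2_{(\Lambda^k_h)^{-1}}\le 1/\lambda=1$ and $\log(1+x)\ge x/2$ on $[0,1]$, we get $\|\psi_h(s,a)\|_{(\Lambda^k_h)^{-1}}\cdot dH\sqrt{\log(\cdot)}\lesssim\beta\,\bigl(\log(1+\|\psi_h(s,a)\|^2_{(\Lambda^k_h)^{-1}})\bigr)^{1/2}=\Gamma^k_h(s,a)$, the last equality being the matrix determinant lemma applied to \eqref{eq::def_Gamma_backdoor}.

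\textbf{Step 2: optimism, decomposition, and the information gain.} Backward induction on $h$ using \eqref{eq::bellman_backdoor_opt} and Step~1 gives $Q^k_h\ge Q^*_h$ pointwise for all $k,h$ on the good event, so $\mathrm{Regret}(T)\le\sum_{k}\bigl(V^k_1(s^k_1)-V^{\pi^k}_1(s^k_1)\bigr)$. Setting $\delta^k_h=V^k_h(s^k_h)-V^{\pi^k}_h(s^k_h)$ and invoking Step~1 once more, the one-step expansion gives $\delta^k_h\le\delta^k_{h+1}+\xi^k_h+2\Gamma^k_h(s^k_h,a^k_h)$ with $\xi^k_h$ a bounded martingale difference; unrolling over $h$ and summing over $k$ bounds the regret by $\sum_{k,h}\xi^k_h+2\sum_{k,h}\Gamma^k_h(s^k_h,a^k_h)$, where the first sum is $\widetilde O(\sqrt{H^2T})$ by Azuma--Hoeffding and does not dominate. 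For the second sum, the matrix determinant lemma and $\Lambda^{k+1}_h=\Lambda^k_h+\psi_h(s^k_h,a^k_h)\psi_h(s^k_h,a^k_h)^\top$ give $\Gamma^k_h(s^k_h,a^k_h)=\beta\bigl(\log\det(\Lambda^{k+1}_h)-\log\det(\Lambda^k_h)\bigr)^{1/2}$, so Cauchy--Schwarz over $k$ and telescoping yield $\sum_{k=1}^K\Gamma^k_h(s^k_h,a^k_h)\le\beta\sqrt K\bigl(\log\det(\Lambda^{K+1}_h)-\log\det(\Lambda^1_h)\bigr)^{1/2}$, and summing over $h$ with \eqref{eq::def_info_gain_backdoor} gives $\sum_{k,h}\Gamma^k_h(s^k_h,a^k_h)\le\beta\sqrt K\cdot\sqrt d\,H\,\Delta_H$. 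Plugging in $\beta=CdH\sqrt{\log(d(T+nH)/\zeta)}$, $\lambda=1$, and $T=HK$ produces $\mathrm{Regret}(T)\le C'\Delta_H\sqrt{d^3H^3T}\sqrt{\log(d(T+nH)/\zeta)}$, and a union bound over the $O(\zeta)$-probability failure events from Steps~1--2 gives the stated confidence $1-5\zeta/2$.

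\textbf{Main obstacle.} The crux is Step~1: simultaneously handling the online and offline blocks of $\Lambda^k_h$ while making the concentration uniform over the data-dependent value iterates $V^k_{h+1}$, and in particular verifying via Assumption~\ref{asu::backdoor_crit} and Proposition~\ref{prop::backdoor_feature} that the backdoor-adjusted feature $\psi_h$ and the raw feature $\phi_h$ point at the common target $w^*_h(V)$, so that the two regression blocks can legitimately be analyzed as one. The remainder is the familiar LSVI-UCB machinery together with the $\Delta_H$ bookkeeping in Step~2.
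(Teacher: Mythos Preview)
Your proposal is correct and follows essentially the same route as the paper. The only stylistic difference is in the regret decomposition: you argue optimism ($Q^k_h\ge Q^*_h$) and then expand $V^k_1-V^{\pi^k}_1$ recursively as in \cite{jin2019provably}, whereas the paper invokes the decomposition of \cite{cai2019provably} into a policy-improvement term $Y\le 0$, a martingale $M_{K,H,2}$, and the model prediction errors $\iota^k_h$ bounded via Lemma~\ref{lem:backdoor_UCB}; both lead to the same $\sum_{k,h}\Gamma^k_h(s^k_h,a^k_h)$ and the identical Cauchy--Schwarz/telescoping step that produces $\Delta_H$. Your identification of the common regression target $w^*_h(V)=\theta_h+\langle\mu_h,V\rangle$ for both the $\psi_h$- and $\phi_h$-blocks is exactly the mechanism the paper uses in the proof of Lemma~\ref{lem:backdoor_UCB} (cf.\ \eqref{eq::backdoor_UCB_2}--\eqref{eq::backdoor_UCB_4}), and the paper likewise handles the data-dependence of $V^k_{h+1}$ by a covering argument (Lemmas~\ref{lem::Jin_D4}, \ref{lem::covering}, \ref{lem::self_norm_process}), though it merges the offline and online samples into a single self-normalized process via an extended filtration rather than treating them as separate blocks.
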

\begin{proof}
See \S\ref{pf::regret_backdoor} for a detailed proof.
\end{proof}
Note that $\Lambda^{K+1}_h\preceq (n+K+\lambda)I$ and $\Lambda^1_h \succeq \lambda I$ for all $h\in[H]$. Hence, it holds that $\Delta_H = \cO(\sqrt{\log(n +K+1)})$ in the worst case. Thus, the regret of DOVI is $\cO(\sqrt{d^3H^3T})$ up to logarithmic factors, which is optimal in the total number of steps $T$ if we only consider the online setting. However, $\Delta_H$ is possibly much smaller than $\cO(\sqrt{\log(n +K+1)})$, depending on the amount of information carried over by the confounded observational data from the offline setting, which is quantified in the following.
\vskip4pt
\noindent{\bf Interpretation of $\Delta_H$: An Information-Theoretic Perspective.}
Let $\omega^*_h$ be the parameter of the globally optimal action-value function $Q^*_h$, which corresponds to $\pi^*$ in \eqref{eq::def_regret}. Recall that we denote by $\xi_0$ and $\xi_{K}$ the confounded observational data $\{(s^i_h, a^i_h, u^i_h, r^i_h)\}_{(i, h)\in[n]\times[H]}$ and the union $\{(s^i_h, a^i_h, u^i_h, r^i_h)\}_{(i, h)\in[n]\times[H]}\cup\{(s^k_h, a^k_h, r^k_h)\}_{(k, h)\in[K]\times[H]}$ of the confounded observational data and the interventional data up to the $K$-th episode, respectively. We consider the aforementioned Bayesian counterpart of the confounded MDP, where the prior of $\omega^*_h$ is also $N(0, \lambda I)$. In such a ``parallel" confounded MDP, we have
\#
\omega^*_h\sim N(0, \lambda I), \quad \omega^*_h\given\xi_{0} \sim N\bigl(\mu^*_{1, h}, (\Lambda^1_h)^{-1}\bigr), \quad \omega^*_h\given\xi_{K} \sim N\bigl(\mu^*_{K, h}, (\Lambda^{K+1}_h)^{-1}\bigr),
\#
where 
\$
\mu^*_{1, h} &= (\Lambda^{1}_h)^{-1}\sum^n_{i = 1} \phi_h(s_h^i, a_h^i, u_h^i) \cdot \bigl(V^*_{h+1}(s^i_{h+1})+ r_h^i\bigr),\notag\\
\mu^*_{K, h} &= (\Lambda^{K+1}_h)^{-1}\biggl(\Lambda^{1}_h \mu^*_{1, h}+ \sum^{K}_{\tau = 1} \psi_h(s^\tau_h, a^\tau_h) \cdot \bigl(V^*_{h+1}(s^\tau_{h+1}) + r_h^\tau\bigr)\biggr).
\$
It then holds for the right-hand side of \eqref{eq::def_info_gain_backdoor} that
\#\label{eq::info_gain_1_to_K}
&1/2\cdot\log\det(\Lambda^{K+1}_h)  - 1/2\cdot\log\det(\Lambda^1_h)=H(\omega^*_h\given\xi_{0}) - H(\omega^*_h\given\xi_{K}).
\#
The left-hand side of \eqref{eq::info_gain_1_to_K} characterizes the information gain of intervention in the online setting given the confounded observational data in the offline setting. In other words, if the confounded observational data are sufficiently informative upon the backdoor adjustment, then $\Delta_H$ is small, which implies that the regret is small. More specifically, the matrices $(\Lambda^{1}_{h})^{-1}$ and $(\Lambda^{K+1}_{h})^{-1}$ defined in \eqref{eq::backdoor_var} characterize the ellipsoidal confidence sets given $\xi_0$ and $\xi_{K}$, respectively. If the confounded observational data are sufficiently informative upon the backdoor adjustment, $\Lambda^{K+1}_{h}$ is close to $\Lambda^{1}_{h}$. To illustrate, let $\{\psi_h(s^\tau_h, a^\tau_h)\}_{(\tau, h)\in[K]\times[H]}$ and $\{\phi_h(s^i_h, a^i_h, u^i_h)\}_{(i, h)\in[n]\times[H]}$ be sampled uniformly at random from the canonical basis $\{e_\ell\}_{\ell\in[d]}$ of $\RR^d$. It then holds that $\Lambda^{K+1}_{h}\approx (K+n)I/d + \lambda I$ and $\Lambda^1_{h}\approx nI/d + \lambda I$. Hence, for $\lambda = 1$ and sufficiently large $n$ and $K$, we have $\Delta_H = \cO(\sqrt{\log(1 + K/(n + d))}) = \cO(\sqrt{K/(n + d)})$. For example, for $n = \Omega(K^2)$, it holds that $\Delta_H = \cO(n^{-1/2})$, which implies that the regret of DOVI is $\cO(n^{-1/2}\cdot\sqrt{d^3H^3T })$. In other words, if the confounded observational data are sufficiently informative upon the backdoor adjustment, the regret of DOVI can be arbitrarily small given a sufficiently large sample size $n$ of the confounded observational data, which is often the case in practice \citep{murphy2003optimal, chakraborty2014dynamic, de2019causal, li2020make, levine2020offline}. 
  

\section{Algorithm and Theory for Unobserved Confounder}
\label{sec::frontdoor_LSVI}

In this section, we extend DOVI to handle the case where the confounders are unobserved in both the online setting and the offline setting. We then characterize the regret of such an extension of DOVI, namely DOVI\textsuperscript{+}. In comparison with DOVI, DOVI\textsuperscript{+} additionally incorporates an intermediate state at each step, which extends the length of each episode from $H$ to $2H$.

\subsection{Algorithm}
\vskip4pt
\noindent{\bf Frontdoor Adjustment.}
Since the confounders $\{w_h\}_{h\in[H]}$ are unobserved in the offline setting, the confounded observational data $\{(s^i_h, a^i_h, r^i_h)\}_{(i, h)\in[n]\times[H]}$ are insufficient for the identification of the causal effect $\PP(s_{h+1}\given s_h, \doo(a_h))$ \citep{pearl2009causality, peters2017elements}. However, such a causal effect is identifiable if we observe the intermediate states $\{m_h\}_{h\in[H]}$ that satisfy the following frontdoor criterion.
\begin{assumption}[Frontdoor Criterion \citep{pearl2009causality, peters2017elements}]
\label{asu::frontdoor_crit}
In the SCM defined in \S\ref{sec::MDP_env}, for all $h\in[H]$, there additionally exists an observed intermediate state $m_h$ that satisfies the frontdoor criterion, that is,
\begin{itemize}
\item $m_h$ intercepts every directed path from $a_h$ to $s_{h+1}$,
\item conditioning on $s_h$, no path between $a_h$ and $m_h$ has an incoming arrow into $a_h$, and
\item conditioning on $s_h$, $a_h$ $d$-separates every path between $m_h$ and $s_{h+1}$ that has an incoming arrow into $m_h$.
\end{itemize}
\end{assumption}
\begin{figure*}[ht!]
    \centering
    \begin{subfigure}[t]{0.5\textwidth}
        \centering
        \includegraphics[height=1.2in]{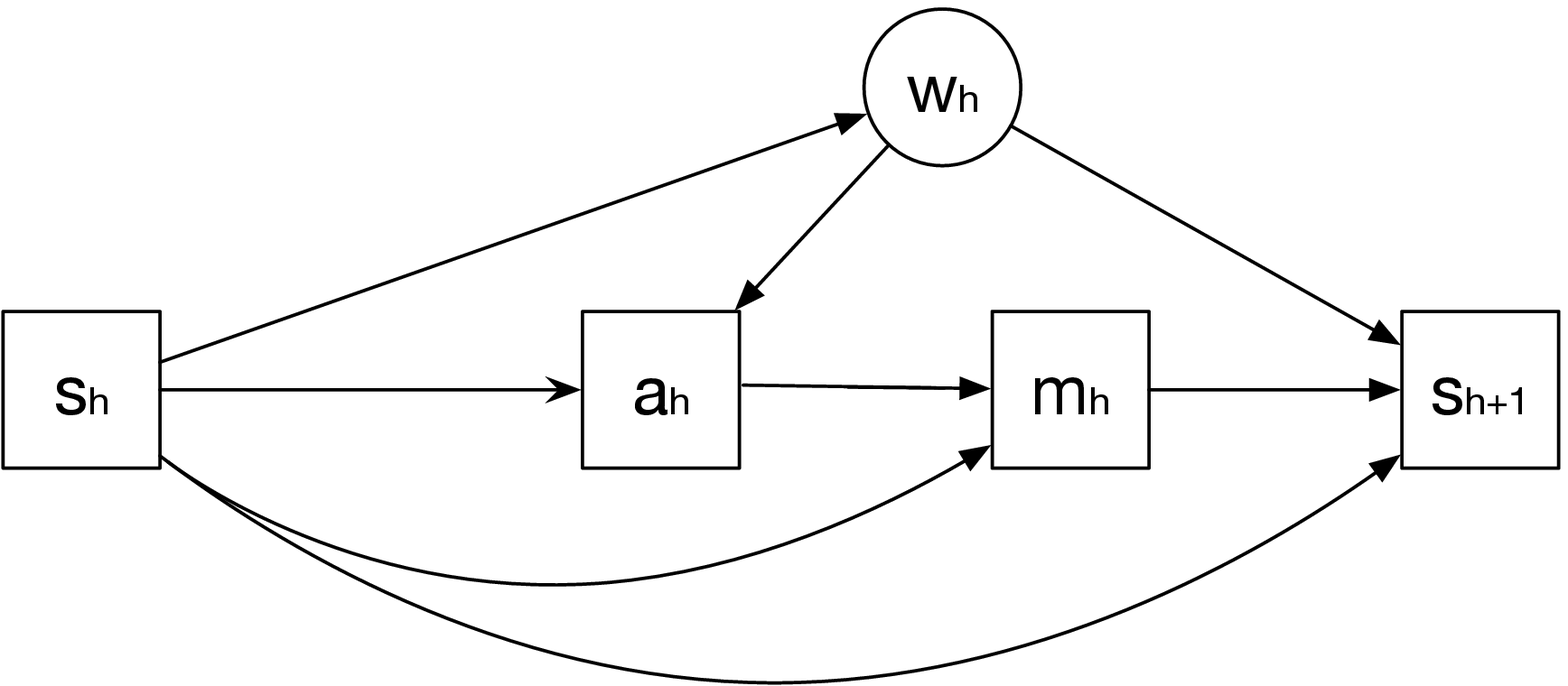}
        \caption{Offline Setting}
    \end{subfigure}%
    ~ 
    \begin{subfigure}[t]{0.5\textwidth}
        \centering
        \includegraphics[height=1.2in]{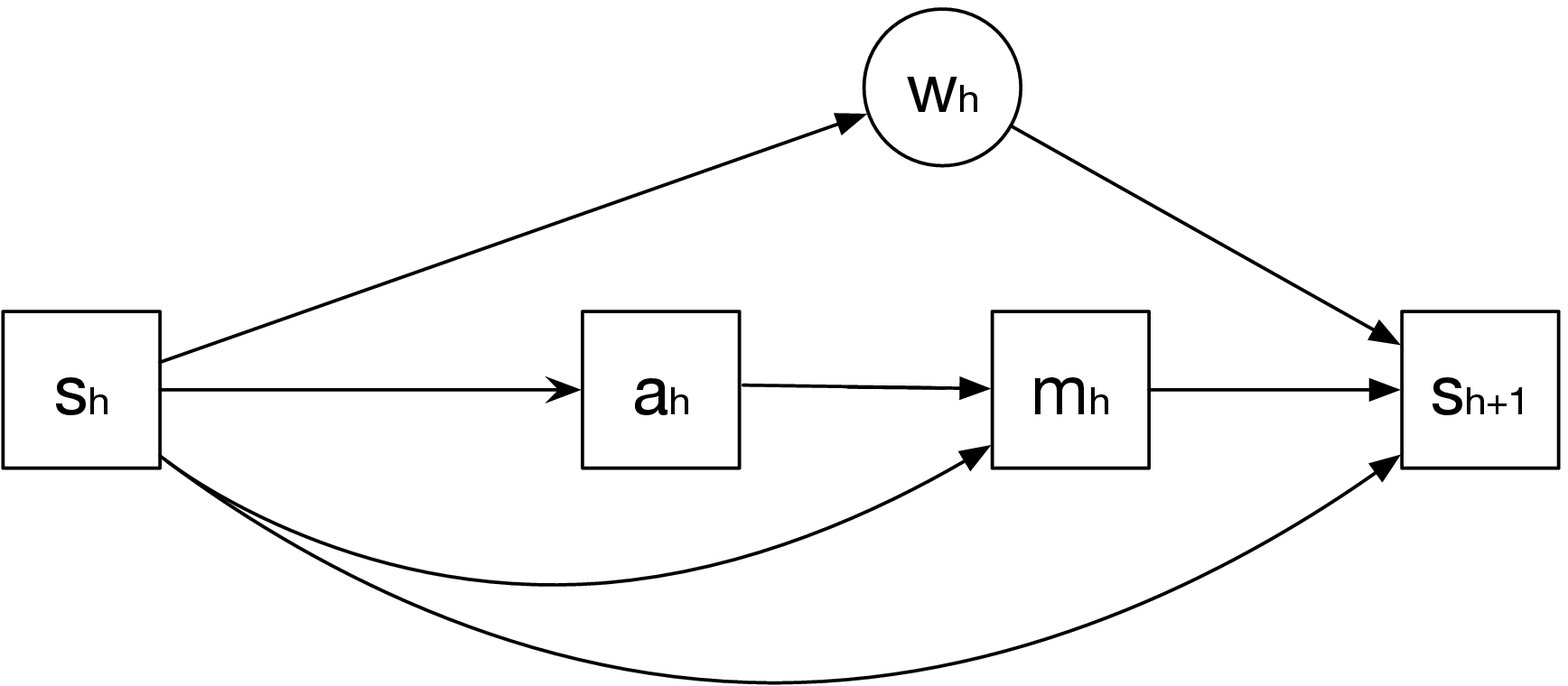}
        \caption{Online Setting}
    \end{subfigure}
    \caption{Causal diagrams of the $h$-th step of the confounded MDP with the intermediate state (a) in the offline setting and (b) in the online setting, respectively.}
    \label{fig::frontdoor_1}
\end{figure*}
\begin{figure*}[ht!]
    \centering
    \includegraphics[height=1.2in]{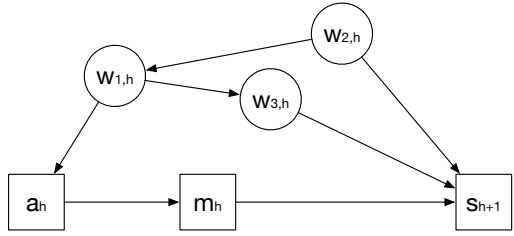}
    \caption{An illustration of the frontdoor criterion. The causal diagram corresponds to the $h$-th step of the confounded MDP conditioning on $s_h$. Here $w_h = \{w_{1, h},w_{2, h},w_{3, h}\}$ is the confounder and the intermediate state $m_h$ satisfies the frontdoor criterion.}
    \label{fig::frontdoor_2}
\end{figure*}

See Figure \ref{fig::frontdoor_1} for the causal diagram that describes such an SCM and Figure \ref{fig::frontdoor_2} for an example that satisfies the frontdoor criterion.
Intuitively, Assumption \ref{asu::frontdoor_crit} ensures that, conditioning on $s_h$, (i) the intermediate state $m_h$ is caused by the action $a_h$ and the causal effect of the action $a_h$ on the next state $s_{h+1}$ is summarized by $m_h$, while (ii) the action $a_h$ and the intermediate state $m_h$ are not confounded. In the sequel, we denote by $\cM$ the space of intermediate states and $\breve\cP_h(\cdot\given\cdot, \cdot)$ the transition kernel that determines $m_h$ given $s_h$ and $a_h$. The causal effect $\PP(s_{h+1}\given s_h, \doo(a_h))$ is identified as follows.
\begin{proposition}[Frontdoor Adjustment \citep{pearl2009causality}]
\label{prop::frontdoor_adj}
Under Assumption \ref{asu::frontdoor_crit}, it holds that
\$
\PP\bigl(s_{h+1}\biggiven s_h, \doo(a_h) \bigr) = \EE_{m_h, a'_h}\bigl[ \PP(s_{h+1}\given s_h, a'_h, m_h)\bigr],
\$
where the expectation $\EE_{m_h, a'_h}$ is taken with respect to $m_h\sim\breve\cP_h(\cdot\given s_h, a_h)$ and $a'_h\sim \EE_{w_h\sim \tilde\cP_h(\cdot\given s_h)}[\nu_h(\cdot\given s_h, w_h)]$. Here $(s_{h+1}, s_h, a_h, m_h)$ follows the SCM define in \S\ref{sec::MDP_env} with the intermediate states $\{m_h\}_{h\in[H]}$ in the offline setting.
\end{proposition}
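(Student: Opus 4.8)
The plan is to derive the frontdoor formula by composing two backdoor adjustments, which is precisely the structure that the subsequent DOVI$^+$ algorithm exploits. First I would decompose the causal effect $\PP(s_{h+1}\given s_h, \doo(a_h))$ by conditioning on the intermediate state $m_h$ via the law of total probability under intervention: $\PP(s_{h+1}\given s_h, \doo(a_h)) = \EE_{m_h\sim \breve\cP_h(\cdot\given s_h, a_h)}[\PP(s_{h+1}\given s_h, \doo(a_h), m_h)]$. The first bullet of Assumption \ref{asu::frontdoor_crit} tells us $m_h$ intercepts every directed path from $a_h$ to $s_{h+1}$, so the distribution of $m_h$ given $s_h$ under $\doo(a_h)$ is just $\breve\cP_h(\cdot\given s_h, a_h)$, and moreover $s_{h+1}$ is conditionally independent of $a_h$ given $(s_h, m_h)$ under the intervention — i.e. $\PP(s_{h+1}\given s_h, \doo(a_h), m_h) = \PP(s_{h+1}\given s_h, \doo(m_h))$.

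The second step is to apply the backdoor criterion to the effect $\PP(s_{h+1}\given s_h, \doo(m_h))$, using $a_h$ as the adjustment set (conditional on $s_h$). The third bullet of Assumption \ref{asu::frontdoor_crit} states that, conditioning on $s_h$, the variable $a_h$ $d$-separates every path from $m_h$ to $s_{h+1}$ with an incoming arrow into $m_h$; together with the fact that $a_h$ is not a descendant of $m_h$ (the second bullet ensures no arrow from $m_h$ into $a_h$, conditional on $s_h$), this means $\{a_h\}$ satisfies the backdoor criterion relative to $(m_h, s_{h+1})$ given $s_h$. Invoking Proposition \ref{lem::backdoor} (the backdoor adjustment) with this adjustment set yields $\PP(s_{h+1}\given s_h, \doo(m_h)) = \EE_{a'_h\sim \PP(\cdot\given s_h)}[\PP(s_{h+1}\given s_h, a'_h, m_h)]$, where $\PP(a'_h\given s_h)$ is the observational conditional law of the action given the state, which by marginalizing the confounder equals $\EE_{w_h\sim\tilde\cP_h(\cdot\given s_h)}[\nu_h(\cdot\given s_h, w_h)]$.

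Combining the two displays gives the claimed identity $\PP(s_{h+1}\given s_h, \doo(a_h)) = \EE_{m_h, a'_h}[\PP(s_{h+1}\given s_h, a'_h, m_h)]$ with $m_h \sim \breve\cP_h(\cdot\given s_h, a_h)$ and $a'_h \sim \EE_{w_h\sim\tilde\cP_h(\cdot\given s_h)}[\nu_h(\cdot\given s_h, w_h)]$, noting the crucial point that $m_h$ and $a'_h$ are drawn independently (the outer expectation over $m_h$ uses the true intervention $\doo(a_h)$ while the inner backdoor expectation over $a'_h$ re-samples the action from its observational law). Since this is a classical result, I would keep the exposition brief, deferring the detailed $d$-separation verifications to \cite{pearl2009causality}; the one point worth spelling out for this paper's setting is why the adjustment reduces to the stated expectations over $\breve\cP_h$ and $\tilde\cP_h$, $\nu_h$, which follows directly from the SCM of \S\ref{sec::MDP_env}.

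The main obstacle, such as it is, is bookkeeping rather than mathematics: one must be careful that the two nested do-operations are applied to the correct sub-effects and that the "recycled" action $a'_h$ in the backdoor step is genuinely independent of the $m_h$ produced by the first step — conflating them would produce the (incorrect) observational conditional $\PP(s_{h+1}\given s_h, a_h)$ instead. Verifying that the graph conditions in Assumption \ref{asu::frontdoor_crit} translate into the precise $d$-separation statements needed for each invocation is the only place requiring genuine care, and I would state those two reductions explicitly before citing Proposition \ref{lem::backdoor}.
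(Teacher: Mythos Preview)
Your proposal is correct and matches the paper's treatment. The paper does not give its own proof of this proposition---it is cited directly from \cite{pearl2009causality}---but the informal discussion in \S\ref{sec::mechanism} (equations \eqref{eq::frontdoor_intuition_0} and \eqref{eq::frontdoor_adjust}) follows exactly the two-step decomposition you outline: first factor $\PP(s_{h+1}\given s_h,\doo(a_h))$ through $m_h$ using $\PP(m_h\given s_h,\doo(a_h))=\breve\cP_h(m_h\given s_h,a_h)$, then identify $\PP(s_{h+1}\given s_h,\doo(m_h))$ via backdoor adjustment with $a_h$ as the adjustment set.
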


\vskip4pt
\noindent{\bf Frontdoor-Adjusted Bellman Equation. }
In the sequel, we assume without loss of generality that the reward $r_h$ is deterministic and only depends on the state $s_h$ and the action $a_h$. In parallel to \eqref{eq::bellman_backdoor}, we have
\#\label{eq::bellman_frontdoor}
Q^\pi_h(s_h, a_h) = r_h(s_h, a_h) + \EE_{s_{h+1}}\bigl[V^\pi_{h+1}(s_{h+1})\bigr],
\#
where the expectation $\EE_{s_{h+1}}$ is taken with respect to $s_{h+1}\sim \PP(\cdot\given s_h, \doo(a_h))$. We define the the following transition operators,
\$
&(\PP_{h+1/2} V)(s_h, m_h) = \EE_{ s_{h+1} \sim \PP(\cdot \given s_h, \doo(m_h))}\bigl[V(s_{h+1}) \bigr],\quad \forall V:\cS\mapsto \RR, ~ (s_h, m_h) \in\cS\times\cM,\notag\\
&(\PP_{h} \tilde V)(s_h, a_h) = \EE_{m_h\sim \PP(\cdot\given s_h, \doo(a_h))}\bigl[ \tilde V(s_h, m_h) \bigr], \quad \forall \tilde V:\cS\times\cM\mapsto \RR, ~ (s_h, a_h) \in\cS\times\cA.
\$
We highlight that, under Assumption \ref{asu::frontdoor_crit}, the causal effect $\PP(m_h\given s_h, \doo(a_h))$ coincides with the conditional probability $\PP(m_h\given s_h, a_h)$, since $a_h$ and $m_h$ are not confounded given $s_h$. In the sequel, we define the value function at the intermediate state by $V^\pi_{h+1/2}(s_h, m_h) = (\PP_{h+1/2} V^{\pi}_{h+1})(s_h, m_h)$. We have the following Bellman equation,
\#\label{eq::frontdoor_bellman_0}
Q^\pi_h(s_h, a_h) &= r_h(s_h, a_h) + \bigl(\PP_{h}(\PP_{h+1/2} V^\pi_{h+1})\bigr)(s_h, a_h)\notag\\
&= r_h(s_h, a_h) + (\PP_{h} V^\pi_{h+1/2})(s_h, a_h).
\#
Correspondingly, the Bellman optimality equation takes the following form,
\#\label{eq::frontdoor_bellman}
&Q^*_h(s_h, a_h)  = r_h(s_h, a_h) + (\PP_{h} V^*_{h+1/2})(s_h, a_h),\notag\\
 &V^*_{h+1/2}(s_h, m_h) = (\PP_{h+1/2} V^*_{h+1})(s_h, m_h),  \quad V^*_{h}(s_h) = \max_{a_h\in\cA} Q^*_h(s_h, a_h).
\#

\vskip4pt
\noindent{\bf Linear Function Approximation.}
In parallel to Assumption \ref{asu::linMDP_backdoor}, we focus on the following setting with linear transition kernels and reward functions \citep{yang2019sample, yang2019reinforcement, jin2019provably, cai2019provably}, which corresponds to a linear SCM \citep{peters2017elements}.
\begin{assumption}[Linear Confounded MDP]
\label{asu::linMDP_frontdoor}
We assume that
\$
\cP_h(s_{h+1} \given s_h, m_h, w_h) &= \langle \rho_{h}(s_h, m_h, w_h), \mu_h(s_{h+1}) \rangle,\quad \forall h\in[H],~(s_h, m_h, w_h)\in\cS\times\cM\times\cW,\\
\breve\cP_h(m_h \given s_h, a_h) &= \langle \gamma_{h}(s_h, a_h), \overline\mu_h(m_{h}) \rangle,\quad\forall h\in[H],~(m_h, s_h, a_h)\in\cM\times\cS\times\cA.
\$
where $\rho_h(\cdot, \cdot, \cdot)$, $\gamma_h(\cdot, \cdot)$, $\mu_h(\cdot) = (\mu_{1, h}(\cdot), \ldots, \mu_{d, h}(\cdot))^\top$, and $\overline \mu_h(\cdot)= (\overline\mu_{1, h}(\cdot), \ldots, \overline\mu_{d, h}(\cdot))^\top$ are $\RR^d$-valued functions. We assume that $\|\rho_h(s_h, m_h, w_h)\|_2 \leq 1$,  $\|\gamma_h(s_h, a_h)\|_2 \leq 1$, $\sum^d_{i = 1}\|\mu_{i, h}\|^2_1 \leq d$, and $\sum^d_{i = 1}\|\overline\mu_{i, h}\|^2_1 \leq d$ for all $h\in[H]$ and $(s_h, a_h, m_h, w_h)\in\cS\times\cA\times\cM\times\cW$. 
Meanwhile, we assume that
\$
r_h(s_h, a_h) = \gamma_h(s_h, a_h)^\top \theta_h, \quad \forall (h, k)\in[H]\times[K],
\$
where $\theta_h \in \RR^d$ and $\|\theta_h\|_2\leq \sqrt{d}$ for all $h\in[H]$.
\end{assumption}
\begin{proposition}
\label{prop::frontdoor_prop}
We define $\tilde\nu_h(a_h\given s_h) = \EE_{w_h\sim\tilde\cP_h(\cdot\given s_h)}[\nu_h(a_h\given s_h, w_h)]$, where $\nu = \{\nu_h\}_{h\in[H]}$ is the behavior policy. With a slight abuse of notation, we define the frontdoor-adjusted feature as follows,
\#\label{eq::def_phi_front}
\phi_h(s_h, a_h, m_h) = \frac{\EE_{w_h \sim \tilde\cP_h(\cdot\given s_h)}\bigl[ \rho_h(s_h, m_h, w_h)\cdot \nu_h(a_h \given s_h, w_h) \bigr]}{\tilde \nu_h(a_h\given s_h)}, \quad \forall h \in [H].
\#
Under Assumption \ref{asu::linMDP_frontdoor}, it holds that
\#\label{eq::def_cond_frontdoor}
\PP(s_{h+1}\given s_h, a_h, m_h) = \langle \phi_h(s_h, a_h, m_h), \mu_h(s_{h+1})\rangle.
\#
\end{proposition}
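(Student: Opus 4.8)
The plan is to compute the conditional density $\PP(s_{h+1}\given s_h, a_h, m_h)$ directly from the factorization of the offline SCM and then read off the claimed linear form, with $\mu_h(s_{h+1})$ playing the role of the ``feature map'' on the next state. Conditioning on $s_h$, the structural description in \S\ref{sec::MDP_env} augmented with the intermediate states (Figure \ref{fig::frontdoor_1}) factorizes the joint conditional density of $(w_h, a_h, m_h, s_{h+1})$ as
\[
\tilde\cP_h(w_h\given s_h)\cdot\nu_h(a_h\given s_h, w_h)\cdot\breve\cP_h(m_h\given s_h, a_h)\cdot\cP_h(s_{h+1}\given s_h, m_h, w_h),
\]
where the crucial point, guaranteed by the second bullet of Assumption \ref{asu::frontdoor_crit} together with the convention that $m_h\sim\breve\cP_h(\cdot\given s_h, a_h)$, is that the $m_h$-factor does not involve the unobserved confounder $w_h$.

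By Bayes' rule, $\PP(s_{h+1}\given s_h, a_h, m_h)$ equals the integral of the display above over $w_h$ divided by its integral over $(w_h, s_{h+1})$; the factor $\breve\cP_h(m_h\given s_h, a_h)$, being constant in $w_h$, cancels between numerator and denominator, leaving
\[
\PP(s_{h+1}\given s_h, a_h, m_h) = \frac{\EE_{w_h\sim\tilde\cP_h(\cdot\given s_h)}\bigl[\nu_h(a_h\given s_h, w_h)\cdot\cP_h(s_{h+1}\given s_h, m_h, w_h)\bigr]}{\EE_{w_h\sim\tilde\cP_h(\cdot\given s_h)}\bigl[\nu_h(a_h\given s_h, w_h)\bigr]}.
\]
The denominator is exactly $\tilde\nu_h(a_h\given s_h)$ by definition. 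For the numerator I would substitute the linear model $\cP_h(s_{h+1}\given s_h, m_h, w_h) = \langle\rho_h(s_h, m_h, w_h), \mu_h(s_{h+1})\rangle$ from Assumption \ref{asu::linMDP_frontdoor} and pull $\mu_h(s_{h+1})$, which is independent of $w_h$, out of the expectation by linearity applied coordinatewise, obtaining $\langle \EE_{w_h\sim\tilde\cP_h(\cdot\given s_h)}[\rho_h(s_h, m_h, w_h)\cdot\nu_h(a_h\given s_h, w_h)], \mu_h(s_{h+1})\rangle$. Dividing by $\tilde\nu_h(a_h\given s_h)$ recovers $\langle\phi_h(s_h, a_h, m_h), \mu_h(s_{h+1})\rangle$ with $\phi_h$ as in \eqref{eq::def_phi_front}, which is \eqref{eq::def_cond_frontdoor}.

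The only genuinely non-mechanical point is the first step: establishing that the joint density factorizes with an $m_h$-term free of $w_h$, i.e. that $w_h$ and $m_h$ are conditionally independent given $(s_h, a_h)$. This is precisely where Assumption \ref{asu::frontdoor_crit} enters, and it is what makes the $m_h$-factor cancel cleanly; everything afterwards (Bayes' rule, the cancellation, exchanging the expectation over $w_h$ with the inner product) is routine. A minor bookkeeping remark is that the ratio defining $\phi_h$ is only meaningful where $\tilde\nu_h(a_h\given s_h) > 0$, i.e. on the support of the behavior policy's induced marginal over actions, which is also the only region where the conditional $\PP(s_{h+1}\given s_h, a_h, m_h)$ is defined.
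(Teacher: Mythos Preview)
Your proposal is correct and follows essentially the same route as the paper's proof: write the joint density of $(w_h,a_h,m_h,s_{h+1})$ conditional on $s_h$ via the SCM factorization, cancel the $w_h$-free factor $\breve\cP_h(m_h\given s_h,a_h)$ between numerator and denominator, and then substitute the linear form for $\cP_h(s_{h+1}\given s_h,m_h,w_h)$ from Assumption \ref{asu::linMDP_frontdoor}. Your additional remarks---the explicit justification via Assumption \ref{asu::frontdoor_crit} that the $m_h$-factor is free of $w_h$, and the support caveat on $\tilde\nu_h(a_h\given s_h)$---are fine and slightly more careful than the paper's version.
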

\begin{proof}
See \S\ref{sec::pf_phi_cond} for a detailed proof.
\end{proof}

\begin{algorithm}[htpb]
\caption{DOVI\textsuperscript{+} for Confounded MDP.}
\begin{algorithmic}[1]\label{alg::frontdoor}
\REQUIRE Observational data $\{(s_{h}^i, a_{h}^i, m_h^i, r^i_h)\}_{i \in [n], h \in [H]}$, tuning parameters $\lambda, \beta >0$, features $\{\phi_h\}_{h\in[H]}$ and $\{\psi_h\}_{h\in[H]}$, which are defined in \eqref{eq::def_phi_front} and \eqref{eq::psi_frontdoor}, respectively.
\STATE{\bf Initialization:} Set $\{Q^0_h, V^0_{h+1/2},V^0_h\}_{h\in[H]}$ as zero functions and $V^k_{H+1}$ as a zero function for $k\in[K]$. 
\FOR{$k = 1, \ldots, K$}
\FOR{$h = H, \ldots, 1$}
\STATE {\bf Update $V^k_{h+1/2}$:}
\STATE \label{line::mechanism_frontdoor}Set $\omega^k_{1, h} \leftarrow \argmin_{\omega\in\RR^d} \sum^{k-1}_{\tau = 1}( V^\tau_{h+1}(s^\tau_{h+1})-\omega^\top \psi_h(s^\tau_h, m^\tau_h))^2 + \lambda \|\omega\|^2_2 + L^k_{1, h}(\omega)$, where $L^k_{1, h}$ is defined in \eqref{eq::reg_frontdoor}.
\STATE \label{line::V_half} Set $V^k_{h+1/2}(s_h, m_h) \leftarrow  \min\{\psi_h(s_h, m_h)^\top \omega^k_{1,h} + \Gamma^k_{h+1/2}(s_h, m_h), H-h\}$ for all $(s_h, m_h)\in\cS\times\cM$, where $\Gamma^k_{h+1/2}$ is defined in \eqref{eq::UCB_frontdoor}.
\STATE {\bf Update $Q^k_{h}$:}
\STATE Set $\omega^k_{2, h} \leftarrow\argmin_{\omega\in\RR^d} \sum^{k-1}_{\tau = 1}( r^k_h + V^k_{h+1/2}(s^\tau_{h}, m^\tau_h)-\omega^\top \gamma_h(s^\tau_h, a^\tau_h))^2 + \lambda \|\omega\|^2_2 + L^k_{2, h}(\omega)$, where $L^k_{2, h}$ is defined in \eqref{eq::LSVI_frontdoor_11}.
\STATE \label{line::Q_update} Set $Q^k_h(s_h, a_h) \leftarrow \min\{\gamma_h(s_h, a_h)^\top \omega^k_{2, h} + \Gamma^k_{h}(s_h, a_h), H-h\}$ for all $(s_h, a_h)\in\cS\times\cA$, where $\Gamma^k_{h}$ is defined in \eqref{eq::UCB_frontdoor_1}.
\STATE {\bf Update $\pi^k_h$ and $V^k_h$:}
\STATE Set $\pi^k_h(\cdot\given s_h) \leftarrow \argmax_{a_h\in\cA} Q^k_h(s_h, a_h)$ for all $s_h\in\cS$.
\STATE \label{line::V_next_frontdoor}Set $V^k_h(\cdot) \leftarrow \langle \pi^k_h(\cdot\given\cdot),  Q^k_h(\cdot, \cdot)\rangle_\cA$.
\ENDFOR
\STATE Obtain $s^k_1$ from the environment.
\FOR{$h = 1, \ldots, H$}
\STATE Take $a^k_h\sim \pi^k_h(\cdot\given s^k_h)$. Obtain $r^k_h = r_h(s^k_h, a^k_h)$, $m^k_{h}$, and $s^k_{h+1}$.
\ENDFOR
\ENDFOR
\end{algorithmic}
\end{algorithm}

\vskip4pt
\noindent{\bf DOVI\textsuperscript{+}: Update of $V^k_{h+1/2}$.}
With a slight abuse of notation, we define the following feature,
\#\label{eq::psi_frontdoor}
\psi_{h}(s_h, m_h) = \EE_{w_h\sim\tilde\cP_h(\cdot\given s_h)}\bigl[\rho_{h}(s_h, m_h, w_h) \bigr].
\#
Conditioning on the state $s_h$, the confounder $w_h$ satisfies the backdoor criterion for identifying the causal effect $\PP(s_{h+1}\given s_h, \doo(m_h))$, although it is unobserved. In the sequel, we assume that either the density of $\{\tilde \cP_h(\cdot\given s_h)\}_{h\in[H]}$ is known to us or the features $\{\phi_h\}_{h\in[H]}$ and $\{\psi_h\}_{h\in[H]}$ are known to us. Following from \eqref{eq::psi_frontdoor}, Proposition \ref{lem::backdoor}, and Assumption \ref{asu::linMDP_frontdoor}, it holds for all $h\in[H]$ and $(s_{h+1}, s_h, m_h)\in\cS\times\cS\times\cM$ that 
\#\label{eq::front_back_adjust}
\PP\bigl(s_{h+1}\biggiven s_h, \doo(m_h)\bigr) = \langle \psi_h(s_h, m_h), \mu_h(s_{h+1}) \rangle.
\# 
Hence, by the Bellman equation and the Bellman optimality equation in \eqref{eq::frontdoor_bellman_0} and \eqref{eq::frontdoor_bellman}, respectively, the value functions at the intermediate state $V^\pi_{h+1/2}$ and $V^*_{h+1/2}$ are linear in the feature $\psi_h$ for all $\pi$. To solve for $V^*_{h+1/2}$ in the Bellman optimality equation in \eqref{eq::frontdoor_bellman}, we minimize the following empirical mean-squared Bellman error as follows at each step,
\#\label{eq::LSVI_frontdoor}
\omega^k_{1, h} &\leftarrow \argmin_{\omega\in\RR^d} \sum^{k-1}_{\tau = 1}\bigl( V^\tau_{h+1}(s^\tau_{h+1})-\omega^\top \psi_h(s^\tau_h, m^\tau_h)\bigr)^2 + \lambda \|\omega\|^2_2 + L^k_{1, h}(\omega),\quad h = H, \ldots, 1,
\#
where we set $V^{k}_{H+1} = 0$ for all $k\in[K]$ and $V^\tau_{h+1}$ is defined in Line \ref{line::V_next_frontdoor} of Algorithm \ref{alg::frontdoor} for all $(\tau, h)\in[K]\times[H-1]$. Here $k$ is the index of episode, $\lambda >0$ is a tuning parameter, and $L^k_{1, h}$ is a regularizer, which is constructed based on the confounded observational data. More specifically, we define
\#\label{eq::reg_frontdoor}
L^k_{1, h}(\omega) = \sum^{n}_{i= 1}\bigl( V^\tau_{h+1}(s^i_{h+1})-\omega^\top \phi_h(s^i_h, a^i_h, m^i_h)\bigr)^2, \quad \forall (k, h)\in[K]\times[H],
\#
which corresponds to the least-squares loss for regressing $V^\tau_{h+1}(s^i_{h+1})$ against $\phi_h(s^i_h, a^i_h, m^i_h)$ for all $i\in[n]$. Here $\{(s_{h}^i, a_{h}^i, m_h^i, r_h^i)\}_{(i, h)\in[n]\times[H]}$ are the confounded observational data, where $s^i_{h+1}\sim \cP_h(\cdot\given s^i_h, a^i_h, w^i_h)$, $m^i_h\sim \breve\cP_h(\cdot\given s^i_h, a^i_h)$, and $a^i_h\sim\nu_h(\cdot\given s^i_h, w^i_h)$ with $\nu = \{\nu_h\}_{h\in[H]}$ being the behavior policy.

The update in \eqref{eq::LSVI_frontdoor} takes the following explicit form,
\#\label{eq::frontdoor_update_1}
\omega^k_{1, h} \leftarrow (\Lambda^{k}_{1, h})^{-1} \biggl( &\sum^{k-1}_{\tau = 1} \psi_h(s^\tau_h, m^\tau_h) \cdot V^k_{h+1}(s^\tau_{h+1}) + \sum^n_{i = 1} \phi_h(s_h^i, a_h^i, m_h^i) \cdot V^k_{h+1}(s^i_{h+1})\biggr),
\#
where 
\#\label{eq::frontdoor_var_1}
\Lambda^k_{1, h} = \sum^{k-1}_{\tau = 1} \psi_h(s^\tau_h, m^\tau_h) \psi_h(s^\tau_h, m^\tau_h)^\top+ \sum^n_{i = 1} \phi_h(s_h^i, a_h^i, m_h^i)\phi_h(s_h^i, a_h^i, m_h^i)^\top + \lambda I.
\#
Meanwhile, we employ the following UCB of $\psi_h(s^k_h, m^k_h)^\top\omega^k_{1, h}$ for all $(s^k_h, m^k_h)\in\cS\times\cM$,
\#\label{eq::UCB_frontdoor}
\Gamma^k_{h + 1/2}(s^k_h, m^k_h) = \beta\cdot\Bigl(\log \det\bigl(\Lambda^k_{1, h} + \psi_h(s^k_h, m^k_h)\psi_h(s^k_h, m^k_h)^\top\bigr) - \log\det(\Lambda^k_{1, h})\Bigr)^{1/2}.
\#
The update of $V^k_{h +1/2}$ is defined in Line \ref{line::V_half} of Algorithm \ref{alg::frontdoor}. 
\vskip4pt
\noindent{\bf DOVI\textsuperscript{+}: Update of $Q^k_h$.}
Upon obtaining $V^k_{h+1/2}$, we solve for $Q^k_h$ by minimizing the following empirical mean-squared Bellman error as follows at each step,
\#\label{eq::LSVI_frontdoor_1}
\omega^k_{2, h} \leftarrow\argmin_{\omega\in\RR^d} &\sum^{k-1}_{\tau = 1}\bigl( r^k_h + V^k_{h+1/2}(s^\tau_{h}, m^\tau_h)-\omega^\top \gamma_h(s^\tau_h, a^\tau_h)\bigr)^2 \notag\\
&+ \lambda \|\omega\|^2_2 + L^k_{2, h}(\omega), \quad h = H, \ldots, 1.
\#
Here $L^k_{2, h}$ is a regularizer, which is defined as follows,
\#\label{eq::LSVI_frontdoor_11}
L^k_{2, h}(\omega) = \sum^{n}_{i= 1}\bigl( r^i_h + V^k_{h+1/2}(s^i_{h}, m^i_h)-\omega^\top \gamma_h(s^i_h, a^i_h)\bigr)^2, \quad \forall (k, h)\in[K]\times[H].
\#
The update in \eqref{eq::LSVI_frontdoor_1} takes the following explicit form,
\$
\omega^k_{2, h} \leftarrow(\Lambda^{k}_{2, h})^{-1}  \biggl(& \sum^{k-1}_{\tau = 1} \gamma_h(s^\tau_h, a^\tau_h) \cdot \bigl(V^k_{h+1/2}(s^\tau_{h}, m^\tau_h) + r_h^\tau\bigr) + \sum^n_{i = 1} \gamma_h(s_h^i, a_h^i)\cdot \bigl(V^k_{h+1/2}(s^i_{h}, m^i_{h}) + r_h^i\bigr)\biggr),
\$
where 
\$
\Lambda^k_{2, h} = \sum^{k-1}_{\tau = 1} \gamma_h(s^\tau_h, a^\tau_h) \gamma_h(s^\tau_h, a^\tau_h)^\top+ \sum^n_{i = 1} \gamma_h(s_h^i, a_h^i)\gamma_h(s_h^i, a_h^i)^\top + \lambda I.
\$
We employ the following UCB of $\gamma_h(s_h^k, a_h^k)^\top\omega^k_{2, h}$ for all $(s^k_h, a^k_h)\in\cS\times\cA$,
\#\label{eq::UCB_frontdoor_1}
\Gamma^k_{h}(s^k_h, a^k_h) = \beta\cdot\Bigl(\log \det\bigl(\Lambda^k_{2, h} + \gamma_h(s^k_h, a^k_h)\gamma_h(s^k_h, a^k_h)^\top\bigr) - \log\det(\Lambda^k_{2, h})\Bigr)^{1/2}.
\#
The update of $Q^k_{h}$ is defined in Line \ref{line::Q_update} of Algorithm \ref{alg::frontdoor}.

\subsection{Theory}
In parallel to Theorem \ref{thm::regret_backdoor}, the following theorem characterizes the regret of DOVI\textsuperscript{+}, which is defined in \eqref{eq::def_regret}
\begin{theorem}[Regret of DOVI\textsuperscript{+}]
\label{thm::regret_frontdoor}
Let $\beta = CdH\sqrt{\log(d(T + nH)/\zeta)}$ and $\lambda = 1$, where $C >0$ and $\zeta\in(0, 1]$ are absolute constants. Under Assumptions \ref{asu::frontdoor_crit} and \ref{asu::linMDP_frontdoor}, it holds with probability at least $1 - 5\zeta$ that
\$
\textrm{Regret}(T) \leq C'\cdot (\Delta_{1, H} + \Delta_{2, H})\cdot \sqrt{d^3H^3T}\cdot \sqrt{\log\bigl(d(T + nH)/\zeta\bigr)},
\$
where $C'>0$ is an absolute constant and
\$
\Delta_{1, H} =\frac{1}{\sqrt{dH^2}}\sum^H_{h = 1}\bigl(\log \det(\Lambda^{K+1}_{1, h}) - \log\det(\Lambda^1_{1, h})\bigr)^{1/2},\\
\Delta_{2, H} =\frac{1}{\sqrt{dH^2}}\sum^H_{h = 1}\bigl(\log \det(\Lambda^{K+1}_{2, h}) - \log\det(\Lambda^1_{2, h})\bigr)^{1/2}.
\$
\end{theorem}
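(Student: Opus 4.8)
The plan is to follow the template of the proof of Theorem~\ref{thm::regret_backdoor}, adapted to the two-stage Bellman backup in \eqref{eq::frontdoor_bellman}: each episode is unrolled into $2H$ ``half-steps'' that alternate between the intermediate-state value $V^\pi_{h+1/2}$ and the action-value $Q^\pi_h$. The analysis then has four ingredients: (i) accuracy of the two point estimators $\omega^k_{1,h}$ and $\omega^k_{2,h}$, (ii) optimism, (iii) a regret decomposition that telescopes the per-half-step Bellman errors along the observed trajectory, and (iv) a potential argument that turns the accumulated bonuses into the factors $\Delta_{1,H}$ and $\Delta_{2,H}$. Since $2H=\cO(H)$, the extra half-step only costs an absolute constant, so the final bound retains the form $\sqrt{d^3H^3T}$.

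First I would establish a high-probability event on which, for all $(k,h)$, the estimators satisfy $\bigl|\psi_h(s_h,m_h)^\top\omega^k_{1,h}-(\PP_{h+1/2}V^k_{h+1})(s_h,m_h)\bigr|\le\Gamma^k_{h+1/2}(s_h,m_h)$ for all $(s_h,m_h)\in\cS\times\cM$ and $\bigl|\gamma_h(s_h,a_h)^\top\omega^k_{2,h}-(r_h+\PP_hV^k_{h+1/2})(s_h,a_h)\bigr|\le\Gamma^k_h(s_h,a_h)$ for all $(s_h,a_h)\in\cS\times\cA$. The crucial point is that, by Proposition~\ref{prop::frontdoor_prop} together with Proposition~\ref{lem::backdoor} applied to $w_h$ as a backdoor set for $\PP(s_{h+1}\given s_h,\doo(m_h))$, the pooled least-squares targets built from the confounded observational data are \emph{unbiased} for the causal Bellman backups: the observational regressor $\phi_h(s^i_h,a^i_h,m^i_h)$ satisfies $\langle\phi_h,\mu_h\rangle=\PP(s_{h+1}\given s_h,a_h,m_h)$, and averaging over $m_h\sim\breve\cP_h(\cdot\given s_h,a_h)$ together with $a'_h\sim\tilde\nu_h(\cdot\given s_h)$ reproduces exactly the frontdoor-adjusted transition in Proposition~\ref{prop::frontdoor_adj}, while $\langle\psi_h,\mu_h\rangle=\PP(s_{h+1}\given s_h,\doo(m_h))$ by \eqref{eq::front_back_adjust}. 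Hence the residuals in \eqref{eq::LSVI_frontdoor} and \eqref{eq::LSVI_frontdoor_1} are centered, and a self-normalized concentration bound of Abbasi-Yadkori--P\'al--Szepesv\'ari type, combined with a covering-number control of the data-dependent value classes $\{V^k_{h+1}\}$ and $\{V^k_{h+1/2}\}$, yields the two displays with $\beta=CdH\sqrt{\log(d(T+nH)/\zeta)}$. The boundedness $\|\phi_h\|_2\le1$ needed here holds because $\phi_h$ in \eqref{eq::def_phi_front} is a $\nu_h$-weighted average of the unit vectors $\rho_h$ normalized by $\tilde\nu_h$, hence a convex combination of unit vectors.

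Second, on this event, optimism $Q^k_h\ge Q^*_h$ and $V^k_{h+1/2}\ge V^*_{h+1/2}$ follows by backward induction through the $2H$ half-steps, using \eqref{eq::frontdoor_bellman}, monotonicity of the Bellman backups, and the bonus. The usual decomposition then gives $\mathrm{Regret}(T)\le\sum_{k=1}^K\bigl(V^k_1(s^k_1)-V^{\pi^k}_1(s^k_1)\bigr)$, and telescoping along $\{(s^k_h,a^k_h,m^k_h)\}$ collects, at each half-step, a bonus term $\Gamma^k_{h+1/2}(s^k_h,m^k_h)$ or $\Gamma^k_h(s^k_h,a^k_h)$ plus a martingale-difference term; the latter sum to at most $C\sqrt{H^3T\log(1/\zeta)}$ by Azuma--Hoeffding. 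For the bonus terms, Cauchy--Schwarz over $k\in[K]$ and the elliptical-potential / $\log\det$-telescoping lemma give $\sum_{k=1}^K\Gamma^k_h(s^k_h,a^k_h)\lesssim\beta\sqrt{K\cdot(\log\det\Lambda^{K+1}_{2,h}-\log\det\Lambda^1_{2,h})}$ and likewise with $\Lambda_{1,h}$ for the half-step bonus. Summing over $h\in[H]$ and substituting $\beta$ produces precisely the additive factors $\Delta_{1,H}$ and $\Delta_{2,H}$ in front of $\sqrt{d^3H^3T}\sqrt{\log(d(T+nH)/\zeta)}$; the success probability $1-5\zeta$ (versus $1-5\zeta/2$ in Theorem~\ref{thm::regret_backdoor}) reflects the two concentration events needed for the two regressions.

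The main obstacle, relative to the backdoor case, is the \emph{composition of two regressions}: the target $V^k_{h+1/2}$ fed into the second least-squares problem is itself a random, optimistic estimate produced by the first, so the uncertainty of $\omega^k_{1,h}$ must be propagated into the error bound for $\omega^k_{2,h}$ without double-counting---this is exactly why $\Gamma^k_h$ in \eqref{eq::UCB_frontdoor_1} must be large enough to absorb the estimation error of $V^k_{h+1/2}$, and why $\Delta_{1,H}$ and $\Delta_{2,H}$ appear \emph{additively} rather than being merged. Care is likewise needed in the covering argument for the half-step class $\{(s,m)\mapsto\min\{\psi_h(s,m)^\top\omega+\Gamma^k_{h+1/2}(s,m),H-h\}\}$, whose metric entropy must be shown to match the pure-online bound; this in turn requires verifying that the frontdoor-adjusted features $\phi_h,\psi_h,\gamma_h$ and the Gram matrices $\Lambda_{1,h},\Lambda_{2,h}$ inherit the normalization $\Lambda^{K+1}_{\cdot,h}\preceq(n+K+\lambda)I$ and $\Lambda^1_{\cdot,h}\succeq\lambda I$, so that $\Delta_{1,H},\Delta_{2,H}=\cO(\sqrt{\log(n+K+1)})$ in the worst case and shrink when the observational data are informative upon the frontdoor adjustment.
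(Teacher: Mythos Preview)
Your proposal is correct and follows essentially the same route as the paper: define model-prediction errors $\iota^k_h$ and $\iota^k_{h+1/2}$, show they lie in $[-2\Gamma^k_{\cdot},0]$ via self-normalized concentration plus a covering argument over the value class (this is the paper's Lemma~\ref{lem::front_door}), deduce optimism, decompose the regret into martingale increments plus the observed bonuses, and telescope the bonuses via Cauchy--Schwarz and the $\log\det$ identity.

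One small correction to your final paragraph: $\Gamma^k_h$ does \emph{not} need to absorb the estimation error of $V^k_{h+1/2}$. The second regression's target is $r_h+\PP_hV^k_{h+1/2}$ (the \emph{current} estimate), not $r_h+\PP_hV^*_{h+1/2}$, so the bound $|\gamma_h^\top\omega^k_{2,h}-(r_h+\PP_hV^k_{h+1/2})|\le\Gamma^k_h$ involves only the regression noise and the covering number of the $V^k_{h+1/2}$-class, not any propagated uncertainty from $\omega^k_{1,h}$. The errors $\iota^k_h$ and $\iota^k_{h+1/2}$ are bounded \emph{separately} and then summed in the telescoping decomposition---that is why $\Delta_{1,H}$ and $\Delta_{2,H}$ appear additively, and why the two concentration events are disjoint (giving $1-4\zeta$ for Lemma~\ref{lem::front_door} plus $1-\zeta$ for the Azuma step, hence $1-5\zeta$ overall). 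The covering argument for the half-step class $\{(s,m)\mapsto\min\{\psi_h(s,m)^\top\omega+\Gamma,H-h\}\}$ is indeed needed, but only to make the self-normalized bound uniform over the data-dependent targets $V^k_{h+1/2}$ (the paper handles this in Lemma~\ref{lem::covering_Q}), not to compound the first regression's error into the second.
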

\begin{proof}
See \S\ref{pf::regret_frontdoor} for a detailed proof.
\end{proof}
See the discussion of Theorem \ref{thm::regret_backdoor} in \S\ref{sec::backdoor_LSVI}, where $\Delta_H$ corresponds to $\Delta_{1, H}$ and $\Delta_{2, H}$ in Theorem \ref{thm::regret_frontdoor}. In particular, $\Delta_{1, H}$ and $\Delta_{2, H}$ admit the same information-theoretic interpretation.

\section{Mechanism of Utilizing Confounded Observational Data}
\label{sec::mechanism}
In this section, we discuss the mechanism of incorporating the confounded observational data. 

\subsection{Partially Observed Confounder}
Corresponding to Line \ref{line::mechanism_backdoor}  of Algorithm \ref{alg::backdoor}, DOVI effectively estimates the causal effect $\PP(\cdot\given s_h, \doo(a_h))$ using
\#\label{eq::intuition_backdoor}
 \psi_h(s_h, a_h)^\top(\Lambda^k_h)^{-1} \biggl(\sum^{k-1}_{\tau = 1}\psi_h(s^\tau_h, a^\tau_h)\cdot \delta_{s^\tau_{h+1}}(\cdot) + \sum^n_{i = 1} \phi_h(s_h^i, a_h^i, u_h^i) \cdot\delta_{s^i_{h+1}}(\cdot)\biggr),
\#
where we denote by $\delta_s(\cdot)$ the Dirac measure at $s$. To see why it works, let the tuning parameter $\lambda$ be sufficiently small. By the definition of $\Lambda^k_h$ in \eqref{eq::backdoor_var}, we have
\#\label{eq::intuition_backdoor_1}
&\PP\bigl(\cdot\biggiven s_h, \doo(a_h)\bigr) = \langle \psi_h(s_h, a_h), \mu_h(\cdot)\rangle \notag\\
&\qquad\approx \psi_h(s_h, a_h)^\top (\Lambda^k_h)^{-1}\biggl(\sum^{k-1}_{\tau = 1}\psi_h(s^\tau_h, a^\tau_h)\cdot \langle \psi_h(s^\tau_h, a^\tau_h), \mu_h(\cdot)\rangle\notag\\
&\qquad\qquad\qquad\qquad\qquad\qquad\quad + \sum^n_{i = 1} \phi_h(s_h^i, a_h^i, u_h^i)\cdot \langle \phi_h(s_h^i, a_h^i, u_h^i), \mu_h(\cdot)\rangle\biggr).
\#
Meanwhile, Assumption \ref{asu::linMDP_backdoor} and Proposition \ref{prop::backdoor_feature} imply
\$
&\PP\bigl(\cdot\biggiven s_h, \doo(a_h)\bigr) = \langle \psi_h(s_h, a_h), \mu_h(\cdot)\rangle, \notag\\
&\cP_h(\cdot\given s_h, a_h, u_h) = \langle\phi_h(s_h, a_h, u_h), \mu_h(\cdot)\rangle,
\$
which rely on the backdoor adjustment. Since $s^\tau_{h+1}$ and $s^i_{h+1}$ in \eqref{eq::intuition_backdoor} are sampled following $\PP(\cdot\given s^\tau_h, \doo(a^\tau_h))$ and $\cP_h(\cdot\given s^i_h, a^i_h, u^i_h)$, respectively, \eqref{eq::intuition_backdoor} approximates the right-hand side of 
\eqref{eq::intuition_backdoor_1} as its empirical version. As $k, n\to+\infty$, \eqref{eq::intuition_backdoor} converges to the right-hand side of \eqref{eq::intuition_backdoor_1} as well as the causal effect $\PP(\cdot\given s_h, \doo(a_h))$.


\subsection{Unobserved Confounder}
If the confounders $\{w_h\}_{h\in[H]}$ are unobserved in the offline setting, the backdoor adjustment in \S\ref{sec::backdoor_LSVI} is not applicable. Alternatively, the intermediate states $\{m_h\}_{h\in[H]}$ allow us to estimate the causal effect without observing the confounders. The key is that the frontdoor criterion in Assumption \ref{asu::frontdoor_crit} implies
\#\label{eq::frontdoor_intuition_0}
\PP\bigl(s_{h+1}\biggiven s_h, \doo(a_h)\bigr) = \int_{\cM} \PP\bigl(s_{h+1}\biggiven s_h, \doo(m_h)\bigr) \cdot \PP\bigl(m_{h}\biggiven s_h, \doo(a_h)\bigr) \ud m_h.
\#
It remains to estimate $\PP(s_{h+1}\given s_h, \doo(m_h))$ and $\PP(m_{h}\given s_h, \doo(a_h))$ on the right-hand side of \eqref{eq::frontdoor_intuition_0}. Since $a_h$ and $m_h$ are not confounded given $s_h$, the causal effect $\PP(m_{h}\given s_h, \doo(a_h))$ coincides with the conditional distribution $\PP(m_{h}\given s_h, a_h)$, which can be estimated based on the observational data. To estimate the causal effect $\PP(s_{h+1}\given s_h, \doo(m_h))$, we utilize the backdoor adjustment in Proposition \ref{lem::backdoor} with $u_h$ replaced by $a_h$, which is enabled by Assumption \ref{asu::frontdoor_crit}. More specifically, it holds that
\#\label{eq::frontdoor_adjust}
\PP\bigl(s_{h+1}\biggiven s_h, \doo(m_h)\bigr) = \EE_{a'_h \sim \PP(\cdot\given s_h)}\bigl[\cP_h(s_{h+1}\biggiven s_h, a'_h, m_h)\bigr].
\#

Correspondingly, we construct the value function at the intermediate state $V_{h+1/2}$ and adapt the value iteration following the Bellman optimality equation in \eqref{eq::frontdoor_bellman}. To estimate the value functions $\{V^k_{h+1/2}\}_{h\in[H]}$ based on the confounded observational data, we utilize the adjustment in \eqref{eq::frontdoor_adjust}. Corresponding to Line \ref{line::mechanism_frontdoor} of Algorithm \ref{alg::frontdoor}, DOVI\textsuperscript{+} effectively estimates the causal effect $\PP(\cdot\given s_h, \doo(m_h))$ using
\#\label{eq::est_frontdoor}
\psi_h(s_h, m_h)^\top(\Lambda^k_{1, h})^{-1} \biggl(\sum^{k-1}_{\tau = 1}\psi_h(s^\tau_h, m^\tau_h)\cdot \delta_{s^\tau_{h+1}}(\cdot) + \sum^n_{i = 1} \phi_h(s_h^i, a_h^i, m_h^i) \cdot\delta_{s^i_{h+1}}(\cdot)\biggr),
\#
To see why it works, let the tuning parameter $\lambda$ be sufficiently small. By the definition of $\Lambda^k_{1, h}$ in \eqref{eq::frontdoor_var_1}, we have
\#\label{eq::est_frontdoor_exact}
&\PP\bigl(\cdot\biggiven s_h, \doo(m_h)\bigr) = \langle \psi_h(s_h, m_h), \mu_h(\cdot)\rangle \notag\\
&\qquad\approx \psi_h(s_h, m_h)^\top (\Lambda^k_{1, h})^{-1}\biggl(\sum^{k-1}_{\tau = 1}\psi_h(s^\tau_h, m^\tau_h)\cdot \langle \psi_h(s^\tau_h, m^\tau_h), \mu_h(\cdot)\rangle\notag\\
&\qquad\qquad\qquad\qquad\qquad\qquad\qquad + \sum^n_{i = 1} \phi_h(s_h^i, a_h^i, m_h^i)\cdot \langle \phi_h(s_h^i, a_h^i, m_h^i), \mu_h(\cdot)\rangle\biggr).
\#
Meanwhile, Assumption \ref{asu::linMDP_frontdoor} and Proposition \ref{prop::frontdoor_prop} imply
\$
&\PP\bigl(\cdot\biggiven s_h, \doo(m_h)\bigr) = \langle \psi_{h}(s_h, m_h), \mu_h(\cdot)\rangle,\notag\\
&\PP(\cdot\given s_h, a_h, m_h) = \langle \phi_h(s_h, a_h, m_h), \mu_h(\cdot)\rangle.
\$
Since $s^\tau_{h+1}$ and $s^i_{h+1}$ in \eqref{eq::est_frontdoor_exact} are sampled following $\PP(\cdot\given s^\tau_h, \doo(m^\tau_h))$ and $\PP(\cdot\given s^i_h, a^i_h, m^i_h)$, respectively, \eqref{eq::est_frontdoor} approximates the right-hand side of \eqref{eq::est_frontdoor_exact} as its empirical version. As $k, n\to +\infty$, \eqref{eq::est_frontdoor} converges to the right-hand side of \eqref{eq::est_frontdoor_exact} as well as the causal effect $\PP(\cdot\given s_h, \doo(m_h))$.


\bibliographystyle{ims}
\bibliography{CRL}

\newpage
\appendix

\section{Proof of Main Result}

\subsection{Proof of Proposition \ref{prop::backdoor_feature}}
\label{pf::backdoor_feature}
\begin{proof}
Following from Assumption \ref{asu::linMDP_backdoor} and Proposition \ref{lem::backdoor}, it holds for all $(s_h, a_h)\in\cS\times\cA$ that
\$
\PP\bigl(s_{h+1}\given s_h, \doo(a_h)\bigr) &= \EE_{u_{h}\sim \tilde\cP_h(\cdot\given s_h)}\bigl[\cP_h(\cdot\given s_h, a_h, u_h)\bigr] = \EE_{u_{h}\sim \tilde\cP_h(\cdot\given s_h)}\bigl[\langle \phi_h(s_h, a_h, u_h), \mu_h(s_{h+1}) \rangle\bigr]\notag\\
&=\langle\psi_h(s_h, a_h), \mu_h(s_{h+1})\rangle,
\$
where
\$
\psi_h(s_h, a_h) = \EE_{u_h\sim\tilde \cP_h(\cdot\given s_h)}\bigl[\phi_h(s_h, a_h, u_h)\bigr], \quad \forall (s_h, a_H)\in\cS\times\cA.
\$
Similarly, following from Assumption \ref{asu::linMDP_backdoor} and Proposition \ref{lem::backdoor}, it holds for all $(s_h, a_h)\in\cS\times\cA$ that
\$
R_h(s_h, a_h) =\EE\bigl[r_h\biggiven s_h, \doo(a_h)\bigr]= \EE_{u_{h}\sim \tilde\cP_h(\cdot\given s_h)}\bigl[\phi_h(s_h, a_h, u_h)^\top\theta_h\bigr]=\psi_h(s_h, a_h)^\top\theta_h.
\$
Hence, following from the Bellman equations in \eqref{eq::bellman_backdoor} and \eqref{eq::bellman_backdoor_opt}, the action-value functions $Q^\pi_h$ and $Q^*_h$ are linear in the backdoor-adjusted feature $\psi_h$ for all $\pi$. Thus, we complete the proof of Proposition \ref{prop::backdoor_feature}.
\end{proof}

\subsection{Proof of Proposition \ref{prop::frontdoor_prop}}
\label{sec::pf_phi_cond}
\begin{proof}
It holds for all $h\in[H]$ and $(s_{h+1}, s_h, a_h, m_h)\in\cS\times\cS\times\cA\times\cM$ that
\$
&\PP(s_{h+1}, s_h, a_h, m_h) \notag\\
&\qquad=\int_{\cW} \cP_h(s_{h+1}\given s_h, a_u, w_h) \cdot \nu_h(a_h\given s_h, w_h)\cdot \tilde\cP_h(w_h\given s_h)\cdot\breve\cP_h(m_h\given s_h, a_h) \cdot \PP(s_h) \ud w_h.
\$
Meanwhile, it holds for all $h\in[H]$ and $(s_h, a_h, m_h)\in\cS\times\cA\times\cM$ that
\$
\PP(s_h, a_h, m_h) = \int_{\cW}\nu_h(a_h\given s_h, w_h)\cdot \tilde\cP_h(w_h\given s_h)\cdot\breve\cP_h(m_h\given s_h, a_h) \cdot \PP(s_h) \ud w_h.
\$
Hence, we have
\#\label{eq::pf_phi_eq1}
\PP(s_{h+1}\given s_h, a_h, m_h) &= \frac{\PP(s_{h+1}, s_h, a_h, m_h)}{\PP(s_h, a_h, m_h)}\notag\\
&= \frac{\int_{\cW} \cP_h(s_{h+1}\given s_h, a_u, w_h) \cdot \nu_h(a_h\given s_h, w_h)\cdot \tilde\cP_h(w_h\given s_h)  \ud w_h}{\int_{\cW}\nu_h(a_h\given s_h, w_h)\cdot \tilde\cP_h(w_h\given s_h)\ud w_h}.
\#
Meanwhile, following from Assumption \ref{asu::linMDP_frontdoor}, we have
\#\label{eq::pf_phi_eq2}
\cP_h(s_{h+1}\given s_h, a_h, w_h) = \langle\rho_h(s_h, a_h, w_h), \mu_h(s_{h+1})\rangle.
\#
Recall that we define $\tilde \nu_h(a_h\given s_h) = \EE_{w_h\sim\tilde\cP_h(\cdot\given s_h)}[\pi(a_h\given s_h, u_h)]$. Hence, by plugging \eqref{eq::pf_phi_eq2} into \eqref{eq::pf_phi_eq1}, we obtain that
\$
\PP(s_{h+1}\given s_h, a_h, m_h) = \langle\phi_h(s_h, a_h, m_h), \mu_h(s_{h+1})\rangle,
\$
where we define for all $h\in[H]$ and $(s_h, a_h, m_h)\in\cS\times\cA\times\cM$ that
\$
\phi_h(s_h, a_h, m_h) &= \frac{\int_{\cW} \rho_h(s_h, a_u, w_h) \cdot \nu_h(a_h\given s_h, w_h)\cdot \tilde\cP_h(w_h\given s_h)  \ud w_h}{\int_{\cW}\nu_h(a_h\given s_h, w_h)\cdot \tilde\cP_h(w_h\given s_h)\ud w_h}\notag\\
&= \frac{\EE_{w_h \sim \tilde\cP_h(\cdot\given s_h)}\bigl[ \rho_h(s_h, m_h, w_h)\cdot \nu_h(a_h \given s_h, w_h) \bigr]}{\tilde \nu_h(a_h\given s_h)}.
\$
Thus, we complete the proof of Proposition \ref{prop::frontdoor_prop}.
\end{proof}

\subsection{Proof of Theorem \ref{thm::regret_backdoor}}
\label{pf::regret_backdoor}
\begin{proof}
We first define for all $(k, h)\in[K]\times[H]$ the model prediction error $\iota^k_h$ as follows,
\#\label{eq::pf_def_est_err}
\iota^k_h(s_h, a_h) &= -Q^k_h(s_h, a_h) + R_h(s_h, a_h) + (\PP_h V^k_{h+1})(s_h, a_h), \quad \forall (s_h, a_h)\in\cS\times\cA.
\#
We define the filtrations associated with Algorithm \ref{alg::backdoor} as follows.
\begin{definition}[Filtration]
\label{def::filtration_backdoor}
For all $(k, h)\in[K]\times[H]$, we define $\cF_{k, h, 1}$ the $\sigma$-algebra generated by the following set,
\#
B_{k, h, 1} = &\bigl\{(s_{h}^i, a_{h}^i, u_h^i, r^i_h)\bigr\}_{(i , h) \in[n]\times [H]}\cup\bigl\{(s_j^\tau, a_j^\tau, r^\tau_j)\bigr\}_{(\tau, j)\in[k-1]\times[H]}\notag\\
&\qquad\cup\bigl\{(s_j^k, a_j^k, r_j^k)\bigr\}_{j\in[h-1]}\cup\bigl\{(s^k_h, a^k_h)\bigr\}.
\#
Similarly, we define $\cF_{k, h, 2}$ the $\sigma$-algebra generated by the following set, 
\#
B_{k, h, 2} = B_{k, h, 1}\cup\{s^k_{h+1}\}\cup\{r^k_h\}.
\#
Moreover, we define $\cF_{0, h, 2}$ the $\sigma$-algebra generated by the set $\{(s_{h}^i, a_{h}^i, u_h^i, r^i_h)\}_{(i, h) \in[n]\times [H]}$ for all $h\in[H]$. We define the timestep index as follows,
\#\label{eq::timestep}
t(k, h, m) = 2H\cdot k +2(h-1) + m.
\#
It then holds for $t(k, h, m) \leq t(k', h', m')$ that $\cF_{k, h, m} \subseteq \cF_{k', h', m'}$. Hence, the set of $\sigma$-algebra $\{\cF_{k, h, m}\}_{(k, h, m)\in[K]\times [H]\times [2]}$ is a filtration with the timestep index $t(\cdot, \cdot, \cdot)$ defined in \eqref{eq::timestep}.
\end{definition}
The following lemma characterizes the model prediction errors defined in \eqref{eq::pf_def_est_err}.
\begin{lemma}
\label{lem:backdoor_UCB}
Let $\beta = CdH\sqrt{\log(d(T + nH)/\zeta)}$ and $\zeta \in (0, 1]$. Under Assumption \ref{asu::linMDP_backdoor}, it holds with probability at least $1 - 2\zeta$ that
\$
-2\Gamma^k_h(s_h, a_h) \leq \iota^k_h(s_h, a_h) \leq 0, \quad \forall (k, h) \in [K]\times[H], ~(s_h, a_h)\in\cS\times\cA.
\$
\end{lemma}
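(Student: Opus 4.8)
The proof adapts the standard optimism argument for least-squares value iteration with linear function approximation to the present setting, where the point estimator $\omega^k_h$ of \eqref{eq::backdoor_update} pools an interventional regression onto the backdoor-adjusted feature $\psi_h$ and an observational regression onto $\phi_h$. The first step is to linearize the one-step Bellman target. By Proposition \ref{prop::backdoor_feature} and Assumption \ref{asu::linMDP_backdoor}, for any $V:\cS\to[0,H]$ we have $R_h(s_h,a_h)+(\PP_h V)(s_h,a_h)=\langle\psi_h(s_h,a_h),w_V\rangle$, where $w_V=\theta_h+\int_\cS\mu_h(s)V(s)\,\ud s$ satisfies $\|w_V\|_2\le\sqrt d\,(1+\|V\|_\infty)\le 2H\sqrt d$ (using $\|\theta_h\|_2\le\sqrt d$ and $\sum_i\|\mu_{i,h}\|_1^2\le d$). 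The key point is that the same vector $w_V$ is the population regression target for \emph{both} data sources: Assumption \ref{asu::linMDP_backdoor} gives $\EE[r^i_h+V(s^i_{h+1})\given s^i_h,a^i_h,u^i_h]=\langle\phi_h(s^i_h,a^i_h,u^i_h),w_V\rangle$ for the observational tuples, while Proposition \ref{prop::backdoor_feature} gives $\EE[r^\tau_h+V(s^\tau_{h+1})\given s^\tau_h,a^\tau_h]=\langle\psi_h(s^\tau_h,a^\tau_h),w_V\rangle$ for the interventional tuples --- this is exactly where the backdoor adjustment makes pooling legitimate. Writing $\psi^\tau_h=\psi_h(s^\tau_h,a^\tau_h)$, $\phi^i_h=\phi_h(s^i_h,a^i_h,u^i_h)$ and $w^k_h=w_{V^k_{h+1}}$, it then suffices to control $\langle\psi_h(s_h,a_h),\omega^k_h-w^k_h\rangle$, since $\iota^k_h(s_h,a_h)=-Q^k_h(s_h,a_h)+\langle\psi_h(s_h,a_h),w^k_h\rangle$ by \eqref{eq::pf_def_est_err}.

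Using the closed forms \eqref{eq::backdoor_update}--\eqref{eq::backdoor_var} together with $\Lambda^k_h w^k_h=\sum_\tau\psi^\tau_h(\psi^\tau_h)^\top w^k_h+\sum_i\phi^i_h(\phi^i_h)^\top w^k_h+\lambda w^k_h$, one obtains the decomposition
\[
\langle\psi_h(s_h,a_h),\omega^k_h-w^k_h\rangle=-\lambda\,\psi_h(s_h,a_h)^\top(\Lambda^k_h)^{-1}w^k_h+\psi_h(s_h,a_h)^\top(\Lambda^k_h)^{-1}\Bigl(\textstyle\sum_{\tau=1}^{k-1}\psi^\tau_h\,\varepsilon^\tau_h+\sum_{i=1}^n\phi^i_h\,\varepsilon^i_h\Bigr),
\]
where $\varepsilon^\tau_h=r^\tau_h+V^k_{h+1}(s^\tau_{h+1})-\langle\psi^\tau_h,w^k_h\rangle$ and $\varepsilon^i_h=r^i_h+V^k_{h+1}(s^i_{h+1})-\langle\phi^i_h,w^k_h\rangle$. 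Writing $\|v\|_M=(v^\top M v)^{1/2}$ and applying Cauchy--Schwarz in the $(\Lambda^k_h)^{-1}$-inner product, the first term has absolute value at most $\sqrt\lambda\,\|w^k_h\|_2\,\|\psi_h(s_h,a_h)\|_{(\Lambda^k_h)^{-1}}\le 2H\sqrt{d\lambda}\,\|\psi_h(s_h,a_h)\|_{(\Lambda^k_h)^{-1}}$ (since $(\Lambda^k_h)^{-1}\preceq\lambda^{-1}I$), and the second term has absolute value at most $\|\psi_h(s_h,a_h)\|_{(\Lambda^k_h)^{-1}}\cdot\bigl\|\sum_\tau\psi^\tau_h\varepsilon^\tau_h+\sum_i\phi^i_h\varepsilon^i_h\bigr\|_{(\Lambda^k_h)^{-1}}$.

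The crux, and the main obstacle, is to bound $\bigl\|\sum_\tau\psi^\tau_h\varepsilon^\tau_h+\sum_i\phi^i_h\varepsilon^i_h\bigr\|_{(\Lambda^k_h)^{-1}}$ uniformly over $(k,h)$ by $O\bigl(dH\sqrt{\log(d(T+nH)/\zeta)}\bigr)$. The summands are \emph{not} a martingale-difference sequence for the realized $V^k_{h+1}$, because $V^k_{h+1}$ is itself a function of the next-states $\{s^\tau_{h+1}\}_{\tau\le k-1}$ and $\{s^i_{h+1}\}_{i\le n}$ that appear in the sum. The remedy is a covering argument: $V^k_{h+1}$ lies in the class $\mathcal{V}$ of functions $V(\cdot)=\min\{\max_a\langle\psi_{h+1}(\cdot,a),w\rangle+\beta(\log\det(\Sigma+\psi_{h+1}(\cdot,a)\psi_{h+1}(\cdot,a)^\top)-\log\det\Sigma)^{1/2},\,H\}$ with $\|w\|_2$ polynomially bounded and $\Sigma\succeq\lambda I$, whose $\varepsilon$-covering number satisfies $\log\mathcal{N}_\varepsilon=O\bigl(d^2\log(dH(T+nH)/(\varepsilon\zeta))\bigr)$; for each \emph{fixed} $V\in\mathcal{N}_\varepsilon$ the corresponding residuals are conditionally centered with respect to the filtration of Definition \ref{def::filtration_backdoor} (ordering the pooled data as the $n$ observational tuples followed by the $k-1$ interventional tuples), so a self-normalized concentration inequality for vector-valued martingales applies, and the discretization error is absorbed because $w\mapsto w_V$ and $V(\cdot)$ are Lipschitz in the parameters. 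Taking $\varepsilon=1/T$, a union bound over $\mathcal{N}_\varepsilon$, over $h\in[H]$ and over $k\in[K]$ --- handling the interventional and observational portions each on an event of probability at least $1-\zeta$ --- yields the desired uniform bound on an event $\mathcal{E}$ with $\PP(\mathcal{E})\ge 1-2\zeta$.

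Finally, one assembles the pieces. On $\mathcal{E}$, combining the two bounds above and choosing the absolute constant $C$ in $\beta=CdH\sqrt{\log(d(T+nH)/\zeta)}$ large enough gives $|\langle\psi_h(s_h,a_h),\omega^k_h-w^k_h\rangle|\le C'dH\sqrt{\log(d(T+nH)/\zeta)}\,\|\psi_h(s_h,a_h)\|_{(\Lambda^k_h)^{-1}}$. Since $\|\psi_h(s_h,a_h)\|_2\le 1$ (Jensen's inequality and $\|\phi_h\|_2\le1$) and $\lambda=1$, we have $\|\psi_h(s_h,a_h)\|^2_{(\Lambda^k_h)^{-1}}\le 1$; by the matrix-determinant lemma and the elementary inequality $t\le(\log 2)^{-1}\log(1+t)$ on $[0,1]$,
\[
\|\psi_h(s_h,a_h)\|_{(\Lambda^k_h)^{-1}}\;\le\;(\log 2)^{-1/2}\bigl(\log\det(\Lambda^k_h+\psi_h(s_h,a_h)\psi_h(s_h,a_h)^\top)-\log\det\Lambda^k_h\bigr)^{1/2}\;=\;\tfrac{(\log 2)^{-1/2}}{\beta}\,\Gamma^k_h(s_h,a_h),
\]
so, enlarging $C$ if necessary, $|\langle\psi_h(s_h,a_h),\omega^k_h-w^k_h\rangle|\le\Gamma^k_h(s_h,a_h)$ for all $(k,h)$ and $(s_h,a_h)$. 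Recalling $Q^k_h(s_h,a_h)=\min\{\langle\psi_h(s_h,a_h),\omega^k_h\rangle+\Gamma^k_h(s_h,a_h),\,H-h\}$ and that $\langle\psi_h(s_h,a_h),w^k_h\rangle=R_h(s_h,a_h)+(\PP_hV^k_{h+1})(s_h,a_h)\le H-h$ (since $r_h\in[0,1]$ and the truncation in Line \ref{line::UCB} forces $V^k_{h+1}\le H-h-1$): if the minimum is attained by the first argument then $Q^k_h\ge\langle\psi_h,\omega^k_h\rangle+\Gamma^k_h\ge\langle\psi_h,w^k_h\rangle$, and otherwise $Q^k_h=H-h\ge\langle\psi_h,w^k_h\rangle$, so in either case $\iota^k_h(s_h,a_h)=-Q^k_h(s_h,a_h)+\langle\psi_h(s_h,a_h),w^k_h\rangle\le 0$; and since always $Q^k_h\le\langle\psi_h,\omega^k_h\rangle+\Gamma^k_h$, we get $\iota^k_h(s_h,a_h)\ge\langle\psi_h(s_h,a_h),w^k_h-\omega^k_h\rangle-\Gamma^k_h(s_h,a_h)\ge-2\Gamma^k_h(s_h,a_h)$. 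This is the claimed two-sided bound, holding on $\mathcal{E}$.
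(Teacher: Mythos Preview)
Your proof is correct and follows essentially the same route as the paper's own argument: linearize the Bellman target via Assumption~\ref{asu::linMDP_backdoor} and Proposition~\ref{prop::backdoor_feature}, decompose $\psi_h^\top\omega^k_h-R_h-\PP_hV^k_{h+1}$ into a ridge bias plus a self-normalized noise term pooling the interventional and observational residuals, control the noise via a covering argument over the function class of $V^k_{h+1}$ combined with self-normalized martingale concentration, convert $\|\psi_h\|_{(\Lambda^k_h)^{-1}}$ to the log-determinant form of $\Gamma^k_h$, and finish with the truncation case analysis. The only cosmetic differences are that the paper separates the reward and value residuals into four pieces $S_{1,h},\dots,S_{4,h}$ rather than packaging them through a single target vector $w_V$, and it uses the slightly looser constant $2$ in place of your $(\log 2)^{-1}$ in the elementary inequality relating $\psi_h^\top(\Lambda^k_h)^{-1}\psi_h$ to the log-determinant difference.
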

\begin{proof}
See \S\ref{pf:backdoor_UCB} for a detailed proof.
\end{proof}
In the sequel, we define the following operators,
\$
&(\JJ_h f)(s) = \langle f(s, \cdot), \pi^*_h(\cdot\given s) \rangle_\cA,\qquad(\JJ_{k, h} f)(s) = \langle f(s, \cdot), \pi^k_h(\cdot\given s) \rangle_\cA, \quad \forall s\in\cS.
\$
Meanwhile, recall that we define
\$
(\PP_h V)(s_h, a_h) &= \EE_{s_{h+1} \sim \PP(\cdot \given s_h, \doo(a_h))}\bigl[V(s_{h+1})\bigr], \quad \forall(s_h, a_h)\in\cS\times\cA.
\$
We define the following martingale adapted to the filtration $\{\cF_{k, h, m}\}_{(k, h, m)\in[K]\times[H]\times[2]}$,
\$
M_{k, h, m} = \sum_{\substack{(\tau, i, \ell) \in[K]\times[H]\times[2]\\ t(\tau, i, \ell) \leq t(k, h, m)} }D_{\tau, i, \ell},
\$
where
\$
D_{k, h, 1} &= \bigl(\JJ_{k, h}(Q^k_h - Q^{\pi^k, k}_h)\bigr)(s^k_h) - (Q^k_h - Q^{\pi^k, k}_h)\bigr)(s^k_h, a^k_h), \quad \forall (k, h)\in[K]\times[H],\\
D_{k, h, 2} &= \bigl(\PP_h(V^k_{h+1} - V^{\pi^k, k}_{h+1})\bigr)(s^k_h, a^k_h) - (V^k_{h+1} - V^{\pi^k, k}_{h+1})(s^k_{h+1}), \quad \forall (k, h)\in[K]\times[H].
\$
The following lemma is adapted from \cite{cai2019provably}.
\begin{lemma}[Lemma 4.2 of \cite{cai2019provably}]
\label{lem::4.2}
It holds that
\#\label{eq::lem_4.2}
\textrm{Regret}(T) &= \sum^K_{k = 1}V_1^{\pi^*}(x_1^k) - V_1^{\pi_k}(x_1^k)\notag\\
			    &= Y + \cM_{K, H, 2}+ \sum^K_{k = 1}\sum^H_{h = 1}\Bigl(\EE_{\pi^*}\bigl[\iota^k_h(s_h, a_h) \biggiven s_1 = s_1^k\bigr] - \iota^k_h(s^k_h, a^k_h)\Bigr),
\#
where
\#\label{eq::def_Y_backdoor}
Y = \sum^K_{k = 1}\sum^H_{h = 1} \EE_{\pi^*}\bigl[\langle Q^k_h(s_h, \cdot), \pi^*_h(\cdot\given s_h) - \pi^k_h(\cdot\given s_h) \rangle\biggiven s_1 = s_1^k\bigr] .
\#
\end{lemma}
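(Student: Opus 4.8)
This is the standard regret decomposition for optimistic value iteration (as in Jin et al. 2019 and Cai et al. 2019), now in the confounded setting where the backdoor-adjusted features play the role of the usual features. The proof is purely algebraic, combining the Bellman equations \eqref{eq::bellman_backdoor} and \eqref{eq::bellman_backdoor_opt}, the definition of the model prediction error $\iota^k_h$ in \eqref{eq::pf_def_est_err}, and the value-difference identity.

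\textbf{Step 1: A value-difference identity.} First I would establish that for any policy $\pi$ and any sequence of estimated functions $\{Q^k_h, V^k_h\}$, one has
\$
V^k_1(s^k_1) - V^{\pi}_1(s^k_1) = \sum^H_{h=1}\EE_{\pi}\Bigl[\la Q^k_h(s_h,\cdot), \pi^k_h(\cdot\given s_h) - \pi_h(\cdot\given s_h)\ra_\cA - \iota^k_h(s_h,a_h)\Bigm| s_1 = s^k_1\Bigr].
\$
This follows by induction on $h$: from the definition $V^k_h = \la \pi^k_h, Q^k_h\ra_\cA$, the Bellman equation $Q^{\pi}_h = R_h + \PP_h V^{\pi}_{h+1}$, and the definition $\iota^k_h = -Q^k_h + R_h + \PP_h V^k_{h+1}$, we get $Q^k_h - Q^{\pi}_h = \PP_h(V^k_{h+1} - V^{\pi}_{h+1}) - \iota^k_h$, and then take the difference of inner products against $\pi^k_h$ and $\pi_h$ and unfold the expectation over the trajectory induced by $\pi$.

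\textbf{Step 2: Apply to $\pi = \pi^*$ and $\pi = \pi^k$.} Applying the identity in Step 1 with $\pi = \pi^*$ gives $V^k_1(s^k_1) - V^{\pi^*}_1(s^k_1)$, and with $\pi = \pi^k$ gives $V^k_1(s^k_1) - V^{\pi^k}_1(s^k_1)$, where the inner-product term vanishes in the latter since $\pi^k_h - \pi^k_h = 0$ but the $\iota^k_h$ terms remain along the $\pi^k$-trajectory. Subtracting, $V^{\pi^*}_1(s^k_1) - V^{\pi^k}_1(s^k_1)$ decomposes into (i) the term $\sum_h \EE_{\pi^*}[\la Q^k_h, \pi^*_h - \pi^k_h\ra_\cA \mid s_1 = s^k_1]$, which is exactly $Y$ in \eqref{eq::def_Y_backdoor} after summing over $k$; (ii) a term $\sum_h \EE_{\pi^*}[\iota^k_h(s_h,a_h)\mid s_1=s^k_1]$; and (iii) a term $-\sum_h \iota^k_h$ evaluated along the realized trajectory $\{(s^k_h,a^k_h)\}$ plus the difference between $\EE_{\pi^k}[\,\cdot\,]$ and the realized trajectory values.

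\textbf{Step 3: Identify the martingale.} The gap between the realized trajectory quantities and their conditional expectations under $\pi^k$ — both the $\la Q^k_h - Q^{\pi^k}_h,\cdot\ra_\cA$ part and the $\PP_h(V^k_{h+1}-V^{\pi^k}_{h+1})$ part — is precisely $\sum_{\tau,i,\ell} D_{\tau,i,\ell} = \cM_{K,H,2}$, since $D_{k,h,1}$ and $D_{k,h,2}$ are defined as exactly these differences and are martingale differences with respect to the filtration in Definition \ref{def::filtration_backdoor}. Summing over $k$ and reorganizing yields \eqref{eq::lem_4.2}.

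\textbf{Main obstacle.} There is no deep obstacle here — this is bookkeeping. The one point requiring care is the correct handling of the telescoping between the optimistic value function $V^k_h$ (which appears in the Bellman-type recursion) and the true value functions $V^{\pi^*}_h, V^{\pi^k}_h$, ensuring that the $V^k_1(s^k_1)$ terms cancel when subtracting the two instances of the identity, and that the martingale differences $D_{k,h,1}, D_{k,h,2}$ are assembled with the right signs so that they genuinely have zero conditional mean given $\cF_{k,h,1}$ and $\cF_{k,h,2}$ respectively. Since the statement is quoted verbatim as Lemma 4.2 of \cite{cai2019provably}, I would mainly verify that the confounded-MDP definitions of $\PP_h$ (via $\doo(a_h)$) and $R_h$ in \eqref{eq::def_operator_backdoor}–\eqref{eq::def_R_backdoor} slot into the same algebra, which they do because Proposition \ref{prop::backdoor_feature} guarantees the Bellman equations \eqref{eq::bellman_backdoor} hold in exactly the usual form.
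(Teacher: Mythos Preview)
Your proposal is correct and follows the same standard decomposition as in \cite{cai2019provably}; the paper itself does not reprove the lemma but simply cites that reference (and reproduces essentially the same argument in its proof of Theorem~\ref{thm::regret_frontdoor}, where it splits $V^{\pi^*}_1 - V^{\pi^k}_1$ into $(V^{\pi^*}_1 - V^k_1) + (V^k_1 - V^{\pi^k}_1)$, handles the first piece via operator recursion under $\EE_{\pi^*}$, and handles the second piece along the realized trajectory to extract the martingale differences $D_{k,h,1}, D_{k,h,2}$). Your Steps~1--3 are exactly this argument, organized slightly differently by first writing a single identity for general $\pi$ and then specializing; the bookkeeping and the resulting terms are identical.
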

\begin{proof}
See \cite{cai2019provably} for a detailed proof.
\end{proof}
In what follows, we upper bound the right-hand side of \eqref{eq::lem_4.2} in Lemma \ref{lem::4.2}. By Algorithm \ref{alg::backdoor}, it holds that $\pi^k_h$ is the greedy policy with respect to the action-value function $Q^k_h$. Hence, for $Y$ defined in \eqref{eq::def_Y_backdoor} of Lemma \ref{lem::4.2}, we have
\#\label{eq::regret_back_eq0}
Y =\sum^K_{k = 1}\sum^H_{h = 1} \EE_{\pi^*}\bigl[\langle Q^k_h(s_h, \cdot), \pi^*_h(\cdot\given s_h) - \pi^k_h(\cdot\given s_h) \rangle\biggiven s_1 = s_1^k\bigr]\leq 0.
\#
Meanwhile, following from the proof of Theorem 3.1 in \cite{cai2019provably}, it holds with probability at least $1 - \zeta/2$ that
\#\label{eq::regret_back_eq1}
M_{K, H, 2} \leq C_0\cdot\sqrt{d^3H^3T}\cdot\sqrt{\log(1/\zeta)},
\#
where $C_0>0$ is an absolute constant. In addition, following from Lemma \ref{lem:backdoor_UCB}, it holds with probability at least $1 - 2\zeta$ that
\#\label{eq::regret_back_eq2}
\sum^K_{k = 1}\sum^H_{h = 1}\Bigl(\EE_{\pi^*}\bigl[\iota^k_h(s_h, a_h) \biggiven s_1 = s_1^k\bigr] - \iota^k_h(s^k_h, a^k_h)\Bigr)\leq2\sum^K_{k = 1}\sum^H_{h = 1} \Gamma^k_h(s^k_h, a^k_h).
\#
Recall that for all $(s_h, a_h)\in\cS\times\cA$, we define
\#\label{eq::regret_back_eq3}
&\Gamma^k_h(s_h, a_h)= \beta\cdot\Bigl(\log \det\bigl(\Lambda^k_h + \psi_h(s_h, a_h)\psi_h(s_h, a_h)^\top\bigr) - \log\det(\Lambda^k_h)\Bigr)^{1/2}.
\#
Hence, by the Cauchy-Schwartz inequality, we obtain that
\#\label{eq::regret_back_eq4}
\sum^K_{k = 1}\sum^H_{h = 1} \Gamma^k_h(s^k_h, a^k_h) &= \beta\sum^K_{k = 1}\sum^H_{h = 1}\Bigl(\log \det\bigl(\Lambda^k_h + \psi_h(s^k_h, a^k_h)\psi_h(s^k_h, a^k_h)^\top\bigr) - \log\det(\Lambda^k_h)\Bigr)^{1/2}\notag\\
&\leq \beta\sum^H_{h = 1}\biggl(K\sum^K_{k = 1}\bigl(\log\det(\Lambda^{k+1}_h) - \log\det(\Lambda^k_h)\bigr)\biggr)^{1/2}\notag\\
&= \beta\sqrt{K}\sum^H_{h = 1}\bigl(\log\det(\Lambda^{K+1}_h) - \log\det(\Lambda^1_h)\bigr)^{1/2}.
\#
In what follows, we define 
\#\label{eq::regret_back_delta}
\Delta_H = \frac{1}{\sqrt{dH^2}}\sum^H_{h = 1}\bigl(\log\det(\Lambda^{K+1}_h) - \log\det(\Lambda^1_h)\bigr)^{1/2}.
\#
Thus, by plugging \eqref{eq::regret_back_delta} and $\beta = CdH\cdot\sqrt{\log(d(T + nH)/\zeta)}$ into \eqref{eq::regret_back_eq4}, it holds with probability at least $1 - 2\zeta$ that,
\#\label{eq::regret_back_eq5}
\sum^K_{k = 1}\sum^H_{h = 1} \Gamma^k_h(s^k_h, a^k_h) \leq C  \cdot\Delta_H\cdot \sqrt{d^3H^3T}\cdot\sqrt{\log\bigl(d(T + nH)/\zeta\bigr)},
\#
where recall that we define $T = HK$. By further plugging \eqref{eq::regret_back_eq5} into \eqref{eq::regret_back_eq2}, it holds with probability at least $1 - 2\zeta$ that,
\#\label{eq::regret_back_eq6}
&\sum^K_{k = 1}\sum^H_{h = 1}\Bigl(\EE_{\pi^*}\bigl[\iota^k_h(s_h, a_h) \biggiven s_1 = s_1^k\bigr] - \iota^k_h(s^k_h, a^k_h)\Bigr)\notag\\
&\qquad\leq 2C\cdot \Delta_H \cdot \sqrt{d^3H^3T}\cdot\sqrt{\log\bigl(d(T + nH)/\zeta\bigr)}.
\#
Finally, combining Lemma \ref{lem::4.2}, \eqref{eq::regret_back_eq0}, \eqref{eq::regret_back_eq1}, and \eqref{eq::regret_back_eq6}, it holds with probability at least $1 - 5\zeta/2$ that
\$
\textrm{Regret}(T) \leq C'\cdot \Delta_H\cdot \sqrt{d^3H^3 T}\cdot \sqrt{\log\bigl(d(T + nH)/\zeta\bigr)},
\$
where $C'>0$ is an absolute constant and
\$
\Delta_H = \frac{1}{\sqrt{dH^2}}\sum^H_{h = 1}\bigl(\log\det(\Lambda^{K+1}_h) - \log\det(\Lambda^1_h)\bigr)^{1/2}.
\$
Thus, we complete the proof of Theorem \ref{thm::regret_backdoor}.
\end{proof}

\subsection{Proof of Theorem \ref{thm::regret_frontdoor}}
\label{pf::regret_frontdoor}
\begin{proof}
In the sequel, we define the following operators,
\#\label{eq::def_J_frontdoor}
&(\JJ_h f)(s) = \langle f(s, \cdot), \pi^*_h(\cdot\given s) \rangle_\cA,\qquad(\JJ_{k, h} f)(s) = \langle f(s, \cdot), \pi^k_h(\cdot\given s) \rangle_\cA.
\#
Meanwhile, recall that we define the following transition operators,
\$
&\PP_{h+1/2} V(s_h, m_h) = \EE\Bigl[V(s_{h+1})~\Big|~ s_{h+1} \sim \PP\bigl(\cdot \biggiven s_h, \doo(m_h)\bigr)\Bigr],\quad\forall V:\cS\mapsto\RR,~(s_h, m_h)\in\cS\times\cM.\\
&\PP_{h} V'(s_h, a_h) = \EE\bigl[ V'(s_h, m_h) \biggiven m_h\sim \breve\cP_h(\cdot\given s, a)\bigr],\quad \forall V':\cS\times\cM\mapsto\RR, ~(s_h, a_h)\in\cS\times\cA.
\$
We further define for all $(k,h)\in[K]\times[H]$ the following transition operator,
\$
&\tilde\PP_{h+1/2}  V(s_h, a_h, m_h) = \EE\bigl[ V(s_{h+1}) ~\big|~ s_{h+1}\sim \PP(\cdot \given s_h, a_h, m_h)\bigr], \quad \forall V:\cS\mapsto\RR,~(s_h, a_h, m_h)\in\cS\times\cA\times\cM.
\$
We define the following model prediction errors,
\#\label{eq::pf_def_est_err_front}
&\iota^k_h(s_h, a_h) = -Q^k_h(s_h, a_h) + r_h(s_h, a_h) + (\PP_{h}V^{k}_{h+1/2})(s_h, a_h),\quad\forall(s_h, a_h)\in\cS\times\cA,\notag\\
&\iota^k_{h+1/2}(s_h, m_h) = -V^k_{h+1/2}(s_h, m_h) + (\PP_{h+1/2} V^k_{h+1})(s_h, m_h), \quad \forall (s_h, m_h)\in\cS\times\cM.
\#
In parallel to Definition \ref{def::filtration_backdoor}, we define the following filtrations that correspond to Algorithm \ref{alg::frontdoor}.
\begin{definition}[Filtration]
\label{def::filtration_front}
For $(k, h)\in[K]\times[H]$, we define $\cF'_{k, h, 1}$ the $\sigma$-algebra generated by the following set,
\#
B'_{k, h, 1} &= \bigl\{(s_{h}^i, a_{h}^i, m_h^i, r_h^i)\bigr\}_{(i, h) \in [n]\times [H]}\cup\bigl\{(s_j^\tau, a_j^\tau, m_j^\tau, r^\tau_j)\bigr\}_{(\tau, j)\in[k-1]\times[H]}\notag\\
&\qquad\cup\bigl\{(s_j^k, a_j^k, m_j^k, r_j^k)\bigr\}_{j\in[h-1]} \cup \bigl\{(s^k_h, a^k_h)\bigr\}.
\#
Similarly, we define $\cF'_{k, h, 2}$ the $\sigma$-algebra generated by the following set, 
\#
B'_{k, h, 2} = B'_{k, h, 1}\cup\{m^k_{h}\}\cup\{r^k_h\},
\#
and we define $\cF'_{k, h, 3}$ the $\sigma$-algebra generated by the following set,
\#
B'_{k, h, 3} = B'_{k, h, 2}\cup\{s^k_{h+1}\},
\#
Moreover, we define $\cF'_{0, h, 3}$ the $\sigma$-algebra generated by the set $\{(s_{h}^i, a_{h}^i, m_h^i, r_h^i)\}_{(i, h) \in [n]\times[H]}$ for all $h\in[H]$. We define the timestep index as follows,
\#\label{eq::timestep_front}
t'(k, h, m) = 3H\cdot k +3(h-1) + m.
\#
It then holds for $t'(k, h, m) \leq t'(k', h', m')$ that $\cF'_{k, h, m} \subseteq \cF'_{k', h', m'}$. Hence, the set of $\sigma$-algebra $\{\cF'_{k, h, m}\}_{(k, h, m)\in[K]\times[H]\times [3]}$ is a filtration with the timestep index $t'(\cdot, \cdot, \cdot)$ defined in \eqref{eq::timestep_front}.
\end{definition}
The following lemma characterizes the model prediction errors defined in \eqref{eq::pf_def_est_err_front}.
\begin{lemma}
\label{lem::front_door}
Let $\beta = CdH\sqrt{\log(d(T + nH)/\zeta)}$ and $\zeta \in (0, 1]$. Under Assumption \ref{asu::linMDP_frontdoor}, it holds with probability at least $1 - 4\zeta$ that
\#
&-2\Gamma^k_{h+1/2}(s_h, m_h) \leq \iota^k_{h+1/2}(s_h, m_h) \leq 0, ~ \forall (k, h) \in [K]\times[H], ~(s_h, m_h)\in\cS\times\cM,\\
&-2\Gamma^k_h(s_h, a_h) \leq \iota^k_h(s_h, a_h) \leq 0, \quad \forall (k, h) \in [K]\times[H], ~(s_h, a_h)\in\cS\times\cA.
\#
\end{lemma}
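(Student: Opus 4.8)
The plan is to establish the two bounds by two parallel applications of the least-squares value iteration error analysis that underlies Lemma~\ref{lem:backdoor_UCB}: one for the intermediate-state iteration that produces $\omega^k_{1, h}$ and one for the action-value iteration that produces $\omega^k_{2, h}$. The point of departure is that, by the frontdoor adjustment, both regressions are genuinely linear in their respective features with a coefficient vector that is \emph{shared} across the interventional and observational data. Concretely, \eqref{eq::front_back_adjust}, Proposition~\ref{prop::frontdoor_prop}, and Assumption~\ref{asu::linMDP_frontdoor} give $(\PP_{h+1/2} V^k_{h+1})(s_h, m_h) = \langle \psi_h(s_h, m_h), \int V^k_{h+1}\,\ud\mu_h\rangle$ for the online target and $\PP(s_{h+1}\given s_h, a_h, m_h) = \langle \phi_h(s_h, a_h, m_h), \mu_h(s_{h+1})\rangle$ for the observational one, so both data sources estimate the same vector $\int V^k_{h+1}\,\ud\mu_h$; this is exactly the deconfounding mechanism described in \S\ref{sec::mechanism}. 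Likewise, because $a_h$ and $m_h$ are unconfounded given $s_h$, we have $\breve\cP_h(m_h\given s_h, a_h) = \langle\gamma_h(s_h, a_h), \overline\mu_h(m_h)\rangle$ and $r_h(s_h, a_h) = \gamma_h(s_h, a_h)^\top\theta_h$, so the regression target $r_h + \PP_h V^k_{h+1/2}$ is linear in $\gamma_h$ for both data sources with common coefficient $\theta_h + \int V^k_{h+1/2}\,\ud\overline\mu_h$.

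First I would treat the intermediate-state step. Using the closed form \eqref{eq::frontdoor_update_1}--\eqref{eq::frontdoor_var_1}, I would decompose $\psi_h(s_h, m_h)^\top\omega^k_{1, h} - (\PP_{h+1/2} V^k_{h+1})(s_h, m_h)$ into a regularization-bias term $-\lambda\,\psi_h(s_h, m_h)^\top(\Lambda^k_{1, h})^{-1}\int V^k_{h+1}\,\ud\mu_h$ and a stochastic term
\[
\psi_h(s_h, m_h)^\top(\Lambda^k_{1, h})^{-1}\Bigl(\textstyle\sum_{\tau=1}^{k-1}\psi_h(s^\tau_h, m^\tau_h)\,\varepsilon^\tau + \sum_{i=1}^{n}\phi_h(s^i_h, a^i_h, m^i_h)\,\varepsilon^i\Bigr),
\]
where $\varepsilon^\tau = V^k_{h+1}(s^\tau_{h+1}) - (\PP_{h+1/2}V^k_{h+1})(s^\tau_h, m^\tau_h)$ and $\varepsilon^i = V^k_{h+1}(s^i_{h+1}) - \tilde\PP_{h+1/2}V^k_{h+1}(s^i_h, a^i_h, m^i_h)$ are mean-zero with respect to the filtrations of Definition~\ref{def::filtration_front} (using that $\omega^k_{1, h}$, hence $V^k_{h+1}$, is computed before the episode-$k$ rollout). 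The bias term is bounded by $\lambda^{1/2} H\sqrt d\cdot\|\psi_h(s_h, m_h)\|_{(\Lambda^k_{1, h})^{-1}}$ via $\|\int V^k_{h+1}\,\ud\mu_h\|_2\le H\sqrt d$ from Assumption~\ref{asu::linMDP_frontdoor}; the stochastic term is bounded by $\|\psi_h(s_h, m_h)\|_{(\Lambda^k_{1, h})^{-1}}$ times the $(\Lambda^k_{1, h})^{-1}$-weighted norm of the sum, which is controlled by a self-normalized (elliptical) concentration inequality after a union bound over an $\varepsilon$-net of the random value-function class $\{V^k_{h+1}\}$ --- parametrized by a coefficient in a ball of radius $\cO(H\sqrt{d(n+K)/\lambda})$ together with the bonus matrix, so its metric entropy is $\cO(d^2\log(d(T+nH)/\zeta))$. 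Since $\|\psi_h(s_h, m_h)\|^2_{(\Lambda^k_{1, h})^{-1}}$ equals $\log\det(\Lambda^k_{1, h}+\psi_h\psi_h^\top) - \log\det(\Lambda^k_{1, h})$ up to a bounded factor, this reproduces $\Gamma^k_{h+1/2}/\beta$ in \eqref{eq::UCB_frontdoor}, and the choice $\beta = CdH\sqrt{\log(d(T+nH)/\zeta)}$ makes both contributions dominated by $\Gamma^k_{h+1/2}(s_h, m_h)$ on a $(1-2\zeta)$-event. Combined with the truncation $\min\{\cdot, H-h\}$ in Line~\ref{line::V_half} and $0\le(\PP_{h+1/2}V^k_{h+1})\le H-h$, this gives $-2\Gamma^k_{h+1/2}\le\iota^k_{h+1/2}\le 0$.

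The action-value step is then handled identically, regressing $r^\tau_h + V^k_{h+1/2}(s^\tau_h, m^\tau_h)$ against $\gamma_h(s^\tau_h, a^\tau_h)$ over the interventional data and $r^i_h + V^k_{h+1/2}(s^i_h, m^i_h)$ against $\gamma_h(s^i_h, a^i_h)$ over the observational data, with target $r_h + \PP_h V^k_{h+1/2} = \gamma_h^\top(\theta_h + \int V^k_{h+1/2}\,\ud\overline\mu_h)$. The only new ingredient is that the covering argument must also cover the class of $V^k_{h+1/2}$ functions, which depend on the data through $\omega^k_{1, h}$, $\Lambda^k_{1, h}$, and $\beta$; the boundedness constraints of Assumption~\ref{asu::linMDP_frontdoor} keep this entropy at the same order. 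This yields a second $(1-2\zeta)$-event on which $|\gamma_h(s_h, a_h)^\top\omega^k_{2, h} - (r_h + \PP_h V^k_{h+1/2})(s_h, a_h)|\le\Gamma^k_h(s_h, a_h)$, hence $-2\Gamma^k_h\le\iota^k_h\le 0$. Taking a union bound over the two events, over $h\in[H]$, and over the covering nets gives the claimed probability $1-4\zeta$, after absorbing logarithmic and constant factors into $C$.

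The hard part will be the uniform-concentration bookkeeping across the two coupled regressions: one must (i) produce a parametric $\varepsilon$-net of the random target classes $\{V^k_{h+1}\}$ and $\{V^k_{h+1/2}\}$ of controlled metric entropy, including their dependence on the bonus terms, and (ii) verify the measurability and martingale structure so that the within-episode, within-step residuals $\varepsilon^\tau$ (and the $Q$-level analogues) are genuinely mean-zero --- this is precisely why Definition~\ref{def::filtration_front} refines each $(k, h)$ into three sub-steps $t'(k, h, 1) < t'(k, h, 2) < t'(k, h, 3)$, separating the choice of $a^k_h$, the draw of $(m^k_h, r^k_h)$, and the draw of $s^k_{h+1}$. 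Once these are set up, the remaining pieces --- the $\lambda$-bias estimate, the self-normalized tail bound, and the determinant identity linking $\|\psi_h\|_{(\Lambda^k_{1,h})^{-1}}$ and $\|\gamma_h\|_{(\Lambda^k_{2,h})^{-1}}$ to $\log\det$ ratios --- are routine and parallel to the proof of Lemma~\ref{lem:backdoor_UCB}.
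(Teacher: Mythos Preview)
Your proposal is correct and follows essentially the same route as the paper's proof: the same bias--plus--noise decomposition of $\psi_h^\top\omega^k_{1,h}-(\PP_{h+1/2}V^k_{h+1})$ and of $\gamma_h^\top\omega^k_{2,h}-(r_h+\PP_hV^k_{h+1/2})$, the same self-normalized concentration combined with a covering of the random target class, and the same $\log\det$ identity to pass to $\Gamma^k_{h+1/2}$ and $\Gamma^k_h$. In particular, your observation that the $Q$-step requires covering the class $\{V^k_{h+1/2}\}$---functions of the form $\min\{\psi_h^\top\omega+\Gamma,\,H-h\}$ with no outer $\max_a$---is exactly why the paper invokes Lemma~\ref{lem::covering_Q} in place of Lemma~\ref{lem::covering} there.
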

\begin{proof}
See \S\ref{sec::pf_frontdoor_UCB} for a detailed proof.
\end{proof}
Our goal is to upper bound the regret, which takes the following form,
\#\label{eq::frontdoor_regret_eq1}
\textrm{Regret}(T) &= \sum^K_{k = 1}V^{\pi^*}_1(s^k_1) - V^{\pi^k}_1(s^k_1)\notag\\
&= \underbrace{\sum^K_{k = 1}\bigl(V^{\pi^*}_1(s^k_1) - V^k_1(x^k_1)\bigr)}_{\textstyle{\textrm{(i)}}} + \underbrace{\sum^K_{k = 1}\bigl(V^{k}_1(s^k_1) - V^{\pi^k}_1(x^k_1)\bigr)}_{\textstyle{\textrm{(ii)}}},
\#
where $\{V^k_h\}_{(k, h)\in[K]\times[H]}$ is the output of Algorithm \ref{alg::frontdoor}. In what follows, we calculate terms (i) and (ii) on the right-hand side of \eqref{eq::frontdoor_regret_eq1} separately.
\vskip4pt
\noindent{\bf Term (i).} We now calculate term (i) on the right-hand side of \eqref{eq::frontdoor_regret_eq1}. By \eqref{eq::def_J_frontdoor}, for all $h\in[H]$, it holds that
\#\label{eq::frontdoor_i_eq1}
V^{\pi^*}_h - V^k_h &= \JJ_h Q^{\pi^*}_h + \JJ_{k, h} Q^k_h = \JJ_h(Q^{\pi^*}_h - Q^k_h) + (\JJ_h - \JJ_{k, h})Q^k_h.
\#
We first calculate the term $Q^{\pi^*}_h - Q^k_h$ on the right-hand side of \eqref{eq::frontdoor_i_eq1}. Recall that we define
\$
\iota^k_h = -Q^k_h + r_h + \PP_{h}V^{k}_{h+1/2},\qquad \iota^k_{h+1/2} = -V^k_{h+1/2} + \PP_{h+1/2} V^k_{h+1}.
\$
Meanwhile, following from the Bellman equation in \eqref{eq::frontdoor_bellman_0}, we obtain that
\$
Q^{\pi^*}_h = r_h + \PP_{h}V^{\pi^*}_{h+1/2},\qquad V^{\pi^*}_{h+1/2} = \PP_{h+1/2} V^{\pi^*}_{h+1}.
\$
Thus, it holds that
\#\label{eq::frontdoor_i_eq2}
Q^{\pi^*}_h - Q^k_h = \iota^k_h + \PP_h(V^{\pi^*}_{h+1/2} - V^{k}_{h+1/2}) = \iota^k_h + \PP_h\iota^k_{h+1/2} + \PP_{h}\PP_{h+1/2}(V^{\pi^*}_{h+1} - V^{k}_{h+1}).
\#
Recall that we set $V^{\pi^*}_{H+1} = V^{k}_{H+1} = 0$. Hence, upon recursion, we obtain from \eqref{eq::frontdoor_i_eq1} and \eqref{eq::frontdoor_i_eq2} that
\#\label{eq::frontdoor_i_eq3}
V^{\pi^*}_1 - V^k_1 &= \biggl(\prod^H_{h = 1}\JJ_h \PP_h\PP_{h+1/2}\biggr)(V^{\pi^*}_{H+1} - V^{k}_{H+1}) + \sum^H_{h = 1} \biggl(\prod^{h-1}_{i=1}\JJ_i \PP_i\PP_{i+1/2}\biggr)\JJ_h \iota^k_h\\
&\qquad + \sum^H_{h = 1} \biggl(\prod^{h-1}_{i=1}\JJ_i \PP_i\PP_{i+1/2}\biggr)\JJ_h \PP_h \iota^k_{h + 1/2} + \sum^H_{h = 1} \biggl(\prod^{h-1}_{i=1}\JJ_i \PP_i\PP_{i+1/2}\biggr)(\JJ_h - \JJ_{k, h})Q^k_h  \notag\\
&= \sum^H_{h = 1} \biggl(\prod^{h-1}_{i=1}\JJ_i \PP_i\PP_{i+1/2}\biggr) (\JJ_h \iota^k_h + \JJ_h \PP_h \iota^k_{h + 1/2})+ \sum^H_{h = 1} \biggl(\prod^{h-1}_{i=1}\JJ_i \PP_i\PP_{i+1/2}\biggr)(\JJ_h - \JJ_{k, h})Q^k_h .\notag
\#
By the definition of $\JJ_h$ and $\JJ_{k, h}$ in \eqref{eq::def_J_frontdoor}, we further obtain from \eqref{eq::frontdoor_i_eq3} that
\#\label{eq::frontdoor_i_eq4}
\sum^K_{k = 1}\bigl(V^{\pi^*}_1(s^k_1) - V^k_1(s^k_1)\bigr) &=  \sum^K_{k = 1}\sum^H_{h = 1} \EE_{\pi^*}\bigl[\iota^k_h(s_h, a_h) + \iota^k_{h + 1/2}(s_h, m_h) \biggiven s_1 = s^k_1\bigr]\\
&\qquad + \sum^K_{k = 1}\sum^H_{h = 1} \EE_{\pi^*}\bigl[\langle Q^k_h(s_h, \cdot), \pi^*_h(\cdot\given s_h) - \pi^k_h(\cdot\given s_h) \biggiven s_1 = s^k_1\bigr],\notag
\#
which completes the calculation of term (i) on the right-hand side of \eqref{eq::frontdoor_regret_eq1}.
\vskip4pt
\noindent{\bf Term (ii).} We now calculate term (ii) on the right-hand side of \eqref{eq::frontdoor_regret_eq1}. By \eqref{eq::def_J_frontdoor}, for all $h\in[H]$, we have
\#\label{eq::frontdoor_ii_eq1}
V^k_h(s^k_h) - V^{\pi^k}_h(s^k_h) = \bigl(\JJ_{k, h}(Q^k_h - Q^{\pi^k}_h)\bigr)(s^k_h).
\#
Meanwhile, by \eqref{eq::pf_def_est_err_front} it holds that
\#\label{eq::frontdoor_ii_eq2}
\iota^k_h(s^k_h, a^k_h) &= r_h(s^k_h, a^k_h) + (\PP_{h} V^k_{h + 1/2})(s^k_h, a^k_h) - Q^k_h(s^k_h, a^k_h) \notag\\
&= r_h(s^k_h, a^k_h)- Q^{\pi^k}_h(s^k_h, a^k_h) + \PP_{h} V^k_{h + 1/2}(s^k_h, a^k_h) + (Q^{\pi^k}_h - Q^k_h)(s^k_h, a^k_h)(s^k_h, a^k_h) \notag\\
&= \bigl(\PP_h(V^k_{h + 1/2} - V^{\pi^k}_{h+1/2})\bigr)(s^k_h, a^k_h) - (Q^k_h - Q^{\pi^k}_h )(s^k_h, a^k_h),
\#
where the second equality follows from the Bellman equation $Q^{\pi^k}_h(s_h, a_h) = r_h(s_h, a_h) + (\PP_h V^{\pi^k}_{h+1/2})(s_h, a_h)$. Similarly, we have
\#\label{eq::frontdoor_ii_eq3}
\iota^k_{h + 1/2}(s^k_h, m^k_h) &= \bigl(\PP_{h + 1/2}(V^k_{h + 1} - V^{\pi^k}_{h+1})\bigr)(s^k_h, m^k_h) - (V^k_{h + 1/2} - V^{\pi^k}_{h+1/2})(s^k_h, m^k_h).
\#
Thus, by combining \eqref{eq::frontdoor_ii_eq1}, \eqref{eq::frontdoor_ii_eq2}, and \eqref{eq::frontdoor_ii_eq3}, we have
\#\label{eq::frontdoor_ii_eq4}
&(V^k_h - V^{\pi^k}_h)(s^k_h) + \iota^k_h(s^k_h, a^k_h) + \iota^k_{h + 1/2}(s^k_h, m^k_h)  \notag\\
&\qquad =(V^k_{h + 1} - V^{\pi^k}_{h+1})(s^k_{h+1}) + \underbrace{\bigl(\JJ_{k, h}(Q^k_h - Q^{\pi^k}_h)\bigr)(s^k_h) - (Q^k_h - Q^{\pi^k}_h )(s^k_h, a^k_h)}_{\textstyle{D_{k, h, 1}}}\\
&\qquad\qquad+ \underbrace{\bigl(\PP_h(V^k_{h + 1/2} - V^{\pi^k}_{h+1/2})\bigr)(s^k_h, a^k_h) - (V^k_{h + 1/2} - V^{\pi^k}_{h+1/2})(s^k_h, m^k_h)}_{\textstyle{D_{k, h, 2}}}\notag\\
&\qquad \qquad + \underbrace{\bigl(\PP_{h + 1/2}(V^k_{h + 1} - V^{\pi^k}_{h+1})\bigr)(s^k_h, m^k_h)- (V^k_{h + 1} - V^{\pi^k}_{h+1})(s^k_{h+1})}_{\textstyle{D_{k, h, 3}}} .\notag
\#
Meanwhile, note that $V^{\pi^k}_{H+1} = V^k_{H + 1} = 0$. Hence, by recursively applying \eqref{eq::frontdoor_ii_eq4}, we obtain that 
\#\label{eq::frontdoor_ii_eq5}
&(V^k_1 - V^{\pi^k}_1)(s^k_1) \notag\\
&\quad= \sum^H_{h = 1}(D_{k, h, 1}+ D_{k, h, 2} + D_{k, h, 3})-\sum^H_{h = 1}\bigl(\iota^k_h(s^k_h, a^k_h) + \iota^k_{h + 1/2}(s^k_h, m^k_h)\bigr).
\#
By the definition of filtration in \eqref{def::filtration_front},  for the terms $D_{k, h, 1}$, $D_{k, h, 2}$ and $D_{k, h, 3}$ on the right-hand side of \eqref{eq::frontdoor_ii_eq4}, it holds for all $(k, h)\in[K]\times[H]$ that
\$
D_{k, h, 1}\in \cF_{k, h, 1}, \quad D_{k, h, 2}\in \cF_{k, h, 2}, \quad D_{k, h, 3}\in \cF_{k, h,3}. 
\$
Moreover, it holds that
\$
\EE[D_{k, h, 1} \given \cF_{k, h-1, 3}] = \EE[D_{k, h, 2} \given \cF_{k, h, 1}] =  \EE[D_{k, h, 3} \given \cF_{k, h, 2}] = 0. 
\$
Hence, the terms $D_{k, h, 1}$, $D_{k, h, 2}$ and $D_{k, h, 3}$ defines a martingale $M'_{k, h, m}$ with respect to the timestep index $t'(\cdot, \cdot, \cdot)$ as follows,
\#\label{eq::frontdoor_martingale}
M'_{k, h, m} = \sum_{\substack{(\tau, i, \ell) \in[K]\times[H]\times[3]\\ t'(\tau, i, \ell) \leq t'(k, h, m)} }D_{\tau, i, \ell},
\#
where $t'(\cdot, \cdot, \cdot)$ is defined in \eqref{eq::timestep_front} of Definition \ref{def::filtration_front}. In specific, we have
\#\label{eq::frontdoor_ii_eq6}
M'_{K, H, 3} = \sum^K_{k = 1}\sum^H_{h = 1}(D_{k, h, 1} + D_{k, h, 2} + D_{k, h, 3}).
\#
By further taking sum of \eqref{eq::frontdoor_ii_eq5} over $k \in [K]$, we obtain from \eqref{eq::frontdoor_ii_eq6} that
\#\label{eq::frontdoor_ii_eq7}
\sum^K_{k = 1}(V^k_1 - V^{\pi^k}_1)(s^k_1) &=M'_{K, H, 3}-\sum^K_{k = 1}\sum^H_{h = 1}\bigl(\iota^k_h(s^k_h, a^k_h) + \iota^k_{h + 1/2}(s^k_h, m^k_h)\bigr),
\#
which completes the calculation of term (ii) on the right-hand side of \eqref{eq::frontdoor_regret_eq1}.

Finally, by plugging \eqref{eq::frontdoor_i_eq4} and \eqref{eq::frontdoor_ii_eq7} into \eqref{eq::frontdoor_regret_eq1}, we conclude that
\#\label{eq::frontdoor_regret_eq2}
\textrm{Regret}(T)  &= \sum^K_{k = 1}\sum^H_{h = 1} \EE_{\pi^*}\bigl[\langle Q^k_h(s_h, \cdot), \pi^*_h(\cdot\given s_h) - \pi^k_h(\cdot\given s_h) \biggiven s_1 = s^k_1\bigr]+ M'_{K, H, 3}\\
&\qquad + \sum^K_{k = 1}\sum^H_{h = 1} \EE_{\pi^*}\bigl[\iota^k_h(s_h, a_h) + \iota^k_{h + 1/2}(s_h, m_h) \biggiven s_1 = s^k_1\bigr]\notag\\
&\qquad - \sum^K_{k = 1}\sum^H_{h = 1}\bigl(\iota^k_h(s^k_h, a^k_h) + \iota^k_{h + 1/2}(s^k_h, m^k_h)\bigr) ,\notag
\#
where $M'_{K, H, 3}$ is defined in \eqref{eq::frontdoor_ii_eq6}.

We now upper bound the right-hand side of \eqref{eq::frontdoor_regret_eq2}. The following proof is similar to that of Theorem \ref{thm::regret_backdoor} in \S\ref{pf::regret_backdoor}. In the sequel, we define
\$
Y' &= \sum^K_{k = 1}\sum^H_{h = 1} \EE_{\pi^*}\bigl[\langle Q^k_h(s_h, \cdot), \pi^*_h(\cdot\given s_h) - \pi^k_h(\cdot\given s_h) \biggiven s_1 = s^k_1\bigr] ,\\
Z' &= \sum^K_{k = 1}\sum^H_{h = 1} \EE_{\pi^*}\bigl[\iota^k_h(s_h, a_h) + \iota^k_{h + 1/2}(s_h, m_h) \biggiven s_1 = s^k_1\bigr] - \sum^K_{k = 1}\sum^H_{h = 1}\bigl(\iota^k_h(s^k_h, a^k_h) + \iota^k_{h + 1/2}(s^k_h, m^k_h)\bigr).
\$
It then follows from \eqref{eq::frontdoor_regret_eq2} that
\#\label{eq::frontdoor_regret_eq3}
\textrm{Regret}(T) = Y' + M'_{K, H, 3}+ Z'.
\#
Recall that we set $\pi^k_h$ to be the greedy policy with respect to the action-value function $Q^k_h$. Thus, it holds that 
\#\label{eq::frontdoor_Y}
Y' =\sum^K_{k = 1}\sum^H_{h = 1} \EE_{\pi^*}\bigl[\langle Q^k_h(s_h, \cdot), \pi^*_h(\cdot\given s_h) - \pi^k_h(\cdot\given s_h) \biggiven s_1 = s^k_1\bigr]\leq 0.
\#
Meanwhile, following from the truncation of $Q^k_h$ in Algorithm \ref{alg::frontdoor} and the assumption that $r_h \in [0, 1]$, for terms $D_{k, h, i}$ defined in \eqref{eq::frontdoor_ii_eq4}, we have
\$
|D_{k, h, i}| \leq 2H, \quad \forall(k, h, i)\in[K]\times[H]\times[3].
\$
Hence, by the Azumas-Hoeffding lemma, it holds with probability at least $1-\zeta$ that
\#\label{eq::frontdoor_regret_eq4}
M'_{K,H,3} \leq C_1 \cdot \sqrt{d^3H^3T}\cdot \sqrt{\log(dT/\zeta)},
\#
where $M'_{K,H,3}$ is the martingale defined in \eqref{eq::frontdoor_martingale}, $C_1>0$ is an absolute constant, and $T = HK$. Following from Lemma \ref{lem::front_door}, it holds with probability at least $1 - 4\zeta$ that
\#\label{eq::frontdoor_regret_eq5}
Z' \leq 2\sum^K_{k = 1}\sum^H_{h = 1}\Gamma^k_{h+1/2}(s^k_h, m^k_h) +2\sum^K_{k = 1}\sum^H_{h = 1} \Gamma^k_{h}(s^k_h, a^k_h).
\#
Following from the definition of $\Gamma^k_{h+1/2}$ in \eqref{eq::UCB_frontdoor}, we obtain that
\#\label{eq::frontdoor_regret_eq6}
\sum^K_{k = 1}\sum^H_{h = 1}\Gamma^k_{h+1/2}(s^k_h, m^k_h) &= 2\beta\sum^K_{k = 1}\sum^H_{h = 1}\Bigl(\log \det\bigl(\Lambda^k_{1, h} + \psi_h(s^k_h, m^k_h)\psi_h(s_h, m_h)^\top\bigr) - \log\det(\Lambda^k_{1, h})\Bigr)^{1/2}\notag\\
&=2\beta\sum^K_{k = 1}\sum^H_{h = 1}\bigl(\log\det(\Lambda^{k+1}_{1, h}) - \log\det(\Lambda^k_{1, h})\bigr)^{1/2}.
\#
Thus, by the Cauchy-Schwartz inequality, we obtain from \eqref{eq::frontdoor_regret_eq6} that
\#\label{eq::frontdoor_regret_eq7}
\sum^K_{k = 1}\sum^H_{h = 1}\Gamma^k_{h+1/2}(s^k_h, m^k_h) &\leq \beta \sum^H_{h = 1}\Biggl(K \cdot \sum^K_{k = 1}\bigl(\log\det(\Lambda^{k+1}_{1, h}) - \log\det(\Lambda^1_{1, h})\bigr)\Biggr)^{1/2}\notag\\
&\leq \beta\cdot\sqrt{K} \sum^H_{h = 1}\bigl(\log\det(\Lambda^{K+1}_{1, h}) - \log\det(\Lambda^1_{1, h})\bigr)^{1/2}.
\#
Similarly, we obtain that
\#\label{eq::frontdoor_regret_eq8}
\sum^K_{k = 1}\sum^H_{h = 1}\Gamma^k_{h}(s^k_h, a^k_h) \leq \beta\cdot\sqrt{K} \sum^H_{h = 1}\bigl(\log\det(\Lambda^{k+1}_{2, h}) - \log\det(\Lambda^1_{2, h})\bigr)^{1/2}.
\#
In what follows, we define
\$
\Delta_{1, H} =\frac{1}{\sqrt{dH^2}}\sum^H_{h = 1}\bigl(\log\det(\Lambda^{K+1}_{1, h}) - \log\det(\Lambda^1_{1, h})\bigr)^{1/2},\\
\Delta_{2, H} =\frac{1}{\sqrt{dH^2}}\sum^H_{h = 1}\bigl(\log\det(\Lambda^{k+1}_{2, h}) - \log\det(\Lambda^1_{2, h})\bigr)^{1/2}.
\$
By plugging \eqref{eq::frontdoor_regret_eq7}, \eqref{eq::frontdoor_regret_eq8}, and $\beta = CdH\cdot\sqrt{\log(d(T + nH)/\zeta)}$ into \eqref{eq::frontdoor_regret_eq5}, we obtain that 
\#\label{eq::frontdoor_regret_eq9}
Z' \leq 2C\cdot(\Delta_{1, H} + \Delta_{2, H})\cdot\sqrt{d^3H^3T}\cdot \sqrt{\log\bigl(d(T + nH)/\zeta\bigr)},
\#
which holds with probability at least $1 - 4\zeta$. Here recall that we define $T = HK$. Finally, by plugging \eqref{eq::frontdoor_Y}, \eqref{eq::frontdoor_regret_eq4}, and \eqref{eq::frontdoor_regret_eq9} into \eqref{eq::frontdoor_regret_eq3}, it holds with probability at least $1 - 5\zeta$ that
\$
\textrm{Regret}(T) \leq C' \cdot(\Delta_{1, H} + \Delta_{2, H})\cdot\sqrt{d^3H^3T}\cdot \sqrt{\log\bigl(d(T + nH)/\zeta\bigr)},
\$
where $C'>0$ is an absolute constant. Thus, we complete the proof of Theorem \ref{thm::regret_frontdoor}.
\end{proof}

\section{Proof of Auxiliary Result}

\subsection{Proof of Lemma \ref{lem:backdoor_UCB}}
\label{pf:backdoor_UCB}
\begin{proof}
Recall that we define
\$
(\PP_h V)(s_h, a_h) &= \EE\Bigl[V(s_{h+1})\, \Big|\, s_{h+1} \sim \PP\bigl(\cdot \biggiven s_h, \doo(a_h)\bigr)\Bigr]\notag\\
      		      &= \EE\bigl[V(s_{h+1})\biggiven s_{h+1} \sim \cP_h(\cdot \given s_h, a_h, u_h), u_h\sim\tilde\cP_h(\cdot\given s_h)\bigr],
\$
where the second equality follows from Proposition \ref{lem::backdoor}. In the sequel, we define
\$
(\tilde\PP_h V)(s_h, a_h, u_h) &= \EE\Bigl[V(s_{h+1}) \,\Big|\, s_{h+1} \sim \cP_h\bigl(\cdot \biggiven s_h, a_h, u_h\bigr)\Bigr].
\$
By Assumption \ref{asu::linMDP_backdoor}, we obtain that
\#\label{eq::backdoor_UCB_1}
&\PP_h V^k_{h+1} = \psi_h^\top\langle \mu_h, V^k_{h+1} \rangle = \psi_h^\top (\Lambda^k_h)^{-1}\Lambda^k_h \langle \mu_h, V^k_{h+1} \rangle,\quad\tilde\PP_h V^k_{h+1} = \phi_h^\top\langle \mu_h, V^k_{h+1} \rangle.
\#
Recall that 
\$
\Lambda^k_h = \sum^{k-1}_{\tau = 1} \psi_h(s^\tau_h, a^\tau_h) \psi_h(s^\tau_h, a^\tau_h)^\top+ \sum^n_{i = 1} \phi_h(s_h^i, a_h^i, u_h^i)\phi_h(s_h^i, a_h^i, u_h^i)^\top + \lambda I.
\$
Therefore, by \eqref{eq::backdoor_UCB_1}, we obtain that
\#\label{eq::backdoor_UCB_2}
(\PP_h V^k_{h+1}) (\cdot, \cdot) &= \psi_h(\cdot, \cdot)^\top (\Lambda^k_h)^{-1} \biggl(\sum^{k-1}_{\tau = 1} \psi_h(s^\tau_h, a^\tau_h) \psi_h(s^\tau_h, a^\tau_h)^\top \langle \mu_h, V^k_{h+1} \rangle + \lambda \cdot\langle \mu_h, V^k_{h+1} \rangle \notag\\
&\qquad\qquad\qquad\qquad\quad+  \sum^n_{i = 1} \phi_h(s_h^i, a_h^i, u_h^i)\phi_h(s_h^i, a_h^i, u_h^i)^\top \langle \mu_h, V^k_{h+1} \rangle \biggr)\notag\\
&= \psi_h(\cdot, \cdot)^\top (\Lambda^k_h)^{-1} \biggl( \sum^{k-1}_{\tau = 1} \psi_h(s^\tau_h, a^\tau_h)\cdot (\PP_h V^k_{h+1})(s^\tau_h, a^\tau_h) +  \lambda \cdot\langle \mu_h, V^k_{h+1} \rangle\\
&\qquad\qquad\qquad\qquad\quad+ \sum^n_{i = 1} \phi_h(s_h^i, a_h^i, u_h^i)\cdot (\tilde\PP_h V^k_{h+1})(s_h^i, a_h^i, u_h^i) \biggr).\notag
\#
Recall that we define the counterfactual reward as follows,
\#\label{eq::backdoor_UCB_R}
R_h(s_h, a_h) = \EE_{u_h}\bigl[r(s_h, a_h, u_h) \biggiven S_h = s_h\bigr], \quad \forall (s_h, a_h)\in\cS\times\cA.
\#
It then follows from Assumption \ref{asu::linMDP_backdoor} and Proposition \ref{prop::backdoor_feature} that $R_h(\cdot, \cdot) = \psi_h(\cdot, \cdot)^\top\theta_h$. Hence, it holds for all $h\in [H]$ that
\#\label{eq::backdoor_UCB_2.5}
r_h(\cdot, \cdot, \cdot) &= \phi_h(\cdot, \cdot, \cdot)^\top \theta_h = \phi_h(\cdot, \cdot, \cdot)^\top (\Lambda^k_h)^{-1} \Lambda^k_h \theta_h \notag\\
&= \phi_h(\cdot, \cdot, \cdot)^\top (\Lambda^k_h)^{-1}\biggl(\sum^{k-1}_{\tau = 1} \psi_h(s^\tau_h, a^\tau_h) \psi_h(s^\tau_h, a^\tau_h)^\top\theta_h + \lambda \cdot\langle \mu_h, V^k_{h+1} \rangle \notag\\
&\qquad\qquad\qquad\qquad\quad+  \sum^n_{i = 1} \phi_h(s_h^i, a_h^i, u_h^i)\phi_h(s_h^i, a_h^i, u_h^i)^\top \theta_h \biggr)\notag\\
&= \phi_h(\cdot, \cdot, \cdot)^\top (\Lambda^k_h)^{-1}\biggl(\sum^{k-1}_{\tau = 1} \psi_h(s^\tau_h, a^\tau_h) \cdot R_h(s^\tau_h, a^\tau_h)+ \lambda \cdot \theta_h \notag\\
&\qquad\qquad\qquad\qquad\quad+  \sum^n_{i = 1} \phi_h(s_h^i, a_h^i, u_h^i)\cdot \EE[r_h\given s^i_h, a^i_h, u^i_h]  \biggr).
\#
Meanwhile, following from the explicit update of $\omega^k_h$ in \eqref{eq::backdoor_update}, we obtain that 
\#\label{eq::backdoor_UCB_3}
\psi_h(\cdot, \cdot)^\top \omega^k_h&= \psi_h(\cdot, \cdot)^\top (\Lambda^k_h)^{-1}  \biggl( \sum^{k-1}_{\tau = 1} \psi_h(s^\tau_h, a^\tau_h) \cdot \bigl(V^k_{h+1}(s^\tau_{h+1}) + r^\tau_h \bigr)\\
&\qquad\qquad\qquad\quad\qquad+ \sum^n_{i = 1} \phi_h(s_h^i, a_h^i, u_h^i) \cdot \bigl(V^k_{h+1}(s^i_{h+1}) + r^i_h\bigr)\biggr).\notag
\#

Hence, combining \eqref{eq::backdoor_UCB_2}, \eqref{eq::backdoor_UCB_2.5}, and \eqref{eq::backdoor_UCB_3}, we obtain that
\#\label{eq::backdoor_UCB_4}
&\psi_h(\cdot, \cdot)^\top \omega^k_h - R_h(\cdot, \cdot)-(\PP_h V^k_{h+1})(\cdot, \cdot) \notag\\
&\qquad= \psi_h(\cdot, \cdot)^\top (\Lambda^k_h)^{-1} (S_{1, h} + S_{2, h} + S_{3, h} + S_{4, h}) - \psi_h(\cdot, \cdot)^\top \lambda \cdot\bigl(\langle \mu_h, V^k_{h+1} \rangle + \theta_h\bigr),
\#
where we define
\#\label{eq::backdoor_UCB_SS}
S_{1, h} &=  \sum^{k-1}_{\tau = 1} \psi_h(s^\tau_h, a^\tau_h) \cdot \bigl(V^k_{h+1}(s^\tau_{h+1}) - (\PP_h V^k_{h+1})(s^\tau_h, a^\tau_h)  \bigr),\\
S_{2, h} &= \sum^n_{i = 1} \phi_h(s_h^i, a_h^i, u_h^i) \cdot \bigl(V^k_{h+1}(s^i_{h+1})-(\tilde\PP_h V^k_{h+1})(s_h^i, a_h^i, u_h^i) \bigr),\notag\\
S_{3, h} &=  \sum^{k-1}_{\tau = 1} \psi_h(s^\tau_h, a^\tau_h) \cdot \bigl( r^\tau_h -  R(s^\tau_h, a^\tau_h) \bigr),\quad \textrm{and}\quad S_{4, h} =\sum^n_{i = 1} \phi_h(s_h^i, a_h^i, u_h^i) \cdot \bigl( r^i_h -  \EE[r_h\given s^i_h, a^i_h, u^i_h] \bigr).\notag
\#

In what follows, we upper bound the right-hand side of \eqref{eq::backdoor_UCB_4}. By the Cauchy-Schwartz inequality, we obtain that
\#\label{eq::backdoor_UCB_5}
&|\psi_h(\cdot, \cdot)^\top \omega^k_h -R_h(\cdot, \cdot)- (\PP_h V^k_{h+1})(\cdot, \cdot)| \\
&\quad\leq \bigl(\psi_h(\cdot, \cdot)^\top (\Lambda^k_h)^{-1} \psi_h(\cdot, \cdot)\bigr)^{1/2}\cdot \biggl( \biggl\|\sum^4_{\ell = 1}S_{\ell, h}\biggr\|_{(\Lambda^k_h)^{-1}}  + \lambda \cdot\bigl( \|\langle \mu_h, V^k_{h+1} \rangle\|_{(\Lambda^k_h)^{-1}} + \|\theta_h\|_{(\Lambda^k_h)^{-1}}  \bigr)\biggr),\notag
\#
where $S_{1, h}$, $S_{2, h}$, $S_{3, h}$, and $S_{4, h}$ are defined in \eqref{eq::backdoor_UCB_SS}. By Lemma \ref{lem::self_norm_process}, for $\lambda = 1$, it holds with probability at least $1-2\zeta$ that
\#\label{eq::backdoor_UCB_6}
 \biggl\|\sum^4_{\ell = 1}S_{\ell, h}\biggr\|_{(\Lambda^k_h)^{-1}}\leq C'dH\sqrt{\log\bigl(2(C+1)d(T + nH)/\zeta\bigr)},
\#
where $C>0$ and $C'>0$ are absolute constants. Meanwhile, by Assumption \ref{asu::linMDP_backdoor}, it holds that
\#\label{eq::backdoor_UCB_7}
\|\langle \mu_h, V^k_{h+1} \rangle\|_{(\Lambda^k_h)^{-1}} &\leq \|\langle \mu_h, V^k_{h+1} \rangle\|_2/\sqrt{\lambda}\notag\\
&\leq \biggl(\sum^d_{\ell = 1}\|\mu_{\ell, h}\|^2_1 \biggr)^{1/2}\cdot \|V_{k+1}^h\|_{\infty}/\sqrt{\lambda}\leq H\sqrt{d/\lambda},
\#
where the first inequality follows from the fact that $\Lambda^k_h \succeq \lambda I$, the second inequality follows from the H\"older's inequality, and the third inequality follows from Assumption \ref{asu::linMDP_backdoor} and the fact that $V^h_{k+1} \leq H$. Similarly, it holds from Assumption \ref{asu::linMDP_backdoor} that
\#\label{eq::backdoor_UCB_7.5}
\|\theta_h\|_{(\Lambda^k_h)^{-1}} \leq \|\theta_h\|_2/\sqrt{\lambda} \leq \sqrt{d/\lambda}.
\#
Finally, by plugging \eqref{eq::backdoor_UCB_6}, \eqref{eq::backdoor_UCB_7}, and \eqref{eq::backdoor_UCB_7.5} into \eqref{eq::backdoor_UCB_5} with $\lambda =1$, it holds with probability at least $1 - 2\zeta$ that
\#\label{eq::backdoor_UCB_8}
|\psi_h(\cdot, \cdot)^\top \omega^k_h - R_h(\cdot, \cdot) - (\PP_h V^k_{h+1})(\cdot, \cdot)| \leq \beta/\sqrt{2}\cdot \bigl(\psi_h(\cdot, \cdot)^\top (\Lambda^k_h)^{-1} \psi_h(\cdot, \cdot)\bigr)^{1/2},
\#
where we set $\beta = C''dH\sqrt{\log(d(T + nH)/\zeta)}$ for a sufficiently large absolute constant $C''>0$. By further applying Lemma \ref{lem::Gamma} to \eqref{eq::backdoor_UCB_8}, for $\lambda =1$, it holds with probability at least $1 - 2\zeta$ that
\#\label{eq::backdoor_UCB_8.5}
&|\psi_h(\cdot, \cdot)^\top \omega^k_h -R_h(\cdot, \cdot) - (\PP_h V^k_{h+1})(\cdot, \cdot)| \notag\\
&\quad\leq \beta\cdot\Bigl(\log \det\bigl(\Lambda^k_h + \psi_h(\cdot, \cdot)\psi_h(\cdot, \cdot)^\top\bigr) - \log\det(\Lambda^k_h)\Bigr)^{1/2}=\Gamma^k_h(\cdot, \cdot). 
\#
Recall that we set
\$
Q^k_h(\cdot, \cdot) = \min\bigl\{\psi_h(\cdot, \cdot)^\top \omega^k_h + \Gamma^k_h(\cdot, \cdot), H-h\bigr\}.
\$
Hence, by \eqref{eq::backdoor_UCB_8.5}, it holds with probability at least $1 - 2\zeta$ that
\$
-\iota^k_h(\cdot, \cdot) &= Q^k_h(\cdot, \cdot) - R_h(\cdot, \cdot) -( \PP_h V^k_{h+1})(\cdot, \cdot)\notag\\
&\leq \psi_h(\cdot, \cdot)^\top\omega_h^k + \Gamma^k_h(\cdot, \cdot)- R_h(\cdot, \cdot) - (\PP_h V^k_{h+1})(\cdot, \cdot) \leq 2\Gamma^k_h(\cdot, \cdot),
\$
and
\$
\iota^k_h(\cdot, \cdot) &= -Q^k_h(\cdot, \cdot) +  R_h(\cdot, \cdot) + (\PP_h V^k_{h+1})(\cdot, \cdot)\notag\\
&\leq \max\bigl\{(\PP_h V^k_{h+1})(\cdot, \cdot) +  R_h(\cdot, \cdot) - \psi_h(\cdot, \cdot)^\top\omega_h^k - \Gamma^k_h, R_h(\cdot, \cdot) + (\PP_h V^k_{h+1})(\cdot, \cdot) - H + h\bigr\} \leq 0,
\$
where the second inequality follows from \eqref{eq::backdoor_UCB_8.5} the facts that $V^k_{h+1} \leq H-h-1$ and $R_h\leq 1$. In conclusion, it holds with probability at least $1 - 2\zeta$ that
\$
-2\Gamma^k_h(\cdot, \cdot) \leq \iota^k_h(\cdot, \cdot) \leq 0,
\$
which concludes the proof of Lemma \ref{lem:backdoor_UCB}.
\end{proof}


\subsection{Proof of Lemma \ref{lem::front_door}}
\label{sec::pf_frontdoor_UCB}
\begin{proof}
Recall that we define the following transition operators,
\#
&\PP_{h+1/2} V(s_h, m_h) = \EE\Bigl[V(s_{h+1})\,\Big|\, s_{h+1} \sim \PP\bigl(\cdot \biggiven s_h, \doo(m_h)\bigr)\Bigr]\notag\\
&\tilde\PP_{h+1/2} V(s_h, a_h, m_h) = \EE\bigl[V(s_{h+1}) \,\big|\, s_{h+1}\sim \PP(\cdot \given s_h, a_h, m_h)\bigr].
\#
Following from Assumption \ref{asu::linMDP_frontdoor} and \eqref{eq::front_back_adjust}, we have
\#
\label{eq::frontUCB_eq1}\PP_{h+1/2} V^k_{h+1} &= \psi^\top_h\langle\mu_h, V^k_{h+1}\rangle =  \psi^\top_h(\Lambda^k_{1, h})^{-1}\Lambda^k_{1, h}\langle\mu_h, V^k_{h+1}\rangle,\\
\label{eq::frontUCB_eq2}\tilde\PP_{h+1/2} V^k_{h+1} &= \phi^\top_h\langle\mu_h, V^k_{h+1}\rangle,
\#
where we define
\#\label{eq::frontUCB_eq3}
\Lambda^k_{1, h} = \sum^{k-1}_{\tau = 1} \psi_h(s^\tau_h, m^\tau_h) \psi(s^\tau_h, m^\tau_h)^\top+ \sum^n_{i = 1} \phi_h(s_h^i, a_h^i, m_h^i)\phi_h(s_h^i, a_h^i, m_h^i)^\top + \lambda I.
\#
Hence, following from \eqref{eq::frontUCB_eq1}, it holds for all $(s_h, m_h)\in\cS\times\cM$ that 
\#\label{eq::frontUCB_eq4}
&\PP_{h+1/2} V^k_{h+1}(s_h, m_h)\notag\\
&\quad= \psi_h(s_h, m_h)^\top(\Lambda^k_{1, h})^{-1}\biggl(\sum^{k-1}_{\tau = 1}\psi_h(s^\tau_h, m^\tau_h) \psi(s^\tau_h, m^\tau_h)^\top\langle\mu_h, V^k_{h+1}\rangle + \lambda\cdot \langle\mu_h, V^k_{h+1}\rangle\\
&\quad\qquad\qquad\qquad\qquad\qquad\quad + \sum^n_{i = 1}\phi_h(s^i_h, a^i_h, m^i_h)\phi_h(s^i_h, a^i_h, m^i_h)^\top\langle\mu_h, V^k_{h+1}\rangle  \biggr).\notag
\#
By plugging \eqref{eq::frontUCB_eq1} and \eqref{eq::frontUCB_eq2} into \eqref{eq::frontUCB_eq4}, we further obtain that
\#\label{eq::frontUCB_eq5}
&\PP_{h+1/2} V^k_{h+1}(s_h, m_h)\notag\\
&\quad=\psi_h(s_h, m_h)^\top(\Lambda^k_{1, h})^{-1}\biggl(\sum^{k-1}_{\tau = 1}\psi_h(s^\tau_h, m^\tau_h)\cdot (\PP_{h+1/2} V^k_{h+1})(s^\tau_h, m^\tau_h) + \lambda\cdot \langle\mu_h, V^k_{h+1}\rangle\\
&\qquad\qquad\qquad\qquad\qquad\qquad + \sum^n_{i = 1}\phi_h(s^i_h, a^i_h, m^i_h)\cdot(\tilde\PP_{h+1/2} V^k_{h+1})(s^i_h, a^i_h, m^i_h)\biggr).\notag
\#
Following from the update of $\omega^k_{1, h}$ in \eqref{eq::frontdoor_update_1}, it holds for all $h\in[H]$ and $(s_h, m_h)\in\cS\times\cM$ that
\#\label{eq::frontUCB_eq6}
\psi_h(s_h, m_h)^\top \omega^k_{1, h} = \psi_h(s_h, m_h)^\top(\Lambda^{k}_{1, h})^{-1}  \biggl(& \sum^{k-1}_{\tau = 1} \psi_h(s^\tau_h, m^\tau_h) \cdot V^k_{h+1}(s^\tau_{h+1}) \\
&+ \sum^n_{i = 1} \phi_h(s_h^i, a_h^i, m_h^i)\cdot V^k_{h+1}(s^i_{h+1}) \biggr).\notag
\#
Hence, combining \eqref{eq::frontUCB_eq5} and \eqref{eq::frontUCB_eq6}, we obtain for all $h\in[H]$ and $(s_h, m_h)\in\cS\times\cM$ that
\#\label{eq::frontUCB_eq7}
&\psi_h(s_h, m_h)^\top \omega^k_{1, h} - \PP_{h+1/2} V^k_{h+1}(s_h, m_h) \notag\\
&\qquad=  \psi_h(s_h, m_h)^\top(\Lambda^{k}_{1, h})^{-1}(S'_{1, h} + S'_{2, h}) + \lambda\cdot \psi_h(s, m)^\top\langle\mu_h, V^k_{h+1}\rangle,
\#
where we define
\$
S'_{1, h} &= \sum^{k-1}_{\tau = 1}\psi_h(s^\tau_h, m^\tau_h)\cdot \bigl( V^k_{h+1}(s^\tau_{h+1})- (\PP_{h+1/2} V^k_{h+1})(s^\tau_h, m^\tau_h)\bigr),\notag\\
S'_{2, h} &= \phi_h(s^i_h, a^i_h, m^i_h)\cdot\bigl(V^k_{h+1}(s^i_{h+1}) - (\tilde\PP_{h+1/2} V^k_{h+1})(s^i_h, a^i_h, m^i_h)\bigr).
\$
We now upper bound the right-hand side of \eqref{eq::frontUCB_eq7}. By the Cauchy-Schwartz inequality, we obtain from \eqref{eq::frontUCB_eq7} that
\#\label{eq::frontUCB_eq9}
&|\psi_h^\top \omega^k_{1, h} - \PP_{h+1/2} V^k_{h+1}| \notag\\
&\qquad\leq \bigl(\psi_h^\top(\Lambda^{k}_{1, h})^{-1} \psi_h\bigr)^{1/2}\cdot\bigl(\|S'_{1, h}+S'_{2, h}\|_{(\Lambda^k_h)^{-1}} + \lambda \cdot \|\langle \mu_h, V^k_{h+1} \rangle\|_{(\Lambda^k_h)^{-1}}\bigr).
\#
Following from similar analysis to the proof of Lemma \ref{lem::self_norm_process} in \S\ref{sec::aux_lemma}, for $\lambda = 1$, it holds with probability at least $1-2\zeta$ that
\#\label{eq::frontUCB_eq10}
\|S'_{1, h} + S'_{2, h}\|_{(\Lambda^k_h)^{-1} }\leq C'dH\sqrt{\log\bigl(2(C+1)d(T + nH)/\zeta\bigr)}.
\#
Meanwhile, by Assumption \ref{asu::linMDP_frontdoor}, we have
\#\label{eq::frontUCB_eq11}
\|\langle \mu_h, V^k_{h+1} \rangle\|_{(\Lambda^k_h)^{-1}} &\leq \|\langle \mu_h, V^k_{h+1} \rangle\|_2/\sqrt{\lambda}\notag\\
&\leq \biggl(\sum^d_{\ell = 1}\|\mu_{\ell, h}\|^2_1 \biggr)^{1/2}\cdot \|V_{k+1}^h\|_{\infty}/\sqrt{\lambda}\leq H\sqrt{d/\lambda},
\#
where the first inequality follows from the fact that $\Lambda^k_{1, h} \succeq \lambda I$, the second inequality follows from the H\"older's inequality, and the third inequality follows from Assumption \ref{asu::linMDP_frontdoor} and the fact that $V^h_{k+1} \leq H$. Finally, by plugging \eqref{eq::frontUCB_eq10} and \eqref{eq::frontUCB_eq11} into \eqref{eq::frontUCB_eq9}, we obtain for all $(s_h, m_h)\in\cS\times\cM$ that
\#\label{eq::frontdoor_UCB_8.5}
&|\psi_h(s_h, m_h)^\top \omega^k_{1, h} - (\PP_{h+1/2} V^k_{h+1})(s_h, m_h)| \notag\\
&\quad\leq \beta/\sqrt{2}\cdot \bigl(\psi_h(s_h, m_h)^\top (\Lambda^k_{1, h})^{-1} \psi_h(s_h, m_h)\bigr)^{1/2}\notag\\
&\quad\leq \beta\cdot\Bigl(\log \det\bigl(\Lambda^k_{1, h} + \psi_h(s_h, m_h)\psi_h(s_h, m_h)^\top\bigr) - \log\det(\Lambda^k_{1, h})\Bigr)^{1/2}\notag\\
&\quad=\Gamma^k_{h+1/2}(s_h, m_h),
\#
where we set $\beta = C''dH\sqrt{\log(d(T + nH)/\zeta)}$ for a sufficiently large absolute constant $C''>0$ and the last inequality follows from Lemma \ref{lem::Gamma}. Here $\Gamma^k_{h+1/2}$ is the UCB defined in \eqref{eq::UCB_frontdoor}. Recall that for all $(s_h, m_h)\in\cS\times\cM$, we define
\$
V^k_{h+1/2}(s_h, m_h) =  \min\bigl\{\psi_h(s_h, m_h)^\top \omega^k_{1, h} + \Gamma^k_{h+1/2}(s_h, m_h), H-h\bigr\}.
\$
Hence, by \eqref{eq::frontdoor_UCB_8.5}, for all $(s_h, m_h)\in\cS\times\cM$, it holds with probability at least $1 - 2\zeta$ that
\$
-\iota^k_{h+1/2}(s_h, m_h) &= V^k_{h+1/2}(s_h, m_h) - ( \PP_{h+1/2} V^k_{h+1})(s_h, m_h)\notag\\
&\leq \psi_h(s_h, m_h)^\top\omega_h^k + \Gamma^k_{h+1/2}(s_h, m_h) - (\PP_{h+1/2} V^k_{h+1})(s_h, m_h) \leq 2\Gamma^k_{h+1/2}(s_h, m_h),
\$
and
\$
\iota^k_{h+1/2}(s_h, m_h) &= -V^k_{h+1/2}(s_h, m_h) + (\PP_{h+1/2} V^k_{h+1})(s_h, m_h)\notag\\
&\leq \max\bigl\{(\PP_{h+1/2} V^k_{h+1})(s_h, m_h) - \psi_h(s_h, m_h)^\top\omega_{1, h}^k - \Gamma^k_{h+1/2}(s_h,m_h), \notag\\
&\qquad\qquad\quad (\PP_{h+1/2} V^k_{h+1})(s_h, m_h) - H + h\bigr\} \leq 0,
\$
where the second inequality follows from \eqref{eq::frontdoor_UCB_8.5} and the fact that $V^k_{h+1} \leq H-h-1$. In conclusion, it holds with probability at least $1 - 2\zeta$ that
\$
-2\Gamma^k_{h+1/2}(s_h, m_h) \leq \iota^k_{h+1/2}(s_h, m_h) \leq 0.
\$
Similarly, following from the proof of Lemma \ref{lem:backdoor_UCB} with Lemma \ref{lem::covering_Q} in place of Lemma \ref{lem::covering}, the reward $r_h$ in place of $R_h$, and the feature $\gamma_h$ in place of both $\psi_h$ and $\phi_h$, for all $(s_h, a_h)\in\cS\times\cA$, it holds with probability at least $1 - 2\zeta$ that
\$
-2\Gamma^k_{h}(s_h, a_h) \leq \iota^k_{h}(s_h, a_h) \leq 0.
\$
Thus, we complete the proof of Lemma \ref{lem::front_door}.
\end{proof}

\section{Auxiliary Lemma}
\label{sec::aux_lemma}
\begin{lemma}[Concentration of Self-Normalized Process \citep{abbasi2011improved, jin2019provably}]
\label{lem::concen_SNP}
Let $\{\epsilon_t\}_{t = 1}^\infty$ be a real-valued stochastic process adapted to the filtration $\{\cF_t\}^\infty_{t = 0}$. Let $\epsilon_t\given\cF_{t-1}$ be zero-mean and $\sigma$-sub-Gaussian. Let $\{\psi_t\}^\infty_{t = 0}$ be an $\RR^d$-valued stochastic process with $\psi_t \in \cF_{t-1}$. Let $\overline\Lambda_t = \overline\Lambda_0 + \sum^{t}_{\tau = 1}\psi_\tau \psi_\tau^\top$, where $\overline\Lambda_0$ is a positive definite matrix. Let $\delta >0$ be an absolute constant. It then holds with probability at least $1 - \delta$ that
\$
\biggl\|\sum^t_{\tau = 1}\psi_\tau\cdot \epsilon_\tau \biggr\|^2_{\overline\Lambda_t^{-1}} \leq 2\sigma^2\cdot\log \Bigl(\sqrt{\det(\overline\Lambda_t) / \det(\overline\Lambda_0) }\cdot\delta^{-1}\Bigr), \quad \forall t \geq 0.
\$
\end{lemma}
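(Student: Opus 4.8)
The plan is to follow the classical method of mixtures (pseudo-maximization) of \cite{abbasi2011improved}, which reduces a self-normalized tail bound to a maximal inequality for a nonnegative supermartingale. Write $S_t = \sum_{\tau = 1}^t \psi_\tau\, \epsilon_\tau$. First I would fix an auxiliary vector $\lambda\in\RR^d$ and define
\[
M_t^\lambda = \exp\Bigl( \sum_{\tau = 1}^t \bigl( \sigma^{-1}\langle\lambda, \psi_\tau\rangle\, \epsilon_\tau - \tfrac{1}{2}\langle\lambda, \psi_\tau\rangle^2 \bigr) \Bigr), \qquad M_0^\lambda = 1.
\]
Since $\psi_\tau\in\cF_{\tau-1}$ and $\epsilon_\tau\given\cF_{\tau-1}$ is zero-mean and $\sigma$-sub-Gaussian, applying the sub-Gaussian moment generating function bound with the $\cF_{\tau-1}$-measurable coefficient $\sigma^{-1}\langle\lambda,\psi_\tau\rangle$ gives $\EE[\exp(\sigma^{-1}\langle\lambda, \psi_\tau\rangle \epsilon_\tau)\given\cF_{\tau-1}] \leq \exp(\tfrac{1}{2}\langle\lambda,\psi_\tau\rangle^2)$, hence $\EE[M_t^\lambda\given\cF_{t-1}] \leq M_{t-1}^\lambda$. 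So $\{M_t^\lambda\}_{t\geq 0}$ is a nonnegative supermartingale with $\EE[M_t^\lambda]\leq 1$.

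Next I would mix over $\lambda$: let $h$ be the density of $N(0, \overline\Lambda_0^{-1})$ and set $M_t = \int_{\RR^d} M_t^\lambda\, h(\lambda)\, \ud\lambda$. By Tonelli's theorem the integrand is nonnegative, so the interchange of expectation and integral is legitimate, and a mixture of supermartingales adapted to the common filtration $\{\cF_t\}$ is again a supermartingale; thus $\{M_t\}_{t\geq 0}$ is a nonnegative supermartingale with $M_0 = 1$ and $\EE[M_t]\leq 1$. The $\lambda$-integral is Gaussian and evaluates in closed form by completing the square in the exponent: with $\overline\Lambda_t = \overline\Lambda_0 + \sum_{\tau=1}^t \psi_\tau\psi_\tau^\top$, one obtains
\[
M_t = \Bigl( \frac{\det(\overline\Lambda_0)}{\det(\overline\Lambda_t)} \Bigr)^{1/2}\, \exp\Bigl( \frac{\|S_t\|^2_{\overline\Lambda_t^{-1}}}{2\sigma^2} \Bigr).
\]

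Finally I would apply Ville's maximal inequality for nonnegative supermartingales, $\PP\bigl(\sup_{t\geq 0} M_t \geq \delta^{-1}\bigr) \leq \delta\, \EE[M_0] = \delta$. On the complementary event, which has probability at least $1-\delta$, we have $M_t < \delta^{-1}$ for all $t\geq 0$; taking logarithms of the displayed identity for $M_t$ and rearranging yields
\[
\|S_t\|^2_{\overline\Lambda_t^{-1}} \leq 2\sigma^2\, \log\Bigl( \sqrt{\det(\overline\Lambda_t)/\det(\overline\Lambda_0)}\cdot \delta^{-1} \Bigr), \qquad \forall t\geq 0,
\]
which is exactly the claim. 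The only places requiring care are the closed-form Gaussian integration — essentially all the algebra lives there, and one must track the determinant ratio $\det(\overline\Lambda_0)/\det(\overline\Lambda_t)$ correctly through the completion of the square — and the verification that the mixture $M_t$ remains a supermartingale so that a single application of the maximal inequality delivers the uniformity over all $t$ for free, with no union bound over timesteps. The sub-Gaussian supermartingale estimate for $M_t^\lambda$ itself is a one-line conditional-expectation computation.
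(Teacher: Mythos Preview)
Your proposal is correct and follows precisely the method-of-mixtures argument of \cite{abbasi2011improved}, which is the source the paper cites. The paper itself does not prove this lemma but simply refers the reader to \cite{abbasi2011improved}, so your sketch supplies exactly the details the paper defers.
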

\begin{proof}
See \cite{abbasi2011improved} for a detailed proof.
\end{proof}

\begin{lemma}[Lemma D.4 of \cite{jin2019provably}]
\label{lem::Jin_D4}
Let $\{s_t\}_{t = 1}^\infty$ and $\{\psi_t\}_{t = 1}^\infty$ with $\|\psi_t\|_2 \leq 1$ be $\cS$-valued and $\RR^d$-valued stochastic processes adopted to the filtration $\{\cF_t\}^\infty_{t = 0}$, respectively. Let $\overline\Lambda_t = \overline\Lambda_0 + \sum^{t}_{\tau = 1}\psi_\tau \psi_\tau^\top$, where $\overline\Lambda_0 \succeq \lambda I$ is a positive definite matrix. Let $\sup_{s\in \cS}|V(s)| \leq H$ for all $V \in \cV$. Let $\delta >0$ be an absolute constant. It then holds with probability at least $1-\delta$ that
\$
&\biggl\|\sum^t_{\tau = 1}\psi_\tau\cdot \Bigl(V(s_\tau) - \EE\bigl[V(s_\tau) ~\big|~ \cF_{\tau - 1}\bigr]\Bigr) \biggr\|_{\overline\Lambda_t^{-1}} \notag\\
&\qquad\leq 4H^2\cdot \Bigl(d/2\cdot\log \bigl(\det(\overline\Lambda_t) / \det(\overline\Lambda_0)\bigr) + \log(\cN_\epsilon/\delta) \Bigr) + 8t^2\epsilon^2/\lambda.
\$
Here $\cN_\epsilon$ is the $\epsilon$-covering number of $\cV$ with respect to the metric $d(V, V') = \sup_{s\in\cS }|V(s ) - V'(s )|$ for all $V, V'\in\cV$.
\end{lemma}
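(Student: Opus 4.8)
This statement is Lemma~D.4 of \cite{jin2019provably}; the plan is to reduce the uniform-in-$V$ bound to the self-normalized martingale concentration of Lemma~\ref{lem::concen_SNP} via a covering argument. Note that the $H^2$ and $t^2\epsilon^2$ scalings on the right-hand side indicate that the quantity actually being controlled is the squared norm $\bigl\|\sum^t_{\tau=1}\psi_\tau(V(s_\tau)-\EE[V(s_\tau)\given\cF_{\tau-1}])\bigr\|^2_{\overline\Lambda_t^{-1}}$, which I would bound in that form.

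First I would fix a minimal $\epsilon$-net $\mathcal{C}_\epsilon$ of $\cV$ with respect to the metric $d(V,V')=\sup_{s\in\cS}|V(s)-V'(s)|$, so that $|\mathcal{C}_\epsilon|=\cN_\epsilon$ and every $V\in\cV$ admits $\bar V\in\mathcal{C}_\epsilon$ with $\|V-\bar V\|_\infty\le\epsilon$. For a fixed $\bar V\in\mathcal{C}_\epsilon$, set $\eta_\tau=\bar V(s_\tau)-\EE[\bar V(s_\tau)\given\cF_{\tau-1}]$; since $s_\tau\in\cF_\tau$ and $\psi_\tau\in\cF_{\tau-1}$, the sequence $\{\eta_\tau\}$ is a martingale difference sequence adapted to $\{\cF_\tau\}$, and since $\bar V(s_\tau)\in[-H,H]$, Hoeffding's lemma makes $\eta_\tau\given\cF_{\tau-1}$ mean-zero and $H$-sub-Gaussian. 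Applying Lemma~\ref{lem::concen_SNP} with $\overline\Lambda_0,\overline\Lambda_t,\sigma=H$ and failure probability $\delta/\cN_\epsilon$, then a union bound over the $\cN_\epsilon$ net points, yields an event of probability at least $1-\delta$ on which, for every $\bar V\in\mathcal{C}_\epsilon$,
\[
\Bigl\|\sum^t_{\tau=1}\psi_\tau\eta_\tau\Bigr\|^2_{\overline\Lambda_t^{-1}}\le 2H^2\Bigl(\tfrac12\log\bigl(\det(\overline\Lambda_t)/\det(\overline\Lambda_0)\bigr)+\log(\cN_\epsilon/\delta)\Bigr).
\]

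Next, for an arbitrary $V\in\cV$ I would take $\bar V\in\mathcal{C}_\epsilon$ with $W:=V-\bar V$, $\|W\|_\infty\le\epsilon$, and split $\sum^t_{\tau=1}\psi_\tau(V(s_\tau)-\EE[V(s_\tau)\given\cF_{\tau-1}])=\sum^t_{\tau=1}\psi_\tau\eta_\tau+\sum^t_{\tau=1}\psi_\tau(W(s_\tau)-\EE[W(s_\tau)\given\cF_{\tau-1}])$. The first sum is controlled by the net bound above; for the second I would use $\overline\Lambda_t^{-1}\preceq\lambda^{-1}I$, $\|\psi_\tau\|_2\le 1$, and $|W(s_\tau)-\EE[W(s_\tau)\given\cF_{\tau-1}]|\le 2\epsilon$ to obtain $\bigl\|\sum^t_{\tau=1}\psi_\tau(W(s_\tau)-\EE[W(s_\tau)\given\cF_{\tau-1}])\bigr\|^2_{\overline\Lambda_t^{-1}}\le(2t\epsilon)^2/\lambda$. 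Combining the two through the triangle inequality in the $\overline\Lambda_t^{-1}$-norm and $(a+b)^2\le 2a^2+2b^2$ turns the $2H^2$ into $4H^2$ and the $(2t\epsilon)^2/\lambda$ into $8t^2\epsilon^2/\lambda$, giving the asserted bound; if one prefers the dimension-dependent ``$d/2$'' form, one further bounds $\log(\det(\overline\Lambda_t)/\det(\overline\Lambda_0))\le d\log((\lambda+t)/\lambda)$ using $\|\psi_\tau\|_2\le 1$ and $\overline\Lambda_0\succeq\lambda I$. Since the high-probability event depends only on the finite net $\mathcal{C}_\epsilon$ while the passage from $\bar V$ to $V$ is purely deterministic, the bound holds simultaneously for all $V\in\cV$ on that event.

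I do not expect a genuine obstacle: the one substantive point is precisely the last observation, namely that uniformity over the infinite class $\cV$ costs nothing probabilistically once the event over the finite $\epsilon$-net is fixed, the price of discretization being isolated in the additive $8t^2\epsilon^2/\lambda$ term. The remainder is routine bookkeeping of the sub-Gaussian parameter of the bounded martingale differences (which produces the $H^2$ factor) and a crude operator-norm estimate (which produces the $t^2\epsilon^2$ term), both feeding into a single application of Lemma~\ref{lem::concen_SNP}.
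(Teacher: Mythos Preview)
Your proposal is correct and follows essentially the same approach as the paper's proof: fix an $\epsilon$-net, apply Lemma~\ref{lem::concen_SNP} with a union bound over the net points, handle the residual $V-\tilde V$ deterministically via $\overline\Lambda_t^{-1}\preceq\lambda^{-1}I$, and combine the two pieces with $(a+b)^2\le 2a^2+2b^2$. You also correctly observe that the displayed inequality is really a bound on the squared norm, and that the ``$d/2$'' in the statement should be ``$1/2$'' coming from $\log\sqrt{\det(\overline\Lambda_t)/\det(\overline\Lambda_0)}$.
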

\begin{proof}
The proof technique is similar to that of Lemma D.4 by \cite{jin2019provably}. For all $V \in \cV$, there exist an element $\tilde V$ in the $\epsilon$-covering of $\cV$ satisfying
\#\label{eq::JinD4_eq1}
d(V, \tilde V) = \sup_{s\in\cS }|V(s ) - \tilde V(s)| \leq \epsilon. 
\#
In the sequel, we define 
\#\label{eq::JinD4_eq2}
\Delta_V(\cdot) =   V(\cdot ) - \tilde V(\cdot ).
\# 
It then holds that
\#\label{eq::JinD4_eq3}
&\biggl\|\sum^t_{\tau = 1}\psi_\tau\cdot \Bigl(V(s_\tau) - \EE\bigl[V(s_\tau) ~\big|~ \cF_{\tau - 1}\bigr]\Bigr) \biggr\|_{\overline\Lambda_t^{-1}}\notag\\
&\quad\leq 2 \biggl\|\sum^t_{\tau = 1}\psi_\tau\cdot \Bigl(\tilde V(s_\tau) - \EE\bigl[\tilde V(s_\tau) ~\big|~ \cF_{\tau - 1}\bigr]\Bigr) \biggr\|_{\overline\Lambda_t^{-1}} \\
&\quad\quad+ 2 \biggl\|\sum^t_{\tau = 1}\psi_\tau\cdot \Bigl(\Delta_V(s_\tau) - \EE\bigl[\Delta_V(s_\tau) ~\big|~ \cF_{\tau - 1}\bigr]\Bigr) \biggr\|_{\overline\Lambda_t^{-1}}.\notag
\#
Note that $|\tilde V(s)| \leq H$ for all $s\in\cS$. Hence, following from Lemma \ref{lem::concen_SNP} and a union bound argument, it holds with probability at least $1 - \delta$ that
\#\label{eq::JinD4_eq4}
&2 \biggl\|\sum^t_{\tau = 1}\psi_\tau\cdot \Bigl(\tilde V(s_\tau) - \EE\bigl[\tilde V(s_\tau) ~\big|~ \cF_{\tau - 1}\bigr]\Bigr) \biggr\|_{\overline\Lambda_t^{-1}} \notag\\
&\qquad\leq 4H^2\cdot \Bigl(d/2\cdot\log \bigl(\det(\overline\Lambda_t) / \det(\overline\Lambda_0)\bigr) + \log(\cN_\epsilon/\delta) \Bigr),
\#
where $\cN_\epsilon$ is the $\epsilon$-covering number of $\cV$. Meanwhile, it follows from \eqref{eq::JinD4_eq1} and \eqref{eq::JinD4_eq2} that $|\Delta_V(s)| \leq \epsilon$ for all $s\in\cS$. Hence, we have
\#\label{eq::JinD4_eq5}
2 \biggl\|\sum^t_{\tau = 1}\psi_\tau\cdot \Bigl(\Delta_V(s_\tau) - \EE\bigl[\Delta_V(s_\tau) ~\big|~ \cF_{\tau - 1}\bigr]\Bigr) \biggr\|_{\overline\Lambda_t^{-1}}\leq 8t^2\epsilon^2/\lambda,
\#
where the inequality follows from the fact that $\overline\Lambda_t \succeq \lambda  I$. By plugging \eqref{eq::JinD4_eq4} and \eqref{eq::JinD4_eq5} into \eqref{eq::JinD4_eq3}, it holds with probability at least $1 - \delta$ that
\$
&\biggl\|\sum^t_{\tau = 1}\psi_\tau\cdot \Bigl(V(s_\tau) - \EE\bigl[V(s_\tau) ~\big|~ \cF_{\tau - 1}\bigr]\Bigr) \biggr\|_{\overline\Lambda_t^{-1}} \notag\\
&\qquad\leq 4H^2\cdot \Bigl(d/2\cdot\log \bigl(\det(\overline\Lambda_t) / \det(\overline\Lambda_0)\bigr) + \log(\cN_\epsilon/\delta) \Bigr) + 8t^2\epsilon^2/\lambda,
\$
which concludes the proof of Lemma \ref{lem::Jin_D4}.
\end{proof}

\begin{lemma}[Upper Bound of Parameter \citep{jin2019provably}]
\label{lem::param}
Under Assumption \ref{asu::linMDP_backdoor}, It holds that
\#
\|\omega^k_h\|_2 \leq H\bigl(d(k+n)/\lambda\bigr)^{1/2}, \quad \forall (k, h)\in[K]\times[H].
\#
\end{lemma}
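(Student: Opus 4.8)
The plan is to bound $\|\omega^k_h\|_2 = \sup_{v \in \RR^d,\, \|v\|_2 = 1} v^\top \omega^k_h$ directly from the closed-form expression for $\omega^k_h$ in \eqref{eq::backdoor_update}. Write $\omega^k_h = (\Lambda^k_h)^{-1} z^k_h$, where
\#\label{eq::param_pf_z}
z^k_h = \sum^{k-1}_{\tau = 1}\psi_h(s^\tau_h, a^\tau_h)\bigl(V^k_{h+1}(s^\tau_{h+1}) + r^\tau_h\bigr) + \sum^n_{i = 1}\phi_h(s^i_h, a^i_h, u^i_h)\bigl(V^k_{h+1}(s^i_{h+1}) + r^i_h\bigr),
\#
and $\Lambda^k_h$ is defined in \eqref{eq::backdoor_var}. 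By the truncation of $Q^k_{h+1}$ (hence of $V^k_{h+1}$) in Algorithm \ref{alg::backdoor} together with the assumption $r_h \in [0,1]$, each coefficient $V^k_{h+1}(s^\tau_{h+1}) + r^\tau_h$ and $V^k_{h+1}(s^i_{h+1}) + r^i_h$ in \eqref{eq::param_pf_z} is bounded in absolute value by $H$.

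First I would fix a unit vector $v$, expand $v^\top \omega^k_h = v^\top (\Lambda^k_h)^{-1} z^k_h$ as a sum of $k - 1 + n$ terms, and apply the Cauchy--Schwarz inequality in the inner product induced by the positive definite matrix $(\Lambda^k_h)^{-1}$ to each term. Using $\Lambda^k_h \succeq \lambda I$ to get $\|v\|_{(\Lambda^k_h)^{-1}} \le 1/\sqrt{\lambda}$ and the coefficient bound above, this yields
\#\label{eq::param_pf_bound}
|v^\top \omega^k_h| \le \frac{H}{\sqrt{\lambda}}\Biggl(\sum^{k-1}_{\tau = 1}\bigl\|\psi_h(s^\tau_h, a^\tau_h)\bigr\|_{(\Lambda^k_h)^{-1}} + \sum^n_{i = 1}\bigl\|\phi_h(s^i_h, a^i_h, u^i_h)\bigr\|_{(\Lambda^k_h)^{-1}}\Biggr).
\#
Then I would apply the Cauchy--Schwarz inequality once more, this time over the $k - 1 + n$ summands, reducing the task to bounding $\sum_{\tau}\|\psi_h(s^\tau_h, a^\tau_h)\|^2_{(\Lambda^k_h)^{-1}} + \sum_i\|\phi_h(s^i_h, a^i_h, u^i_h)\|^2_{(\Lambda^k_h)^{-1}}$. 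Rewriting each squared norm as a trace and summing, this quantity equals $\tr\bigl((\Lambda^k_h)^{-1}(\Lambda^k_h - \lambda I)\bigr) = d - \lambda\,\tr\bigl((\Lambda^k_h)^{-1}\bigr) \le d$ by the definition of $\Lambda^k_h$. Combining these estimates gives $|v^\top \omega^k_h| \le (H/\sqrt{\lambda})\sqrt{(k-1+n)d} \le H\bigl(d(k+n)/\lambda\bigr)^{1/2}$, and taking the supremum over unit vectors $v$ finishes the argument.

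The computation is entirely routine, so there is no genuine obstacle; the only point that deserves attention is the deterministic coefficient bound $|V^k_{h+1}(s) + r| \le H$, which rests on the truncation built into the update of $Q^k_{h+1}$ rather than on any concentration argument, and therefore holds simultaneously for all $(k,h) \in [K] \times [H]$ as the statement requires.
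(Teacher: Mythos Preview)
Your argument is correct and is precisely the standard proof from \cite{jin2019provably}, adapted here to accommodate the extra $n$ observational terms; the paper itself does not give a proof but simply cites that reference.
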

\begin{proof}
See \cite{jin2019provably} for a detailed proof.
\end{proof}

\begin{lemma}[Covering Number of $\cV$ \citep{jin2019provably}]
\label{lem::covering}
Let $\cV$ be a class of functions $V$ satisfying
\#\label{eq::covering_V}
V(\cdot) = \min\bigl\{\max_{a\in\cA}\psi(\cdot,a)^\top\omega + \Gamma(\cdot, a), H-h\bigr\},
\#
where 
\#\label{eq::covering_gamma}
\Gamma(\cdot, \cdot) = \sqrt{2}\beta\cdot\Bigl(\log \det\bigl(\Lambda + \psi(\cdot, \cdot)\psi(\cdot, \cdot)^\top\bigr) - \log\det(\Lambda)\Bigr)^{1/2}.
\#
Here the function $V$ is parameterized by $(\omega, \Lambda)$ and the parameter $\beta$ is fixed. Let $\psi(\cdot, \cdot)$ be an $\RR^d$-valued function and $\Lambda\in\RR^{d\times d}$. Let $\|\psi(s, a)\|_2 \leq 1$ for all $(s, a)\in\cS\times\cA$. For $\|\omega\|_2 \leq L$, $\Lambda \succeq \lambda I$, $\beta\in[0, B]$, and $\epsilon >0$, there exist an $\epsilon$-covering of $\cV$ with respect to the metric $d(V, V') = \sup_{s \in\cS}|V(s ) - V'(s )|$, such that the covering number $\cN_\epsilon$ is upper bounded as follows,
\$
\log \cN_\epsilon \leq  d\cdot \log(1 + 4L/\epsilon) + d^2 \cdot \log\bigl(1 + 16B^2d^{1/2}/(\epsilon^2\lambda)\bigr).
\$
\end{lemma}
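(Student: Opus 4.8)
The plan is to follow the standard volumetric covering argument for optimistic value-function classes. Since $v\mapsto\min\{v,H-h\}$ and $f\mapsto\max_{a\in\cA}f(\cdot,a)$ are both $1$-Lipschitz (in the sup-norm), it suffices to control how $\psi(s,a)^\top\omega+\Gamma(s,a)$ varies with the parameters, uniformly over $(s,a)\in\cS\times\cA$. First I would reparametrize by $A=\Lambda^{-1}$; since $\Lambda\succeq\lambda I$ we have $0\prec A\preceq\lambda^{-1}I$, hence $\|A\|_{\mathrm F}\le\sqrt d\,\|A\|_{\mathrm{op}}\le\sqrt d/\lambda$. By the matrix determinant lemma, $\log\det(\Lambda+\psi\psi^\top)-\log\det(\Lambda)=\log(1+\psi^\top\Lambda^{-1}\psi)=\log(1+\psi^\top A\psi)$, so $\Gamma(s,a)=\sqrt2\,\beta\bigl(\log(1+\psi(s,a)^\top A\,\psi(s,a))\bigr)^{1/2}$ depends only on $A$ (with $\beta\le B$ fixed).

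Next I would prove a Hölder estimate in the parameters. Given two admissible pairs $(\omega_1,A_1)$ and $(\omega_2,A_2)$ with induced functions $V_1,V_2$, the contraction property yields $d(V_1,V_2)\le\sup_{(s,a)}|\psi(s,a)^\top(\omega_1-\omega_2)|+\sup_{(s,a)}|\Gamma_1(s,a)-\Gamma_2(s,a)|$. The first supremum is at most $\|\omega_1-\omega_2\|_2$ by Cauchy--Schwarz and $\|\psi(s,a)\|_2\le1$. For the second, combine $|\sqrt x-\sqrt y|\le\sqrt{|x-y|}$, $|\log(1+x)-\log(1+y)|\le|x-y|$ for $x,y\ge0$, and $|\psi^\top(A_1-A_2)\psi|\le\|A_1-A_2\|_{\mathrm F}$ (again using $\|\psi\|_2\le1$) to obtain $\sup_{(s,a)}|\Gamma_1-\Gamma_2|\le\sqrt2\,B\,\|A_1-A_2\|_{\mathrm F}^{1/2}$. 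Therefore $d(V_1,V_2)\le\|\omega_1-\omega_2\|_2+\sqrt2\,B\,\|A_1-A_2\|_{\mathrm F}^{1/2}$.

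Finally I would assemble the $\epsilon$-net as a product. Take an $(\epsilon/2)$-net $\mathcal{C}_\omega$ of $\{\omega:\|\omega\|_2\le L\}\subset\RR^d$ and an $(\epsilon^2/(8B^2))$-net $\mathcal{C}_A$, in Frobenius distance, of $\{A:\|A\|_{\mathrm F}\le\sqrt d/\lambda\}\subset\RR^{d\times d}\cong\RR^{d^2}$; the Hölder bound shows that for any admissible $(\omega,A)$ there is a member of $\mathcal{C}_\omega\times\mathcal{C}_A$ whose induced function is within $\epsilon/2+\sqrt2 B\,(\epsilon^2/(8B^2))^{1/2}=\epsilon$ of $V_{\omega,A}$, so $\mathcal{C}_\omega\times\mathcal{C}_A$ induces an $\epsilon$-net of $\cV$. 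The standard volume bound for a Euclidean ball of radius $R$ in $\RR^m$ gives a $\delta$-net of size at most $(1+2R/\delta)^m$, hence $|\mathcal{C}_\omega|\le(1+4L/\epsilon)^d$ and $|\mathcal{C}_A|\le(1+16B^2\sqrt d/(\lambda\epsilon^2))^{d^2}$; taking logarithms and adding gives the claimed bound. The only delicate point is the square root in $\Gamma$, which makes $\Gamma$ merely Hölder-$1/2$ in $A$ and so forces the finer resolution $\epsilon^2/(8B^2)$ on the matrix component (producing the $d^2\log(1+16B^2d^{1/2}/(\epsilon^2\lambda))$ term); the rest is routine.
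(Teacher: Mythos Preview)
Your proposal is correct and follows essentially the same argument as the paper: both use the $1$-Lipschitz property of $\min\{\cdot,H-h\}$ and $\max_{a}$, rewrite $\Gamma$ via the matrix determinant lemma, apply the H\"older-$1/2$ estimate $|\sqrt x-\sqrt y|\le\sqrt{|x-y|}$ together with the Lipschitz bound on $\log(1+\cdot)$, and conclude with a product of Euclidean-ball nets on $\omega$ and on $\Lambda^{-1}$. The only cosmetic difference is that the paper absorbs the factor $2\beta^2$ into the matrix parameter (covering $\{2\beta^2\Lambda^{-1}\}$ at scale $\epsilon^2/4$) whereas you cover $\{\Lambda^{-1}\}$ at scale $\epsilon^2/(8B^2)$; the resulting bounds are identical.
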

\begin{proof}
The proof technique is similar to that of Lemma D.6 by \cite{jin2019provably}. Let $V_1$ and $V_2$ be the functions defined in \eqref{eq::covering_V}, which are parameterized by $(\omega_1, \Lambda_1)$ and $(\omega_2, \Lambda_2)$, respectively. Note that
\#\label{eq::covering_1}
d(V_1, V_2) &\leq \sup_{  s \in\cS } \bigl| \min\bigl\{\max_{a\in\cA}\psi(s, a)^\top\omega_1 + \Gamma_1(s, a), H-h\bigr\}  \notag\\
&\qquad\qquad\qquad-  \min\bigl\{\max_{a\in\cA}\psi(s, a)^\top\omega_2 + \Gamma_2(s, a), H-h\bigr\}\bigr|\notag\\
&\leq \sup_{(s, a)\in\cS\times\cA}|\psi(s, a)^\top(\omega_1-\omega_2) + \Gamma_1(s, a) - \Gamma_2(s, a)|,
\#
where the second inequality follows from the fact that $\min\{\cdot, H-h\} $ and $\max_{a\in\cA}$ are contraction mappings. Here we define $\Gamma_1$ and $\Gamma_2$ in \eqref{eq::covering_gamma} with $\Lambda=\Lambda_1$ and $\Lambda=\Lambda_2$, respectively. Meanwhile, following from the matrix determinant lemma, we have
\$
\Gamma_1(s, a) &= \sqrt{2}\beta\cdot\Bigl(\log \det\bigl(\Lambda_1 + \psi(s, a)\psi(s, a)^\top\bigr) - \log\det(\Lambda_1)\Bigr)^{1/2}\notag\\
& = \sqrt{2}\beta\cdot\Bigl(\log\bigl(1 + \psi(s, a)^\top\Lambda_1^{-1}\psi(s,a)\bigr)\Bigr)^{1/2}, \quad \forall (s, a)\in\cS\times\cA.
\$
Thus, following from the inequalities $|\sqrt{x} - \sqrt{y}|\leq \sqrt{|x-y|}$ and $|\log(1+x) - \log(1+y)| \leq |x - y|$ for all $x, y\geq0$, we have
\#\label{eq::covering_2}
|\Gamma_1(s, a) - \Gamma_2(s, a)| &\leq \sqrt{2}\beta\cdot\Bigl(\bigl|\log\bigl(1 + \psi(s, a)^\top\Lambda_1^{-1}\psi(s,a)\bigr) - \log\bigl(1 + \psi(s, a)^\top\Lambda_2^{-1}\psi(s,a)\bigr)\bigr|\Bigr)^{1/2}\notag\\
&\leq\sqrt{2}\beta\cdot\Bigl(|\psi(s, a)^\top(\Lambda_1^{-1} - \Lambda_2^{-1})\psi(s,a)|\Bigr)^{1/2}.
\#
Combining \eqref{eq::covering_1} and \eqref{eq::covering_2}, we have
\#\label{eq::covering_3}
d(V_1, V_2)&\leq\sup_{(s, a)\in\cS\times\cA}|\psi(s, a)^\top(\omega_1-\omega_2) + \Gamma_1(s, a) - \Gamma_2(s, a)| \notag\\
&\leq \sup_{\|\psi\|_2\leq 1} |\psi^\top(\omega_1-\omega_2)| + \sqrt{2}\beta\cdot\sup_{\|\psi\|_2\leq 1}\bigl(|\psi^\top(\Lambda_1^{-1} - \Lambda_2^{-1})\psi|\bigr)^{1/2}\notag\\
& =\|\omega_1 - \omega_2\|_2 + \|2\beta^2\cdot\Lambda_1^{-1} - 2\beta^2\cdot\Lambda_2^{-1}\|^{1/2}_{\textrm{OP}}\notag\\
&\leq \|\omega_1 - \omega_2\|_2 + \|2\beta^2\cdot\Lambda_1^{-1} - 2\beta^2\cdot\Lambda_2^{-1}\|^{1/2}_{\textrm{F}},
\#
where we denote by $\|\cdot\|_{\textrm{OP}}$ and $\|\cdot\|_{\textrm{F}}$ the operator norm and Frobenius norm, respectively. For $\Lambda \succeq \lambda I$ and $\beta\in[0, B]$, it holds that $\|2\beta^2\cdot\Lambda^{-1}\|_{\textrm{F}} \leq 2B^2d^{1/2}\lambda^{-1}$. Meanwhile, let $\cN_{\omega, \epsilon}$ be the $\epsilon/2$-covering number of $\{\omega\in\RR^d: \|\omega\|_2\leq L\}$, and $\cN_{A, \epsilon}$ be the $\epsilon^2/4$-covering number of $\{A\in\RR^{d\times d}: \|A\|_{\textrm{F}}\leq 2B^2d^{1/2}\lambda^{-1}\}$. It is known that \citep{vershynin2010introduction}
\$
\cN_{\omega, \epsilon} \leq (1 + 4L/\epsilon)^{d}, \qquad \cN_{A, \epsilon} \leq \bigl(1 + 16B^2d^{1/2}/(\lambda \epsilon^2)\bigr)^{d^2}.
\$
Hence, by \eqref{eq::covering_3}, we obtain that
\$
\log \cN_\epsilon &\leq \log(\cN_{\omega, \epsilon}\cdot \cN_{A, \epsilon})\leq d\cdot \log(1 + 4L/\epsilon) + d^2 \cdot \log\bigl(1 + 16B^2d^{1/2}/(\epsilon^2\lambda)\bigr),
\$
which concludes the proof of Lemma \ref{lem::covering}.
\end{proof}

\begin{lemma}[Covering Number of $Q$ \citep{jin2019provably}]
\label{lem::covering_Q}
Let $\cQ$ be a class of functions $Q$ satisfying
\#\label{eq::covering_Q}
Q(\cdot, \cdot) = \min\bigl\{\psi(\cdot,\cdot)^\top\omega + \Gamma(\cdot, \cdot), H-h\bigr\},
\#
where 
\#\label{eq::covering_gamma_Q}
\Gamma(\cdot, \cdot) = \sqrt{2}\beta\cdot\Bigl(\log \det\bigl(\Lambda + \psi(\cdot, \cdot)\psi(\cdot, \cdot)^\top\bigr) - \log\det(\Lambda)\Bigr)^{1/2}.
\#
Here the function $Q$ is parameterized by $(\omega, \Lambda)$ and the parameter $\beta$ is fixed. Let $\psi(\cdot, \cdot)$ be an $\RR^d$-valued function and $\Lambda\in\RR^{d\times d}$. Let $\|\psi(s, m)\|_2 \leq 1$ for all $(s, m)\in\cS\times\cM$. For $\|\omega\|_2 \leq L$, $\Lambda \succeq \lambda I$, $\beta\in[0, B]$, and $\epsilon >0$, there exist an $\epsilon$-covering of $\cQ$ with respect to the metric $d(V, V') = \sup_{(s, m) \in\cS\times\cM}|Q(s, m ) - Q'(s, m )|$, such that the covering number $\cN_\epsilon$ is upper bounded as follows,
\$
\log \cN_\epsilon \leq  d\cdot \log(1 + 4L/\epsilon) + d^2 \cdot \log\bigl(1 + 16B^2d^{1/2}/(\epsilon^2\lambda)\bigr).
\$
\end{lemma}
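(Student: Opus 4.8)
The plan is to observe that Lemma~\ref{lem::covering_Q} is essentially Lemma~\ref{lem::covering} with two cosmetic changes: the pointwise maximum $\max_{a\in\cA}$ is absent from the definition~\eqref{eq::covering_Q} of $Q$, and the supremum metric is now over $\cS\times\cM$ rather than $\cS$. Neither change affects the argument --- dropping $\max_{a\in\cA}$ only removes one contraction step, and enlarging the ground set from $\cS$ to $\cS\times\cM$ is immaterial since the bound will be through a $\sup$ over $\|\psi\|_2\le1$. So I would run the proof of Lemma~\ref{lem::covering} verbatim with these substitutions.

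First I would fix two functions $Q_1,Q_2\in\cQ$ parameterized by $(\omega_1,\Lambda_1)$ and $(\omega_2,\Lambda_2)$, and use that $\min\{\cdot,H-h\}$ is a contraction on $\RR$ to get
\[
d(Q_1,Q_2)\le \sup_{(s,m)\in\cS\times\cM}\bigl|\psi(s,m)^\top(\omega_1-\omega_2)+\Gamma_1(s,m)-\Gamma_2(s,m)\bigr|,
\]
where $\Gamma_i$ is~\eqref{eq::covering_gamma_Q} with $\Lambda=\Lambda_i$. Next, by the matrix determinant lemma $\Gamma_i(s,m)=\sqrt2\beta\,(\log(1+\psi(s,m)^\top\Lambda_i^{-1}\psi(s,m)))^{1/2}$, and the elementary inequalities $|\sqrt x-\sqrt y|\le\sqrt{|x-y|}$ and $|\log(1+x)-\log(1+y)|\le|x-y|$ for $x,y\ge0$ give $|\Gamma_1(s,m)-\Gamma_2(s,m)|\le\sqrt2\beta\,(|\psi(s,m)^\top(\Lambda_1^{-1}-\Lambda_2^{-1})\psi(s,m)|)^{1/2}$. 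Combining these with $\|\psi(s,m)\|_2\le1$ yields, exactly as in~\eqref{eq::covering_3},
\[
d(Q_1,Q_2)\le\|\omega_1-\omega_2\|_2+\bigl\|2\beta^2\Lambda_1^{-1}-2\beta^2\Lambda_2^{-1}\bigr\|_{\textrm{F}}^{1/2}.
\]

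Finally I would build the cover of the finite-dimensional parameter space: for $\|\omega\|_2\le L$, $\Lambda\succeq\lambda I$, $\beta\in[0,B]$ we have $\|2\beta^2\Lambda^{-1}\|_{\textrm{F}}\le 2B^2d^{1/2}/\lambda$, so taking an $\epsilon/2$-cover of $\{\omega:\|\omega\|_2\le L\}$ (cardinality $\le(1+4L/\epsilon)^d$) and an $\epsilon^2/4$-cover of $\{A\in\RR^{d\times d}:\|A\|_{\textrm{F}}\le 2B^2d^{1/2}/\lambda\}$ (cardinality $\le(1+16B^2d^{1/2}/(\lambda\epsilon^2))^{d^2}$), their product is an $\epsilon$-cover of $\cQ$ by the last display; taking logarithms gives the stated bound. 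There is no real obstacle here --- the only step with any content is the covering-number count for $\omega$ and for the matrix $2\beta^2\Lambda^{-1}$, which is the standard volumetric estimate already used in Lemma~\ref{lem::covering} --- so I would simply present the argument as a short adaptation of that proof, emphasizing that removing $\max_{a\in\cA}$ preserves the contraction bound and that replacing $\cS$ by $\cS\times\cM$ changes nothing.
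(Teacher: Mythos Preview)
Your proposal is correct and follows exactly the paper's own approach: the paper's proof of Lemma~\ref{lem::covering_Q} simply notes that $\min\{\cdot,H-h\}$ is a contraction, derives the same bound $d(Q_1,Q_2)\le \sup_{(s,m)}|\psi(s,m)^\top(\omega_1-\omega_2)+\Gamma_1(s,m)-\Gamma_2(s,m)|$, and then explicitly defers the remainder to the proof of Lemma~\ref{lem::covering}. Your write-up is in fact more complete than the paper's, which omits the rest of the argument entirely.
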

\begin{proof}
The proof is similar to that of Lemma \ref{lem::covering}. Let $Q_1$ and $Q_2$ be the functions defined in \eqref{eq::covering_Q}, which are parameterized by $(\omega_1, \Lambda_1)$ and $(\omega_2, \Lambda_2)$, respectively. Note that
\#\label{eq::covering_1}
d(Q_1, Q_2) &\leq \sup_{ \min\bigl\{ (s, m) \in\cS\times\cM } \bigl| \psi(s, m)^\top\omega_1 + \Gamma_1(s, m), H-h\bigr\}  \notag\\
&\qquad\qquad\qquad-  \min\bigl\{\psi(s, m)^\top\omega_2 + \Gamma_2(s,m), H-h\bigr\}\bigr|\notag\\
&\leq \sup_{(s, m)\in\cS\times\cM}|\psi(s, m)^\top(\omega_1-\omega_2) + \Gamma_1(s, m) - \Gamma_2(s, m)|,
\#
where the second inequality follows from the fact that $\min\{\cdot, H-h\} $ is a contraction mapping. Here we define $\Gamma_1$ and $\Gamma_2$ in \eqref{eq::covering_gamma_Q} with $\Lambda=\Lambda_1$ and $\Lambda=\Lambda_2$, respectively. The rest of the proof is the same as that of Lemma \ref{lem::covering}. We omit the proof and refer to the proof of Lemma \ref{lem::covering} for the details.
\end{proof}

\begin{lemma}[Concentration of Self-Normalized Process]
\label{lem::self_norm_process}
Let $\lambda = 1$ and $\beta = CdH\sqrt{\log(d(T + nH)/\zeta)}$. Let $\zeta >0$ be an absolute constant. It holds with probability at least $1 - 2\zeta$ that
\$
 \biggl\|\sum^4_{\ell = 1}S_{\ell, h}\biggr\|_{(\Lambda^k_h)^{-1}}\leq C'dH\sqrt{\log\bigl(2(C+1)d(T+nH)/\zeta\bigr)}, \quad \forall (k, h)\in[K]\times[H].
\$
where $C$ and $C'$ are positive absolute constants and $C'$ is independent of $C$.
\end{lemma}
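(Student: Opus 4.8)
The plan is to bound $\|\sum_{\ell=1}^4 S_{\ell,h}\|_{(\Lambda^k_h)^{-1}}$ by the triangle inequality for the $(\Lambda^k_h)^{-1}$-seminorm, $\|\sum_{\ell=1}^4 S_{\ell,h}\|_{(\Lambda^k_h)^{-1}}\le\sum_{\ell=1}^4\|S_{\ell,h}\|_{(\Lambda^k_h)^{-1}}$, and to control the four self-normalized sums individually. The first step is to replace $\Lambda^k_h$ by the two summands that match the features appearing in each $S_{\ell,h}$: by the definition of $\Lambda^k_h$ in \eqref{eq::backdoor_var} we have $\Lambda^k_h\succeq\bar\Lambda^k_h:=\sum_{\tau=1}^{k-1}\psi_h(s^\tau_h,a^\tau_h)\psi_h(s^\tau_h,a^\tau_h)^\top+\lambda I$ and $\Lambda^k_h\succeq\tilde\Lambda_h:=\sum_{i=1}^n\phi_h(s^i_h,a^i_h,u^i_h)\phi_h(s^i_h,a^i_h,u^i_h)^\top+\lambda I$, hence $(\Lambda^k_h)^{-1}\preceq(\bar\Lambda^k_h)^{-1}$ and $(\Lambda^k_h)^{-1}\preceq(\tilde\Lambda_h)^{-1}$. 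It therefore suffices to bound $\|S_{1,h}\|_{(\bar\Lambda^k_h)^{-1}}$ and $\|S_{3,h}\|_{(\bar\Lambda^k_h)^{-1}}$, which involve the online features $\psi_h$, together with $\|S_{2,h}\|_{(\tilde\Lambda_h)^{-1}}$ and $\|S_{4,h}\|_{(\tilde\Lambda_h)^{-1}}$, which involve the offline features $\phi_h$.

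For the reward terms $S_{3,h}$ and $S_{4,h}$ the per-step noises $r^\tau_h-R_h(s^\tau_h,a^\tau_h)$ and $r^i_h-\EE[r_h\given s^i_h,a^i_h,u^i_h]$ are conditionally mean-zero (by \eqref{eq::backdoor_UCB_R} and the linear-reward part of Assumption \ref{asu::linMDP_backdoor}) and lie in $[-1,1]$ since $r_h\in[0,1]$, hence are $1$-sub-Gaussian; they are adapted to the filtration of Definition \ref{def::filtration_backdoor} (online part) and to the filtration generated by an arbitrary ordering of the i.i.d.\ offline trajectories (offline part). Lemma \ref{lem::concen_SNP} applied with $\overline\Lambda_0=\lambda I$, together with a union bound over $h\in[H]$ and $k\in[K]$, then gives $\|S_{3,h}\|_{(\bar\Lambda^k_h)^{-1}}+\|S_{4,h}\|_{(\tilde\Lambda_h)^{-1}}\lesssim\bigl(d\log((T+nH)/\lambda)+\log(T/\zeta)\bigr)^{1/2}$, using $\det(\bar\Lambda^k_h)\le(K+\lambda)^d$ and $\det(\tilde\Lambda_h)\le(n+\lambda)^d$; this is dominated by the claimed bound.

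The terms $S_{1,h}$ and $S_{2,h}$ are the crux, since $V^k_{h+1}$ is itself a function of the data entering the inner sums --- the online trajectories for $S_{1,h}$ and the offline samples (via the regularizer $L^k_{h+1}$) for $S_{2,h}$ --- so the summands are not martingale differences and Lemma \ref{lem::concen_SNP} cannot be applied directly. I would resolve this by a uniform concentration argument over a covering net of the value-function class. By Lemma \ref{lem::param}, $\|\omega^k_{h+1}\|_2\le H(d(K+n)/\lambda)^{1/2}=:L$, and with $\beta\le B:=CdH\sqrt{\log(d(T+nH)/\zeta)}$ the function $V^k_{h+1}$ belongs to the class $\cV$ of functions of the form \eqref{eq::covering_V}--\eqref{eq::covering_gamma}; Lemma \ref{lem::covering} then yields an $\epsilon$-net $\cN_\epsilon$ with $\log\cN_\epsilon\lesssim d\log(1+L/\epsilon)+d^2\log(1+B^2d^{1/2}/(\epsilon^2\lambda))$. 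Applying Lemma \ref{lem::Jin_D4} to each fixed $V$ in the net, with $\overline\Lambda_0=\lambda I$ (so that $\overline\Lambda_t=\bar\Lambda^k_h$ for $S_{1,h}$, using the online dynamics $s^\tau_{h+1}\sim\PP(\cdot\given s^\tau_h,\doo(a^\tau_h))$, and $\overline\Lambda_t=\tilde\Lambda_h$ for $S_{2,h}$, using $s^i_{h+1}\sim\cP_h(\cdot\given s^i_h,a^i_h,u^i_h)$) and $\sup_s|V(s)|\le H$, followed by a union bound over $h\in[H]$, $k\in[K]$ and the net, yields with high probability
\[
\|S_{1,h}\|^2_{(\bar\Lambda^k_h)^{-1}}+\|S_{2,h}\|^2_{(\tilde\Lambda_h)^{-1}}\lesssim H^2\bigl(d\log\bigl((T+nH)/\lambda\bigr)+\log(\cN_\epsilon\,T/\zeta)\bigr)+t^2\epsilon^2/\lambda .
\]
Choosing $\epsilon$ polynomially small, e.g.\ $\epsilon=dH/(T+nH)$, makes the discretization term $t^2\epsilon^2/\lambda$ of constant order while only inflating $\log\cN_\epsilon$ by an $O(\log(d(T+nH)/\zeta))$ factor.

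Collecting the four bounds, using $\lambda=1$ and $B^2\propto C^2$, the $\log\cN_\epsilon$ contribution is $\lesssim d^2\log\bigl(2(C+1)d(T+nH)/\zeta\bigr)$, so that, on the intersection of the two high-probability events (total failure probability $2\zeta$ after allocating $\zeta$ to the reward sums and $\zeta$ to the value sums), $\|\sum_{\ell=1}^4 S_{\ell,h}\|_{(\Lambda^k_h)^{-1}}\lesssim dH\sqrt{\log\bigl(2(C+1)d(T+nH)/\zeta\bigr)}$, which is the asserted inequality with an absolute constant $C'$ independent of $C$, since the dependence on $C$ has been pushed into the logarithm. The main obstacle is exactly this handling of the statistical dependence between $V^k_{h+1}$ and the states $\{s^\tau_{h+1}\}_{\tau<k}$, $\{s^i_{h+1}\}_{i\le n}$: one must check that $V^k_{h+1}$ genuinely lies in the parametric class of Lemma \ref{lem::covering} with the stated bounds on $\|\omega^k_{h+1}\|_2$ and $\beta$, and track how the net resolution $\epsilon$ and the bound $B$ propagate into the final logarithmic factor.
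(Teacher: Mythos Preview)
Your proposal is correct and uses the same core machinery as the paper --- self-normalized concentration (Lemma \ref{lem::concen_SNP}) for the reward sums $S_{3,h},S_{4,h}$ and a uniform-over-the-class argument via Lemmas \ref{lem::Jin_D4}, \ref{lem::param}, \ref{lem::covering} for the value sums $S_{1,h},S_{2,h}$, followed by the choice $\epsilon\asymp dH/(n+k)$ and the determinant bound $\det(\Lambda^k_h)\le(k+n+\lambda)^d$. The one organizational difference is the decomposition: you split the four terms by the triangle inequality and then pass from $(\Lambda^k_h)^{-1}$ to the smaller Gram matrices $(\bar\Lambda^k_h)^{-1}$ and $(\tilde\Lambda_h)^{-1}$ via matrix monotonicity, handling the online and offline parts separately. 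The paper instead concatenates the offline samples and the online transitions into a single filtration (offline indices $-n+1,\ldots,0$ preceding the online filtration of Definition \ref{def::filtration_backdoor}), so that $S_{1,h}+S_{2,h}$ is one self-normalized process whose Gram matrix is exactly $\Lambda^k_h$; Lemma \ref{lem::Jin_D4} then applies directly without any splitting. The paper's route is marginally cleaner --- no matrix-monotonicity step and no need to introduce $\bar\Lambda^k_h,\tilde\Lambda_h$ --- while yours avoids having to spell out the joint filtration and is arguably more transparent about why the offline and online noises can be controlled. Both yield the same bound up to absolute constants, and your observation that the dependence on $C$ enters only through $B^2\propto C^2$ inside a logarithm, so that $C'$ is independent of $C$, matches the paper's conclusion.
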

\begin{proof}
Recall that we define
\$
S_{1, h} &=  \sum^{k-1}_{\tau = 1} \psi_h(s^\tau_h, a^\tau_h) \cdot \bigl(V^k_{h+1}(s^\tau_{h+1}) - (\PP_h V^k_{h+1})(s^\tau_h, a^\tau_h)  \bigr),\\
S_{2, h} &= \sum^n_{i = 1} \phi_h(s_h^i, a_h^i, u_h^i) \cdot \bigl(V^k_{h+1}(s^i_{h+1})-(\tilde\PP_h V^k_{h+1})(s_h^i, a_h^i, u_h^i) \bigr),\notag\\
S_{3, h} &=  \sum^{k-1}_{\tau = 1} \psi_h(s^\tau_h, a^\tau_h) \cdot \bigl( r^\tau_h -  R(s^\tau_h, a^\tau_h) \bigr),\quad S_{4, h} =\sum^n_{i = 1} \phi_h(s_h^i, a_h^i, u_h^i) \cdot \bigl( r^i_h -  \EE[r_h\given s^i_h, a^i_h, u^i_h] \bigr).\notag
\$
We define $\cF_{-n + i}$ the $\sigma$-algebra generated by the set $\{(s^\ell_h, a^\ell_h, u^\ell_h, r^\ell_h)\}_{(\ell, h)\in[i]\times[H]}$ with timestep index $-n+i$. The set of $\sigma$-algebra $\{\cF_{-n + i}\}_{i\in[n]}$ captures the data generation process in the offline setting. We attach $\{\cF_{-n + i}\}_{i\in[n]}$ to the $\sigma$-algebra $\{\cF_{k, h, m}\}_{(k, h, m)\in[K, H, 2]}$ with timestep index $t$ defined in Definition \ref{def::filtration_backdoor} to obtain the complete filtration. By Lemma \ref{lem::concen_SNP} with such a complete filtration, it holds with probability at least $1 - \zeta$ that
\#\label{eq::self_norm_eq1}
&\|S_{1, h} + S_{2, h}\|_{(\Lambda^k_h)^{-1} }\notag\\
&\qquad\leq 4H^2\cdot \Bigl(d/2\cdot\log \bigl(\det(\Lambda^k_h) / \det(\Lambda_0)\bigr) + \log(2\cN_\epsilon/\zeta) \Bigr) + 8(n+k)^2\epsilon^2/\lambda,
\#
where $\Lambda_0 = \lambda I$ and 
\$
&\Lambda^k_h = \sum^{k-1}_{\tau = 1} \psi_h(s^\tau_h, a^\tau_h) \psi_h(s^\tau_h, a^\tau_h)^\top+ \sum^n_{i = 1} \phi_h(s_h^i, a_h^i, u_h^i)\phi_h(s_h^i, a_h^i, u_h^i)^\top + \lambda   I.
\$
Similarly, by Lemma \ref{lem::concen_SNP}, it holds with probability at least $1 - \zeta$ that 
\#\label{eq::self_norm_eq2}
&\|S_{3, h} + S_{4, h}\|_{(\Lambda^k_h)^{-1} }  \leq 4H^2\cdot \Bigl(d/2\cdot\log \bigl(\det(\Lambda^k_h) / \det(\Lambda_0)\bigr) \Bigr).
\#
Note that
\$
\Lambda^k_h &=\sum^{k-1}_{\tau = 1} \psi_h(s^\tau_h, a^\tau_h) \psi_h(s^\tau_h, a^\tau_h)^\top+\sum^n_{i = 1} \phi_h(s_h^i, a_h^i, u_h^i)\phi_h(s_h^i, a_h^i, u_h^i)^\top + \lambda I \notag\\
&\preceq (k + n + \lambda)  I.
\$
Meanwhile, recall that $\Lambda_0 = \lambda I$. Thus, we obtain that
\#\label{eq::self_norm_eq3}
\det(\Lambda^k_h) / \det(\Lambda_0) \leq (k + n + \lambda)/\lambda.
\#
On the other hand, we obtain from Lemma \ref{lem::param} and Lemma \ref{lem::covering} that
\#\label{eq::self_norm_eq4}
\log \cN_\epsilon \leq d\cdot\bigl(1 + 4H\sqrt{d(n+k)}/(\epsilon\sqrt{\lambda})\bigr) + d^2\cdot \log\bigl(1 + 16 \beta^2\sqrt{d}/(\epsilon^2\lambda)\bigr),
\#
where we set $\beta = CdH\sqrt{\log(d(T+nH)/\zeta)}$. Finally, by setting $\epsilon = dH/(n+k)$ in \eqref{eq::self_norm_eq1}, plugging \eqref{eq::self_norm_eq3} and \eqref{eq::self_norm_eq4} into \eqref{eq::self_norm_eq1} and \eqref{eq::self_norm_eq2}, respectively, and setting $\lambda = 1$, we obtain that
\$
\biggl\|\sum^4_{\ell = 1}S_{\ell, h}\biggr\|_{(\Lambda^k_h)^{-1}}&\leq\|S_{1, h} + S_{2, h}\|_{(\Lambda^k_h)^{-1} } + \|S_{3, h} + S_{4, h}\|_{(\Lambda^k_h)^{-1} }\notag\\
&\leq C'dH\sqrt{\log\bigl(2(C+1)d(T + nH)/\zeta\bigr)},
\$
which holds with probability at least $1 - 2\zeta$. Here $T = HK$ and $C$, $C'$ are absolute constants, where $C'$ is independent of $C$. Thus, we complete the proof of Lemma \ref{lem::self_norm_process}.
\end{proof}

\begin{lemma}
\label{lem::Gamma}
Let $\Lambda_t \in \RR^{d\times d}$ be a positive definite matrix satisfying $\Lambda_t \succeq I$. Let $\psi_t(\cdot, \cdot)$ be a $\RR^d$-valued function such that $\|\psi_t(\cdot, \cdot)\|_2 \leq 1$. Let $\Lambda_{t+1}(\cdot, \cdot) = \Lambda_t + \psi_t(\cdot, \cdot)\psi_t(\cdot, \cdot)^\top$. It then holds that
\$
\psi_t(\cdot,\cdot)^\top (\Lambda_t)^{-1} \psi_t(\cdot, \cdot) \leq 2\log \det\bigl(\Lambda_{t+1}(\cdot, \cdot)\bigr) - 2\log\det(\Lambda_t).
\$
\end{lemma}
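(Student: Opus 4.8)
The plan is to reduce this matrix inequality to an elementary scalar one via the matrix determinant lemma. Write $\psi = \psi_t(\cdot,\cdot)$ and $\Lambda = \Lambda_t$ for brevity. First I would invoke the identity $\det(\Lambda + \psi\psi^\top) = \det(\Lambda)\cdot\bigl(1 + \psi^\top\Lambda^{-1}\psi\bigr)$, which is valid since $\Lambda$ is positive definite (so $\Lambda^{-1}$ exists). Taking logarithms yields
\#
\log\det\bigl(\Lambda_{t+1}(\cdot,\cdot)\bigr) - \log\det(\Lambda_t) = \log\bigl(1 + \psi^\top\Lambda^{-1}\psi\bigr).
\#
Hence the claimed bound is equivalent to $\psi^\top\Lambda^{-1}\psi \le 2\log\bigl(1 + \psi^\top\Lambda^{-1}\psi\bigr)$.

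Next I would set $z = \psi^\top\Lambda^{-1}\psi$, which is nonnegative because $\Lambda^{-1}$ is positive definite. The hypothesis $\Lambda_t \succeq I$ gives $\Lambda^{-1} \preceq I$, so $z = \psi^\top\Lambda^{-1}\psi \le \|\psi\|_2^2 \le 1$ using $\|\psi_t(\cdot,\cdot)\|_2 \le 1$. It therefore suffices to establish the elementary inequality $z \le 2\log(1+z)$ for all $z \in [0,1]$. For this, consider $f(z) = 2\log(1+z) - z$; then $f(0) = 0$ and $f'(z) = 2/(1+z) - 1 = (1-z)/(1+z) \ge 0$ on $[0,1]$, so $f$ is nondecreasing on this interval and thus $f(z) \ge f(0) = 0$ there. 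Chaining the three observations — the determinant identity, the bound $z \le 1$, and $z \le 2\log(1+z)$ on $[0,1]$ — completes the proof.

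There is no genuine obstacle in this argument; it is a short computation. The only point requiring a modicum of care is confirming that $z$ actually lies in the interval $[0,1]$, since the scalar inequality $z \le 2\log(1+z)$ fails for large $z$. This is precisely where the two hypotheses $\Lambda_t \succeq I$ and $\|\psi_t(\cdot,\cdot)\|_2 \le 1$ are used, so the statement of the lemma should keep both assumptions explicit.
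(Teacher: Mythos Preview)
Your proof is correct and follows essentially the same approach as the paper's: the matrix determinant lemma to reduce to a scalar statement, the bound $z\in[0,1]$ from $\Lambda_t\succeq I$ and $\|\psi_t\|_2\le 1$, and the elementary inequality $z\le 2\log(1+z)$ on $[0,1]$. The only difference is that you supply a derivative argument for the scalar inequality, which the paper simply asserts.
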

\begin{proof}
Note that $\Lambda_t \succeq I$. Thus, it holds that
\$
0 \leq \psi_t(\cdot,\cdot)^\top (\Lambda_t)^{-1} \psi_t(\cdot, \cdot) \leq \|\psi_t(\cdot, \cdot)\|^2_2 \leq 1.
\$
It then follows from the inequality $x\leq 2\log(1 + x)$ for all $x\in[0, 1]$ that
\#\label{eq::gamma_eq1}
\psi_t(\cdot,\cdot)^\top (\Lambda_t)^{-1} \psi_t(\cdot, \cdot) \leq 2\log \bigl( 1 + \psi_t(\cdot,\cdot)^\top (\Lambda_t)^{-1} \psi_t(\cdot, \cdot)\bigr).
\#
Meanwhile, it follows from the matrix determinant lemma that
\#\label{eq::gamma_eq2}
\det\bigl(\Lambda_{t+1}(\cdot, \cdot)\bigr) = \det(\Lambda_t)\cdot\bigl( 1 + \psi_t(\cdot,\cdot)^\top (\Lambda_t)^{-1} \psi_t(\cdot, \cdot)\bigr).
\#
Finally, combining \eqref{eq::gamma_eq1} and \eqref{eq::gamma_eq2}, we conclude that
\$
\psi_t(\cdot,\cdot)^\top (\Lambda_t)^{-1} \psi_t(\cdot, \cdot) \leq 2\log \det\bigl(\Lambda_{t+1}(\cdot, \cdot)\bigr) - 2\log\det(\Lambda_t),
\$
which concludes the proof of Lemma \ref{lem::Gamma}.
\end{proof}

\end{document}